\theoremstyle{definition}
\newtheorem{dfn}{Definition}[section]
\newtheorem{pro}{Proposition}[section]
\theoremstyle{plain}
\newcommand{\SKIP}[1]{}
\newcommand{\calV}{\mathcal{V}}
\newcommand{\calL}{\mathcal{L}}
\newcommand{\calO}{\mathcal{O}}
\newcommand{\bfx}{\mathbf{x}}
\newcommand{\bff}{\mathbf{f}}
\newcommand{\bfy}{\mathbf{y}}
\newcommand{\bfz}{\mathbf{z}}
\newcommand{\tE}{\tilde{E}}
\newcommand{\I}{\mathbbm{1}}
\newcommand{\calX}{\mathcal{X}}
\newcommand{\bfalpha}{\boldsymbol{\alpha}}
\newcommand{\bfbeta}{\boldsymbol{\beta}}
\newcommand{\bfgamma}{\boldsymbol{\gamma}}
\newcommand{\bfphi}{\boldsymbol{\phi}}
\newcommand{\bfone}{\boldsymbol{1}}
\newcommand{\bfzero}{\boldsymbol{0}}
\newcommand{\R}{\rm I\!R}
\newcommand{\bfs}{\mathbf{s}}
\newcommand{\hbfs}{\hat{\bfs}}
\newcommand{\hs}{\hat{s}}
\newcommand{\calP}{\mathcal{P}}
\newcommand{\barv}{\bar{v}}
\newcommand{\barV}{\bar{V}}
\newcommand{\allpixels}{\{1\ldots n\}}
\newcommand{\barN}{\bar{N}}
\newcommand{\calM}{\mathcal{M}}
\newcommand{\calC}{\mathcal{C}}
\newcommand{\tbfalpha}{\tilde{\bfalpha}}
\newcommand{\bfh}{\mathbf{h}}
\newcommand{\tbfy}{\tilde{\bfy}}
\newcommand{\ty}{\tilde{y}}
\newcommand{\bfp}{\mathbf{p}}
\newcommand{\bfbI}{\mathbf{I}}
\newcommand{\barcalV}{\bar{\calV}}
\newcommand{\calY}{\mathcal{Y}}
\def\myref#1{{\color{red}{#1}}}%
\newcommand{\LineForAll}[2]{%
    \State\algorithmicforall\ {#1}\ \algorithmicdo\ {#2}
}
\newcommand{\mybox}[2]{#2}
\begin{document}

\title{Efficient Linear Programming for Dense CRFs}

\author[1]{Thalaiyasingam Ajanthan}
\author[2]{Alban Desmaison}
\author[2]{Rudy Bunel}
\author[3]{Mathieu Salzmann}
\author[2]{Philip H.S. Torr}
\author[2,4]{M. Pawan Kumar}
\affil[1]{Australian National University and Data61, CSIRO}
\affil[2]{Department of Engineering Science, University of Oxford}
\affil[3]{Computer Vision Laboratory, EPFL}
\affil[4]{Alan Turing Institute}

\maketitle


\begin{abstract}
The fully connected conditional random field (CRF) with Gaussian pairwise
potentials has proven popular and effective for multi-class semantic
segmentation.
While the energy of a dense CRF can be minimized accurately using a linear
programming (LP) relaxation, the state-of-the-art algorithm is too slow to be
useful in practice. To alleviate this deficiency,   
we introduce an efficient LP minimization algorithm for dense CRFs.
To this end, we develop a proximal minimization
framework, where the dual of each proximal problem is optimized via block
coordinate descent. We show that each block of variables can be efficiently
optimized.
Specifically, for one block, the problem decomposes into significantly smaller 
subproblems, each of which is defined over a single
pixel.
For the other block, the problem is optimized via conditional gradient
descent. 
This has two advantages: 1) the conditional gradient can be computed in a time
linear in the number of pixels and labels; and 2) the optimal step
size can be computed analytically.
Our experiments on standard datasets provide compelling evidence that our
approach outperforms all existing baselines including the previous LP based 
approach for dense CRFs.
\end{abstract}

\section{Introduction}
In the past few years, the dense conditional random field (CRF) with Gaussian
pairwise potentials has become popular for multi-class image-based semantic
segmentation. At the origin of this popularity lies the use of an efficient
filtering method~\cite{adams2010fast}, which 
was shown to lead to a linear time mean-field inference
strategy~\cite{koltun2011efficient}.
Recently, this filtering method was exploited to minimize the dense CRF
energy using other, typically more effective, continuous relaxation
methods~\cite{desmaison2016efficient}.
Among the relaxations considered in~\cite{desmaison2016efficient}, the linear
programming (LP) relaxation provides strong theoretical guarantees on the
quality of the
solution~\cite{kleinberg2002approximation,kumar2009analysis}.

In~\cite{desmaison2016efficient}, the LP was minimized via projected
subgradient descent. 
While relying on the filtering method,
computing the subgradient was shown to be \textit{linearithmic} in the number of
pixels, but not \textit{linear}. Moreover, even with the use of a line search
strategy, the algorithm required a large number of iterations to converge,
making it inefficient.

We introduce an iterative LP minimization algorithm for a dense
CRF with Gaussian pairwise potentials which has
\textit{linear} time complexity per iteration.
 To this end, instead of relying on a standard
subgradient technique, we propose to make use of the proximal 
method~\cite{parikh2014proximal}. The resulting proximal problem
has a smooth dual, which can be efficiently optimized using block coordinate
descent.
We show that each block of variables can be optimized efficiently.
Specifically, for one block, the problem decomposes into significantly smaller
subproblems, each of which is defined over a single pixel.
For the other block, the problem can be optimized via 
the Frank-Wolfe
algorithm~\cite{frank1956algorithm,lacoste2012block}. We show that the
conditional gradient required by this algorithm can be computed efficiently.
 In particular, we
modify the filtering method of~\cite{adams2010fast} such that the conditional
gradient can be computed in a time \textit{linear} in the number of pixels and
labels. Besides this linear complexity, our approach has two additional benefits.
First, it can be
initialized with the solution of a faster, less accurate algorithm, such as
mean-field~\cite{koltun2011efficient} or the difference of convex (DC)
relaxation of~\cite{desmaison2016efficient}, thus speeding up convergence.
Second,
the optimal step size of our iterative procedure can be obtained analytically,
thus preventing the need to rely on an expensive line search procedure.

We demonstrate the effectiveness of our algorithm on the MSRC and Pascal VOC
2010~\cite{everingham2010pascal} segmentation datasets. The experiments
evidence that our algorithm is significantly faster than the state-of-the-art LP
minimization technique of~\cite{desmaison2016efficient}. Furthermore, it yields
assignments whose energies are much lower than those obtained by other competing
methods~\cite{desmaison2016efficient,koltun2011efficient}. Altogether, our
framework constitutes the first efficient and effective minimization 
algorithm for dense CRFs with Gaussian pairwise potentials.


\section{Preliminaries}
Before introducing our method, let us first provide some background on the dense
CRF model and its LP relaxation.

\paragraph{Dense CRF energy function.}
A dense CRF is defined on a set of $n$ random variables $\calX=\{X_1,\ldots,
X_n\}$, where each random variable $X_a$ takes a label $x_a \in \mathcal{L}$,
with $|\calL| = m$. For a given labelling $\bfx$, the energy associated with a
pairwise dense CRF can be expressed as
\begin{equation}
E(\bfx) = \sum_{a=1}^n \phi_a(x_a) + \sum_{a=1}^n\sum_{\substack{b=1\\b\ne a}}^n
\psi_{ab}(x_a, x_b)\ ,
\label{eqn:e}
\end{equation}
where $\phi_a$ and $\psi_{ab}$ denote the \textit{unary potentials} and
\textit{pairwise potentials}, respectively. The unary potentials define the data
cost and the pairwise potentials the smoothness cost.

\paragraph{Gaussian pairwise potentials.}
Similarly to~\cite{desmaison2016efficient,koltun2011efficient}, we consider 
Gaussian pairwise potentials, which have the following form:
\vspace{-0.2cm}
\begin{align}
\label{eqn:pe}
\psi_{ab}(x_a,x_b) &= \mu(x_a,x_b) \sum_c w^{(c)}\,k\left(\bff_a^{(c)},
\bff_b^{(c)}\right)\ ,\\[-0.1cm]\nonumber
 k(\bff_a, \bff_b) &= \exp\left(\frac{-\|\bff_a -
\bff_b\|^2}{2} \right)\ .
\end{align}
Here, $\mu(x_a,x_b)$ is referred to as the \textit{label compatibility}
function and the mixture of Gaussian kernels as the
\textit{pixel compatibility} function. The weights $w^{(c)}$ define the mixture
coefficients, and $\bff_a^{(c)}\in\R^{d^{(c)}}$ encodes features associated to
the random variable $X_a$, where $d^{(c)}$ is the feature dimension.
For semantic segmentation, each pixel in an image corresponds to a random
variable. In practice, as in~\cite{desmaison2016efficient,koltun2011efficient}, we then use the
position and
RGB values of a pixel as features, and assume the label compatibility
function to be the Potts model, that is, $\mu(x_a,x_b)=\I[x_a\ne x_b]$. These
potentials proved to be useful in obtaining fine grained labellings in
segmentation tasks~\cite{koltun2011efficient}.
 
\paragraph{Integer programming formulation.}
An alternative way of representing a labelling is by defining indicator
variables $y_{a:i}\in \{0,1\}$, where $y_{a:i} = 1$ if and only if $x_a = i$.
Using this notation, the energy minimization problem can be written as the
following Integer Program (IP):
\begin{alignat}{3}
\label{eqn:ip}
&\underset{\bfy}{\operatorname{\min}}\quad  E(\bfy) &&=
\sum_a\sum_i \phi_{a:i}\,y_{a:i} + \sum_{a,b\ne a} \sum_{i,j}
\psi_{ab:ij}\,y_{a:i}\,y_{b:j}\ ,\\\nonumber 
&\text{s.t.}\quad  \sum_i y_{a:i} &&= 1\quad\quad\ \ \  \forall\,a \in
\allpixels\ ,\\\nonumber 
& \hskip0.08\linewidth y_{a:i} &&\in \{0,1\}\quad \forall\,a \in
\allpixels,\quad \forall\,i\in\calL\ .
\end{alignat}
Here, we use the shorthand $\phi_{a:i}=\phi_{a}(i)$ and
$\psi_{ab:ij}=\psi_{ab}(i,j)$. The first set of constraints ensure that each
random variable is assigned exactly one label. Note that the value of objective
function is equal to the energy of the
labelling encoded by $\bfy$. 

\paragraph{Linear programming relaxation.}
By relaxing the binary constraints of the indicator variables in~(\ref{eqn:ip})
and using the fact that the label compatibility function is the Potts
model, the linear programming relaxation~\cite{kleinberg2002approximation}
of~(\ref{eqn:ip}) is defined as 
\begin{alignat}{3}
\label{eqn:lp}
&\underset{\bfy}{\operatorname{\min}}\quad \tE(\bfy) &&= \sum_a\sum_i
\phi_{a:i}\,y_{a:i} + \sum_{a,b\ne a} \sum_i K_{ab}\frac{|y_{a:i}-y_{b:i}|}{2}\
,\\\nonumber
&\text{s.t.}\quad\bfy\in\calM &&= \left\{ \begin{array}{l|l}
\multirow{2}{*}{$\bfy$} & \sum_i y_{a:i} = 1,\, a \in \allpixels\\ 
 & y_{a:i} \ge 0,\, a \in \allpixels,\,  i\in\calL \end{array}
 \right\}\ ,
\end{alignat}
where $K_{ab} = \sum_c w^{(c)}\,k\left(\bff_a^{(c)},\bff_b^{(c)}\right)$. 
For integer labellings, the LP objective $\tE(\bfy)$ has the same value as
the IP objective $E(\bfy)$. 
It is also worth noting that this LP relaxation is known to provide the best
theoretical bounds~\cite{kleinberg2002approximation}.
Using standard solvers to minimize this LP would require the introduction of 
$\calO(n^2)$ variables, making it intractable. Therefore the
non-smooth objective of Eq.~(\ref{eqn:lp}) has to be optimized directly. This
was handled using projected subgradient descent
in~\cite{desmaison2016efficient}, which also turns out to be inefficient in
practice. 
In this paper, we introduce an efficient
algorithm to tackle this problem while maintaining \textit{linear} scaling in
both space and time complexity.



\section{Proximal minimization for LP relaxation}
Our goal is to design an efficient minimization strategy for the LP relaxation
in~\eqref{eqn:lp}. To this end, we propose to use the proximal minimization
algorithm~\cite{parikh2014proximal}. This guarantees monotonic decrease in the
objective value, enabling us to leverage faster, less accurate methods for
initialization. Furthermore, the additional quadratic regularization
term makes the dual problem smooth, enabling the use of more sophisticated
optimization methods.
In the remainder of this paper, we detail
this approach and show that each iteration has linear time complexity. In
practice, our algorithm converges in a small number of iterations, thereby
making the overall approach computationally efficient. 

The proximal minimization algorithm~\cite{parikh2014proximal} is an iterative
method that, given the current estimate of the solution $\bfy^k$, solves the
problem
\begin{alignat}{3}
\label{eqn:proxlp0}
&\underset{\bfy}{\operatorname{\min}}\quad &&\tE(\bfy)
+ \frac{1}{2\lambda}\left\|\bfy - \bfy^k\right\|^2\ ,\\\nonumber
&\text{s.t.}\quad &&\bfy \in\calM\ ,
\end{alignat}
where $\lambda$ sets the strength of the proximal term.

Note that~\eqref{eqn:proxlp0} consists of piecewise linear terms and a
quadratic regularization term.
Specifically, the piecewise linear term comes from
the pairwise term $|y_{a:i} - y_{b:i}|$ in~\eqref{eqn:lp} that can
be reformulated as $\max\{y_{a:i} - y_{b:i}, y_{b:i} - y_{a:i}\}$.
 The proximal term $\|\bfy - \bfy^k\|^2$
provides the quadratic regularization.
In this section, we introduce a new
algorithm that is tailored to this problem. In 
particular, we optimally solve the Lagrange dual of~\eqref{eqn:proxlp0} in a
block-wise fashion.

\begin{algorithm*}[t]
\caption{Proximal minimization of LP}
\label{alg:proxlp}
\begin{algorithmic}
\Require Initial solution $\bfy^0\in\calM$ and the dual objective $g$

\For{$k \gets 0\ldots K$}

\State $A\bfalpha^0 \gets \bfzero,\quad \bfbeta^0 \gets \bfzero,\quad
\bfgamma^0 \gets \bfzero$
\Comment{Feasible initialization}

\For{$t\gets 0\ldots T$}

\State $\left(\bfbeta^t, \bfgamma^t\right) \gets
\underset{\bfbeta,\bfgamma}
{\operatorname{argmin}}\,g\left(\bfalpha^t,\bfbeta, \bfgamma\right)$
\Comment{Sec.~\ref{sec:gamma}}

\State $\tbfy^t \gets \lambda\left(
A\bfalpha^t+B\bfbeta^t + \bfgamma^t-\bfphi\right)+\bfy^k$
\Comment{Current primal solution, may be infeasible}


%

\State $A\bfs^t \gets $ conditional gradient of $g$, computed using $\tbfy^t$
\Comment{Sec.~\ref{sec:cgrad}}

\State $\delta \gets $ optimal step size given $\left(\bfs^t, \bfalpha^t,
\tbfy^t\right)$
\Comment{Sec.~\ref{sec:step}}

\State $A\bfalpha^{t+1} \gets (1-\delta)A\bfalpha^t + \delta A\bfs^t$
\Comment{Frank-Wolfe update on $\bfalpha$}

\EndFor

\State $\bfy^{k+1} \gets P_{\calM}\left(\tbfy^t\right)$
\Comment{Project the primal solution to the feasible set $\calM$}

\EndFor

\end{algorithmic}
\end{algorithm*}

\subsection{Dual formulation}
Let us first write the proximal problem~\eqref{eqn:proxlp0} in the standard
form by introducing auxiliary variables $z_{ab:i}$.
\begin{subequations}
\begin{alignat}{3}
\label{eqn:p}
&\underset{\bfy,\bfz}{\operatorname{min}}\quad \sum_a&&\sum_i
\phi_{a:i}\,y_{a:i} + \sum_{a,b\ne a} \sum_i \frac{K_{ab}}{2}z_{ab:i} +
\frac{1}{2\lambda}\|\bfy - \bfy^k\|^2 \ ,\\
\label{eqn:proxlpc1}
&\text{s.t.}\quad  z_{ab:i} &&\ge y_{a:i} - y_{b:i}\quad \forall\,a\ne b\quad
\forall\,i \in \calL\ ,\\
\label{eqn:proxlpc2}
&\hskip0.044\linewidth z_{ab:i} &&\ge y_{b:i} - y_{a:i}\quad \forall\,a\ne
b\quad \forall\,i \in \calL\ ,\\
\label{eqn:proxlpc3}
&\hskip0.015\linewidth\sum_i y_{a:i} &&= 1\quad \forall\,a \in \allpixels\
,\\
\label{eqn:proxlpc4}
&\hskip0.05\linewidth y_{a:i} &&\ge 0\quad \forall\,a \in \allpixels\quad
\forall\,i\in\calL\ .
\end{alignat}
\end{subequations}
We introduce three blocks of dual variables. Namely,
$\bfalpha = \{\alpha^1_{ab:i}, \alpha^2_{ab:i}\mid a\ne b, i \in \calL\}$ for
the constraints in Eqs.~(\ref{eqn:proxlpc1}) and~(\ref{eqn:proxlpc2}), $\bfbeta =
\{\beta_{a}\mid a \in \allpixels\}$ for Eq.~(\ref{eqn:proxlpc3}) and $\bfgamma =
\{\gamma_{a:i}\mid a \in \allpixels, i \in \calL\}$ for Eq.~(\ref{eqn:proxlpc4}),
respectively. The vector $\bfalpha$ has $p=2n(n-1)m$ elements.
Here, we introduce two matrices that will be
useful to write the dual problem compactly. 

\begin{dfn}
Let $A \in \R^{nm\times p}$ and $B\in\R^{nm\times n}$ be two matrices such that
\begin{align}
\left(A\bfalpha\right)_{a:i} &= -\sum_{b\ne a}\left(\alpha^1_{ab:i} -
\alpha^2_{ab:i} + \alpha^2_{ba:i} - \alpha^1_{ba:i}\right)\ ,\\\nonumber
\left(B\bfbeta\right)_{a:i} &= \beta_a\ .
\end{align}
\end{dfn}

We can now state our first proposition.

\begin{pro} 
\label{pro:dual}
Given matrices $A \in\R^{nm\times p}$ and $B\in\R^{nm\times n}$ and dual
variables $(\bfalpha,\bfbeta,\bfgamma)$.
\begin{enumerate}
  \item The Lagrange dual of~\eqref{eqn:p} takes the following form:
	\begin{alignat}{3}
	\label{eqn:dual}
	&\underset{\bfalpha,\bfbeta,\bfgamma}{\operatorname{min}}\ g(\bfalpha,
	\bfbeta, \bfgamma) &&= \frac{\lambda}{2}\|A\bfalpha +
	B\bfbeta+\bfgamma-\bfphi\|^2 + \left\langle A\bfalpha +
	B\bfbeta+\bfgamma-\bfphi, \bfy^k \right\rangle - \langle \bfone, \bfbeta
	\rangle\ ,\\\nonumber &\text{s.t.}\hskip0.08\linewidth \gamma_{a:i} &&\ge
	0\quad\forall\,a\in\allpixels\quad \forall\,i \in \calL\ , \\\nonumber
	&\hskip0.08\linewidth\bfalpha \in \calC &&= \left\{ \begin{array}{l|l}
	\multirow{2}{*}{$\bfalpha$} & \alpha^1_{ab:i} + \alpha^2_{ab:i} =
	\frac{K_{ab}}{2},\, \forall\,a\ne b,\, \forall\,i \in \calL\\
 	& \alpha^1_{ab:i}, \alpha^2_{ab:i} \ge 0,\,\forall\,a\ne
 	b,\, \forall\,i \in \calL \end{array} \right\}\ .
	\end{alignat}
  \item The primal variables $\bfy$ satisfy
	\begin{equation}
	\label{eqn:cp}
	\bfy = \lambda \left(A\bfalpha+B\bfbeta+\bfgamma-\bfphi\right)+\bfy^k\ .
	\end{equation}
\end{enumerate}
\end{pro}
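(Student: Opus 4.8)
The plan is to derive both claims by explicitly forming the Lagrangian of the constrained quadratic program~\eqref{eqn:p}, eliminating the primal variables $\bfy$ and $\bfz$ by inner minimization, and identifying the result with the stated dual. First I would attach multipliers to each constraint: $\alpha^1_{ab:i},\alpha^2_{ab:i}\ge 0$ to the two families of inequalities~\eqref{eqn:proxlpc1}--\eqref{eqn:proxlpc2}, a free multiplier $\beta_a$ to each equality~\eqref{eqn:proxlpc3}, and $\gamma_{a:i}\ge 0$ to the nonnegativity constraints~\eqref{eqn:proxlpc4}. Writing the Lagrangian $L(\bfy,\bfz,\bfalpha,\bfbeta,\bfgamma)$ and forming the dual function $g=\inf_{\bfy,\bfz}L$, the computation splits cleanly into a minimization over $\bfz$ followed by one over $\bfy$.

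The minimization over $\bfz$ is immediate: $L$ is affine in each $z_{ab:i}$ with coefficient $\tfrac{K_{ab}}{2}-\alpha^1_{ab:i}-\alpha^2_{ab:i}$, so the infimum is $-\infty$ unless this coefficient vanishes. Enforcing finiteness therefore forces $\alpha^1_{ab:i}+\alpha^2_{ab:i}=\tfrac{K_{ab}}{2}$, which together with $\alpha^1,\alpha^2\ge 0$ is exactly the feasible set $\calC$ in~\eqref{eqn:dual}. The minimization over $\bfy$ is where the matrices $A$ and $B$ enter. I would collect the total coefficient of each $y_{a:i}$ in $L$ and verify, using the definition of $A$, that it equals $\phi_{a:i}-(A\bfalpha)_{a:i}-(B\bfbeta)_{a:i}-\gamma_{a:i}$, that is, $-(A\bfalpha+B\bfbeta+\bfgamma-\bfphi)_{a:i}$ in vector form. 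Since the remaining objective is the strictly convex quadratic $\tfrac{1}{2\lambda}\|\bfy-\bfy^k\|^2$ plus this linear term, setting its gradient to zero gives the unique minimizer $\bfy=\bfy^k+\lambda(A\bfalpha+B\bfbeta+\bfgamma-\bfphi)$, which is precisely~\eqref{eqn:cp} and establishes the second claim.

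To obtain the first claim I would substitute this optimal $\bfy$ back into $L$. Completing the square collapses the quadratic and the linear-in-$\bfy$ pieces into $-\tfrac{\lambda}{2}\|A\bfalpha+B\bfbeta+\bfgamma-\bfphi\|^2-\langle A\bfalpha+B\bfbeta+\bfgamma-\bfphi,\bfy^k\rangle$, while the constant contributed by the equality term gives $+\langle\bfone,\bfbeta\rangle$. Since the Lagrange dual maximizes $g$ over the multipliers, negating to express it as a minimization flips all three signs and reproduces the objective stated in~\eqref{eqn:dual}, under the constraints $\bfalpha\in\calC$ and $\bfgamma\ge 0$. Finally I would invoke convexity of~\eqref{eqn:p} together with Slater's condition—a strictly feasible point such as $y_{a:i}=1/m$ with any $z_{ab:i}>0$ exists—to conclude that strong duality holds, so the primal recovery formula~\eqref{eqn:cp} is valid at the optimum.

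I expect the only real obstacle to be the bookkeeping in the $\bfy$-minimization: each $y_{a:i}$ receives contributions from four groups of multipliers ($\alpha^1$ and $\alpha^2$, each appearing for both the $(a,b)$ and the $(b,a)$ orderings), and one must check that these combine into exactly the antisymmetric expression defining $(A\bfalpha)_{a:i}$. Once this coefficient is correctly identified, the completion of the square and the sign flip from maximization to minimization are routine.
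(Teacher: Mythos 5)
Your proposal is correct and follows essentially the same route as the paper's proof in Appendix~\ref{app:dual}: form the Lagrangian, force the coefficient of each $z_{ab:i}$ to vanish to obtain the constraint set $\calC$, set the gradient in $\bfy$ to zero to obtain Eq.~\eqref{eqn:cp}, and substitute back (with the sign flip from maximization to minimization) to obtain Eq.~\eqref{eqn:dual}. The coefficient bookkeeping you flag as the main obstacle works out exactly as you describe, matching the paper's use of the definition of $A$; your added appeal to Slater's condition is a harmless extra the paper leaves implicit.
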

\begin{proof}
In Appendix~\ref{app:dual}.
\end{proof}


\subsection{Algorithm}
The dual problem~\eqref{eqn:dual}, in its standard form, can only
be tackled using projected gradient descent. However, by separating the
variables based on the type of the feasible domains, we propose an efficient
block coordinate descent approach. Each of these blocks are amenable to more
sophisticated optimization, resulting in a computationally efficient algorithm. 
As the dual problem is strictly convex and smooth, the optimal solution is still
guaranteed. For $\bfbeta$ and $\bfgamma$, the problem decomposes
over the pixels, as shown in~\ref{sec:gamma}, therefore making it efficient. 
The minimization with respect to $\bfalpha$ is over a compact domain,
which can be efficiently tackled using the Frank-Wolfe
algorithm~\cite{frank1956algorithm,lacoste2012block}.
 Our complete algorithm is summarized in
Algorithm~\ref{alg:proxlp}.
In the following sections, we discuss each step in more detail.

\subsubsection{Optimizing over $\bfbeta$ and $\bfgamma$}\label{sec:gamma}
We first turn to the problem of optimizing over $\bfbeta$
and $\bfgamma$ while $\bfalpha^t$ is fixed. 
Since the dual variable $\bfbeta$ is
unconstrained, the minimum value of the dual objective $g$ is attained when
$\nabla_{\bfbeta}g(\bfalpha^t, \bfbeta,\bfgamma)=0$. 

\begin{pro}
If $\nabla_{\bfbeta}g(\bfalpha^t, \bfbeta,\bfgamma)=0$, then 
$\bfbeta$ satisfy
\begin{equation}
\bfbeta = B^T\left(A\bfalpha^t + \bfgamma - \bfphi\right)/m\ .
\label{eqn:beta}
\end{equation}
\end{pro}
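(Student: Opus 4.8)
The plan is to exploit that the block $\bfbeta$ is unconstrained, so that its optimum for fixed $\bfalpha^t$ is completely characterised by the stationarity condition $\nabla_{\bfbeta}g(\bfalpha^t,\bfbeta,\bfgamma)=0$. First I would introduce the shorthand $\bfh = A\bfalpha^t + B\bfbeta + \bfgamma - \bfphi$, so that $g = \frac{\lambda}{2}\|\bfh\|^2 + \langle \bfh, \bfy^k\rangle - \langle\bfone,\bfbeta\rangle$. Since $\bfh$ depends on $\bfbeta$ only through the linear map $B$, the chain rule immediately gives $\nabla_{\bfbeta}g = \lambda B^{T}\bfh + B^{T}\bfy^k - \bfone$.

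The key step is then to simplify the two constant terms. From the definition of $B$ one reads off that $(B^{T}\bfv)_a = \sum_i v_{a:i}$ for any vector $\bfv$ indexed by $(a{:}i)$; applied to $\bfy^k$ this is exactly the per-pixel label sum, which equals $1$ precisely because $\bfy^k\in\calM$. Hence $B^{T}\bfy^k = \bfone$, and this cancels the $-\bfone$ arising from the linear term, collapsing the stationarity condition to $\lambda B^{T}\bfh = 0$, i.e. $B^{T}\bfh = 0$ since $\lambda>0$. Substituting back $\bfh = A\bfalpha^t + B\bfbeta + \bfgamma - \bfphi$ and noting that each column of $B$ carries exactly $m$ unit entries, so that $B^{T}B = mI$, turns the condition into a diagonal linear system in $\bfbeta$, whose solution is the closed form \eqref{eqn:beta}.

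I do not anticipate a genuine obstacle here: once the gradient is written down the derivation is routine. The only point that truly requires care is the cancellation $B^{T}\bfy^k = \bfone$, which hinges crucially on the feasibility $\bfy^k\in\calM$; without it the constant $-\bfone$ would survive and the closed form would acquire extra terms. The accompanying structural benefit is that $B^{T}B = mI$ is diagonal, so no matrix inversion is needed and the resulting update decouples across pixels, which is exactly what makes this block of the coordinate descent cheap to evaluate.
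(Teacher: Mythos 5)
Your proof follows exactly the same route as the paper's own: differentiate $g$ with respect to the unconstrained block $\bfbeta$, cancel the constant terms via $B^{T}\bfy^{k}=\bfone$ (which indeed hinges on $\bfy^{k}\in\calM$), and invert the decoupled system via $B^{T}B=mI$. These are precisely the two identities the paper's proof cites from its proposition on the structure of $B$, so methodologically there is nothing to distinguish the two arguments.

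However, your final assertion does not follow from the algebra you set up. The stationarity condition $B^{T}\bfh=0$ with $\bfh=A\bfalpha^{t}+B\bfbeta+\bfgamma-\bfphi$ and $B^{T}B=mI$ reads
\begin{equation*}
m\bfbeta + B^{T}(A\bfalpha^{t}+\bfgamma-\bfphi)=\bfzero\ ,
\qquad\text{hence}\qquad
\bfbeta=-\frac{1}{m}B^{T}(A\bfalpha^{t}+\bfgamma-\bfphi)\ ,
\end{equation*}
which is the \emph{negative} of the displayed closed form; the equation in the statement carries a sign typo. The rest of the paper corroborates the minus-sign version: substituting it into $\bfh$ gives $\bfh=(I-BB^{T}/m)(A\bfalpha^{t}+\bfgamma-\bfphi)=D(A\bfalpha^{t}+\bfgamma-\bfphi)$ and turns $-\langle\bfone,\bfbeta\rangle$ into $+\frac{1}{m}\langle\bfone,A\bfalpha^{t}+\bfgamma-\bfphi\rangle$, which are exactly the two terms that appear in the reduced $\bfgamma$-subproblem. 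So your derivation is sound up to $B^{T}\bfh=0$, but the claim that this system solves to the printed formula is off by a sign; you should either write the solution as $\bfbeta=B^{T}(\bfphi-A\bfalpha^{t}-\bfgamma)/m$ or explicitly flag the discrepancy with the stated equation.
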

\begin{proof}
More details on the simplification is given in
Appendix~\ref{app:gamma}.
\end{proof}

%
Note that, now, $\bfbeta$ is a function of $\bfgamma$.
We therefore substitute $\bfbeta$ in~\eqref{eqn:dual} and minimize over
$\bfgamma$. Interestingly, the resulting problem can be optimized independently
for each pixel, with each subproblem being an $m$ dimensional quadratic program (QP)
 with nonnegativity constraints, where $m$ is the number of labels.
 
\begin{pro}
The optimization over $\bfgamma$ decomposes over pixels where for a pixel $a$,
this QP has the form
\begin{alignat}{3}
\label{eqn:qpgammas}
&\underset{\bfgamma_a}{\operatorname{\min}}\quad 
&&\frac{1}{2}\bfgamma^T_aQ\bfgamma_a + \left\langle \bfgamma_a, 
Q\left((A\bfalpha^t)_a-\bfphi_a\right) + \bfy^k_a \right\rangle \
,\\\nonumber 
&\text{s.t.}\quad &&\bfgamma_a\ge \bfzero\ .
\end{alignat}
Here, $\bfgamma_a$ denotes the vector $\{\gamma_{a:i}\mid
i\in\calL\}$ and $Q= \lambda\left(I-\bfone/m\right)\in\R^{m\times m}$, with $I$
the identity matrix and $\bfone$ the matrix of all ones.
\end{pro}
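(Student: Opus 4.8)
The plan is to start from the dual objective $g(\bfalpha^t,\bfbeta,\bfgamma)$ of Proposition~\ref{pro:dual} with the block $\bfalpha^t$ held fixed, eliminate $\bfbeta$ via its closed form from the previous proposition, and then read off the per-pixel structure in $\bfgamma$. For notation I set $\mathbf{v}_a = (A\bfalpha^t)_a + \bfgamma_a - \bfphi_a$ and $\mathbf{c}_a = (A\bfalpha^t)_a - \bfphi_a$, so that $\mathbf{v}_a = \bfgamma_a + \mathbf{c}_a$, and I write $P = I - \bfone/m \in \R^{m\times m}$ for the orthogonal projection onto the complement of the all-ones direction, which satisfies $P = P^T$ and $P^2 = P$; observe that $Q = \lambda P$.

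First I would substitute $\bfbeta$ into the shifted vector $A\bfalpha^t + B\bfbeta + \bfgamma - \bfphi$ occurring in $g$. The key is the block structure of $B$: since $(B\bfbeta)_{a:i} = \beta_a$ is independent of the label $i$, the operator $BB^T$ acts, at each pixel, as the $m\times m$ all-ones matrix, and eliminating $\bfbeta$ amounts to removing the per-pixel mean over labels. Hence, for every pixel $a$, the bracket reduces to the centered vector $P\mathbf{v}_a$. Two identities keep this clean: $B^T B = m\,I$, and, because $\bfy^k\in\calM$ obeys $\sum_i y^k_{a:i} = 1$, one has $B^T\bfy^k = \bfone$ — the same cancellation that simplified the $\bfbeta$-gradient in the previous proposition, and which I will reuse below.

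Next I would expand $g$ term by term on the reduced vector. By idempotency of $P$ the quadratic term becomes $\tfrac{\lambda}{2}\sum_a \mathbf{v}_a^T P\mathbf{v}_a$; inserting $\mathbf{v}_a = \bfgamma_a + \mathbf{c}_a$ and dropping the $\bfgamma$-independent piece $\tfrac{\lambda}{2}\mathbf{c}_a^T P\mathbf{c}_a$ leaves $\tfrac12\bfgamma_a^T Q\bfgamma_a + \langle\bfgamma_a, Q\mathbf{c}_a\rangle$ per pixel. For the two linear terms $\langle A\bfalpha^t + B\bfbeta + \bfgamma - \bfphi, \bfy^k\rangle - \langle\bfone,\bfbeta\rangle$, I would use $P\bfy^k_a = \bfy^k_a - \tfrac1m\bfone$ (again from $\sum_i y^k_{a:i}=1$) together with the closed form of $\bfbeta$ to show that the mean-correction introduced by the projection cancels exactly against $-\langle\bfone,\bfbeta\rangle$, leaving $\sum_a\langle\mathbf{v}_a,\bfy^k_a\rangle$, whose $\bfgamma$-dependent part is $\langle\bfgamma_a,\bfy^k_a\rangle$. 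Collecting per pixel gives the objective $\tfrac12\bfgamma_a^T Q\bfgamma_a + \langle\bfgamma_a, Q\mathbf{c}_a + \bfy^k_a\rangle$; since the only constraint, $\gamma_{a:i}\ge 0$, is separable across pixels, the minimization splits into the stated $m$-dimensional QPs.

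The main obstacle is the bookkeeping of the second and third steps: tracking how eliminating $\bfbeta$ converts the shifted vector into the projection $P\mathbf{v}_a$, and how the $-\langle\bfone,\bfbeta\rangle$ term annihilates the mean-correction produced by $\langle\,\cdot\,,\bfy^k\rangle$, both of which rest on the simplex constraint $\sum_i y^k_{a:i}=1$. The single structural fact that makes the cross terms collapse to one coefficient $Q\mathbf{c}_a$ is that $Q$ is $\lambda$ times an orthogonal projection, so $Q^2 = \lambda Q$.
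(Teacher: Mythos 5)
Your proof is correct and follows essentially the same route as the paper's: substitute the stationary value of $\bfbeta$, recognize that this turns the shifted vector $A\bfalpha^t+B\bfbeta+\bfgamma-\bfphi$ into the block-diagonal centering projection $D=I-BB^T/m$ (with per-pixel blocks $P=I-\bfone/m$), use $B^T\bfy^k=\bfone$ and the idempotency $D^TD=D$ to cancel the mean-correction against $-\langle\bfone,\bfbeta\rangle$, drop the $\bfgamma$-independent terms, and let the separability of the nonnegativity constraint split the problem over pixels. The one caveat is that this cancellation requires $\beta_a=-\tfrac1m\sum_i(A\bfalpha^t+\bfgamma-\bfphi)_{a:i}$ (the stationary point genuinely carries this minus sign, which Eq.~\eqref{eqn:beta} as printed omits), and this is what you implicitly use when you describe the elimination of $\bfbeta$ as \emph{removing} the per-pixel mean.
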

\begin{proof}
In Appendix~\ref{app:gamma}.
\end{proof}
 

We use the algorithm
of~\cite{xiao2014multiplicative} to efficiently optimize every such QP. In
our case, due to the structure of the matrix $Q$, the time complexity of an
iteration is linear in the number of labels.
Hence, the overall time complexity of optimizing over $\bfgamma$ is
$\calO(nm)$. Once the optimal $\bfgamma$ is computed for a given $\bfalpha^t$, the
corresponding optimal $\bfbeta$ is given by Eq.~(\ref{eqn:beta}).

\subsubsection{Optimizing over $\bfalpha$}\label{sec:cgrad}
We now turn to the problem of optimizing over $\bfalpha$ given $\bfbeta^t$ and
$\bfgamma^t$.
To this end, we use the Frank-Wolfe algorithm~\cite{frank1956algorithm},
which has the advantage of being projection free.
Furthermore, for our specific problem, we show that the required conditional
gradient can be computed efficiently and the optimal step size can
be obtained analytically.

\paragraph{Conditional gradient computation.}
The conditional gradient with respect to $\bfalpha$ is obtained by solving 
the following linearization problem
\begin{equation}
\bfs = \underset{\hbfs\in\calC} {\operatorname{argmin}}\,
 \left\langle \hbfs,\nabla_{\bfalpha} g(\bfalpha^t, \bfbeta^t, \bfgamma^t)
 \right\rangle\ .
\end{equation}
Here, $\nabla_{\bfalpha} g(\bfalpha^t, \bfbeta^t, \bfgamma^t)$ denotes the
gradient of the dual objective function with respect to $\bfalpha$ evaluated at
$(\bfalpha^t, \bfbeta^t, \bfgamma^t)$. 

\begin{pro}
The conditional gradient $\bfs$ satisfy
\begin{equation}
\label{eqn:cgrad}
\left(A\bfs\right)_{a:i} = -\sum_b \left(K_{ab} \I[\ty^t_{a:i} \ge
\ty^t_{b:i}] - K_{ab} \I[\ty^t_{a:i} \le \ty^t_{b:i}]\right)\ ,
\end{equation}
where $\tbfy^t = \lambda\left(A\bfalpha^t+B\bfbeta^t+\bfgamma^t-\bfphi\right) +
\bfy^k$ using Eq.~\eqref{eqn:cp}.
\end{pro}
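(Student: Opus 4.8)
The plan is to reduce the linearization problem to a collection of trivial one-dimensional linear programs, one per triple $(a,b,i)$, whose minimizers are read off directly from the sign of the gradient. The only genuine work is computing the gradient $\nabla_{\bfalpha} g$ in closed form; everything else is bookkeeping.

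First I would compute the gradient. Since $\bfalpha$ enters $g$ only through $A\bfalpha$, the chain rule gives $\nabla_{\bfalpha} g = A^T\!\left[\lambda(A\bfalpha^t + B\bfbeta^t + \bfgamma^t - \bfphi) + \bfy^k\right] = A^T\tbfy^t$, where the bracketed quantity is exactly $\tbfy^t$ by Eq.~\eqref{eqn:cp}. The crux is then to express $A^T$ explicitly. Writing out $\langle A\bfalpha, \bfv\rangle$ from the definition of $A$, expanding the four terms, and relabelling $a \leftrightarrow b$ in the terms carrying $\alpha_{ba:i}$, I would collect the coefficients of $\alpha^1_{ab:i}$ and $\alpha^2_{ab:i}$ to identify $(A^T\bfv)^1_{ab:i} = v_{b:i} - v_{a:i}$ and $(A^T\bfv)^2_{ab:i} = v_{a:i} - v_{b:i}$. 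Setting $\bfv = \tbfy^t$ yields the two components of $\nabla_{\bfalpha} g$.

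Next I would exploit the product structure of $\calC$. The constraints $\alpha^1_{ab:i} + \alpha^2_{ab:i} = K_{ab}/2$ with $\alpha^1_{ab:i}, \alpha^2_{ab:i} \ge 0$ couple only the two variables sharing a fixed $(a,b,i)$, so the linear objective $\langle \hbfs, \nabla_{\bfalpha} g\rangle$ separates into independent problems, each minimizing $s^1_{ab:i}(\ty^t_{b:i} - \ty^t_{a:i}) + s^2_{ab:i}(\ty^t_{a:i} - \ty^t_{b:i})$ over the segment $s^1 + s^2 = K_{ab}/2$, $s^1, s^2 \ge 0$. A linear function on a segment attains its minimum at an endpoint: when $\ty^t_{a:i} \ge \ty^t_{b:i}$ the coefficient of $s^1$ is nonpositive, so the mass goes to $s^1$, giving $s^1_{ab:i} = K_{ab}/2$, $s^2_{ab:i} = 0$; the reverse holds when $\ty^t_{a:i} \le \ty^t_{b:i}$. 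Hence $s^1_{ab:i} = \tfrac{K_{ab}}{2}\I[\ty^t_{a:i} \ge \ty^t_{b:i}]$ and $s^2_{ab:i} = \tfrac{K_{ab}}{2}\I[\ty^t_{a:i} \le \ty^t_{b:i}]$, with ties resolved arbitrarily, consistent with both indicators.

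Finally I would substitute these minimizers into the definition of $(A\bfs)_{a:i}$. Using the symmetry $K_{ab} = K_{ba}$ and noting that swapping $a$ and $b$ exchanges the two indicators, the forward combination $s^1_{ab:i} - s^2_{ab:i}$ and the backward combination $s^2_{ba:i} - s^1_{ba:i}$ coincide, so they add to $K_{ab}\!\left(\I[\ty^t_{a:i} \ge \ty^t_{b:i}] - \I[\ty^t_{a:i} \le \ty^t_{b:i}]\right)$; summing over $b$ produces the stated formula, with the $b=a$ term vanishing since both indicators then equal one. I expect the index relabelling in the transpose computation to be the main obstacle, since a sign slip or a mishandled $a \leftrightarrow b$ symmetry there propagates through the remainder of the argument.
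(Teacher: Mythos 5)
Your proposal is correct and follows essentially the same route as the paper's proof: compute $\nabla_{\bfalpha}g = A^T\tbfy^t$ with the explicit transpose formula $(A^T\bfv)^1_{ab:i} = v_{b:i}-v_{a:i}$, exploit the separability of $\calC$ into two-variable segments, read off the endpoint minimizers from the sign of the gradient, and recombine using the symmetry of $K$ with the $b=a$ term cancelling. All the steps, including the index relabelling you flag as the delicate point, match the paper's argument.
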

\begin{proof}
In Appendix~\ref{app:cgrad}.
\end{proof}


Note that Eq.~\eqref{eqn:cgrad} has the same form as the LP
subgradient (Eq.~(\myref{20}) in~\cite{desmaison2016efficient}). This is not a
surprising result. In fact, it has been shown that, for certain problems, there
exists a duality relationship between subgradients and conditional
gradients~\cite{bach2015duality}.
To compute this subgradient, the state-of-the-art algorithm proposed
in~\cite{desmaison2016efficient} has a time complexity linearithmic in the number of
pixels.
Unfortunately, since this constitutes a critical step of both our algorithm and that
of~\cite{desmaison2016efficient}, such a linearithmic cost greatly affects their 
efficiency. In Section~\ref{sec:ph}, however, we show that this complexity can
be reduced to linear, thus effectively leading to a speedup of an order of
magnitude in practice.

\paragraph{Optimal step size.}\label{sec:step}
One of the main difficulties of using an iterative algorithm, whether
subgradient or conditional gradient descent, is that its performance depends
critically on the choice of the step size. Here, we can analytically compute the
optimal step size that results in the maximum decrease in
the objective for the given descent direction.

\begin{pro}
The optimal step size $\delta$ satisfy
\begin{equation}
\delta = P_{[0,1]}\left(\frac{\langle A\bfalpha^t - A\bfs^t,
\tbfy^t\rangle}{\lambda\|A\bfalpha^t - A\bfs^t\|^2}\right)\ .
\end{equation}
Here, $P_{[0,1]}$ denotes the projection to the interval $[0,1]$, that is,
clipping the value to lie in $[0,1]$.
\end{pro}
\begin{proof}
In Appendix~\ref{app:step}.
\end{proof}
 

\paragraph{Memory efficiency.}
For a dense CRF, the dual variable $\bfalpha$ requires
$\calO(n^2m)$ storage, which becomes infeasible since $n$ is the number
of pixels in an image. Note, however, that $\bfalpha$ always appears in the 
product $\tbfalpha = A\bfalpha$ in Algorithm~\ref{alg:proxlp}.
Therefore, we only store the variable $\tbfalpha$, which reduces the storage
complexity to $\calO(nm)$. 

\subsubsection{Summary}
To summarize, our method has four desirable qualities of an efficient
iterative algorithm. 
First, it can benefit from an initial
solution obtained by a faster but less accurate algorithm,
such as mean-field or DC relaxation. Second, with our choice of a quadratic proximal
term, the dual of the proximal problem can be efficiently optimized in a block-wise
fashion. Specifically, the dual variables $\bfbeta$ and $\bfgamma$ are computed
efficiently by minimizing one small QP (of dimension the
number of labels) for each pixel independently.
The remaining dual variable $\bfalpha$ is optimized using the
Frank-Wolfe algorithm, where the conditional gradient is computed in linear
time, and the optimal step size is obtained analytically. Overall, the time
complexity of one iteration of our algorithm is $\calO(nm)$.
To the best of our knowledge, this constitutes the first LP minimization
algorithm for dense CRFs that has linear time iterations.
We denote this standard algorithm as PROX-LP.


\section{Fast conditional gradient computation}\label{sec:ph}
The algorithm described in the previous section assumes that the conditional
gradient (Eq.~(\ref{eqn:cgrad})) can be computed efficiently. Note that
Eq.~(\ref{eqn:cgrad}) contains two terms that are similar up to sign and 
order of the label constraint in the indicator function. To simplify the
discussion, let us focus on the first term and on a particular
label $i$, which we will not explicitly write in the remainder of this section.
The second term in Eq.~(\ref{eqn:cgrad}) and the other labels can be handled in the
same manner. With these simplifications, we need to efficiently
compute an expression of the form
\begin{equation}
\forall\,a\in\allpixels, \quad v'_a = \sum_b k(\bff_a, \bff_b)\,\I[y_a \ge
y_b]\ ,
\label{eqn:oph}
\vspace{-0.2cm}
\end{equation}
with $y_a, y_b \in [0,1]$ and $\bff_a, \bff_b\in\R^d$ for all $a,b\in\allpixels$.

The usual way of speeding up computations involving such Gaussian kernels is by
using the efficient filtering method~\cite{adams2010fast}.
This approximate method has proven accurate enough for similar 
applications~\cite{desmaison2016efficient,koltun2011efficient}. 
In our case, due to the ordering constraint \mbox{$\I[y_a\ge
y_b]$}, the symmetry is broken and the direct application of the
filtering method is impossible.
In~\cite{desmaison2016efficient}, the authors tackled this problem
using a divide-and-conquer strategy, which lead to a time complexity of $\calO(d^2n
\log(n))$. In practice, this remains a prohibitively high run time, particularly
since
gradient computations are performed many times over the course of the algorithm. 
Here, we introduce a more efficient method.

Specifically, we show that the term in Eq.~\eqref{eqn:oph} can be computed in $\calO(Hdn)$ time
(where $H$ is a small constant defined in Section~\ref{sec:phnew}), at the cost
of additional storage.
In practice, this leads to a speedup of one order of magnitude. Below, 
we first briefly review the original filtering
algorithm and then explain our modified algorithm that efficiently
handles the ordering constraints.

\subsection{Original filtering method}
In this section, we assume that the reader is familiar with the permutohedral
lattice based filtering method~\cite{adams2010fast} andonly a brief overview is
provided. We refer the interested reader to the original paper~\cite{adams2010fast}.

In~\cite{adams2010fast}, each pixel $a\in\allpixels$ is associated with
a tuple $\left(\bff_a, v_a\right)$, which we call a \textit{feature point}.
The elements of this tuple are the feature $\bff_a\in\R^d$ and the value $v_a\in
\R$. Note that, in our case, $v_a=1$ for all pixels.
At the beginning of the algorithm, the feature points are embedded in
a $d$-dimensional hyperplane tessellated by the \textit{permutohedral lattice}
(see Fig.~\ref{fig:ph}).
The vertices of this permutohedral lattice are called \textit{lattice points}, 
and each lattice point $l$ is associated with a scalar value $\barv_l$.
\begin{figure}
\begin{center}
\includegraphics[width=0.4\linewidth, trim=1.8cm 5.3cm 12.5cm 2.4cm, clip=true,
page=1]{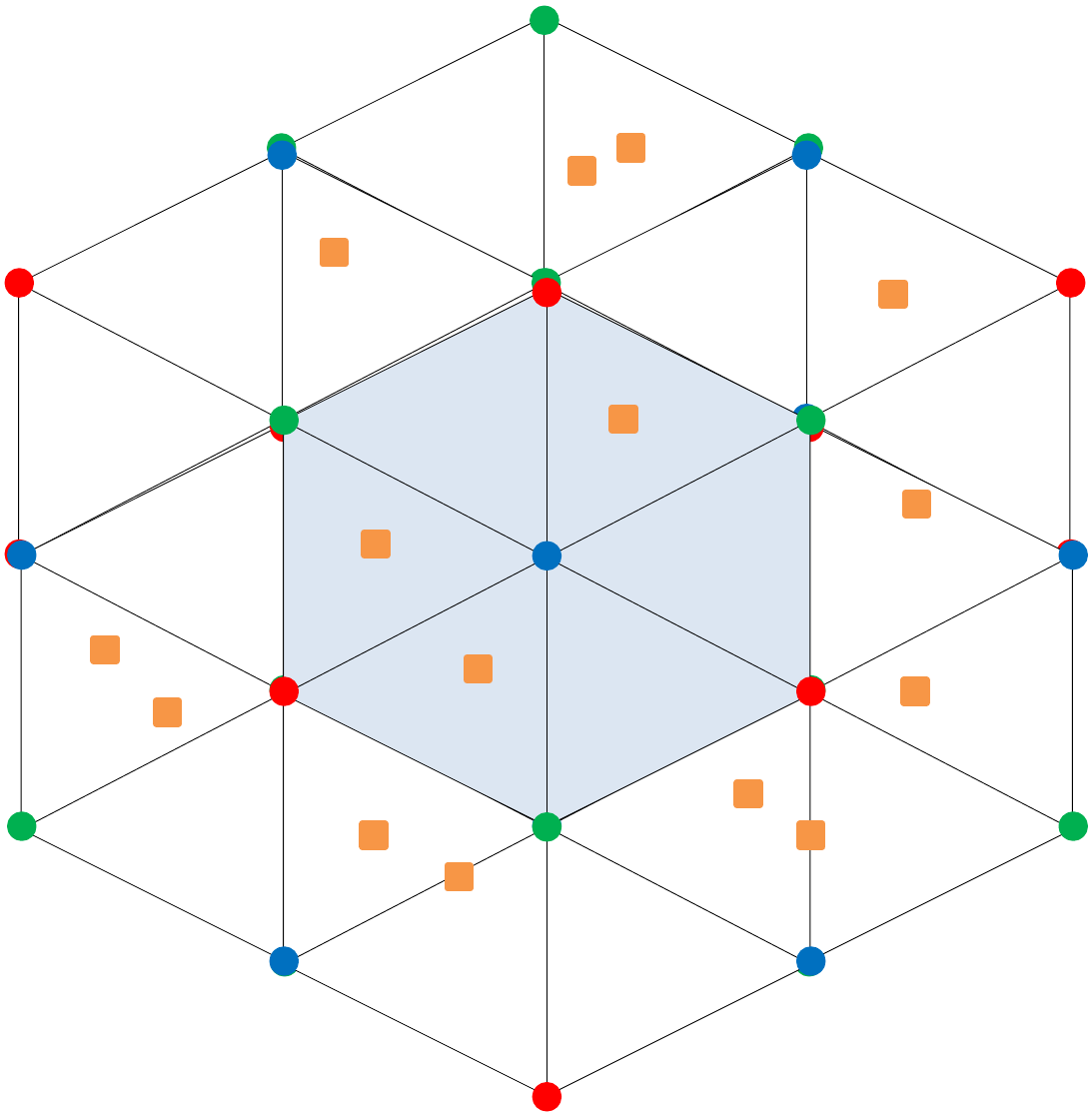}	
\vspace{-0.2cm}
\caption{\em A $2$-dimensional hyperplane tessellated by the permutohedral
lattice. The feature points are denoted with squares and the lattice
points with circles. The neighborhood of the center lattice point is
shaded and, for a feature point, the neighbouring
lattice points are the vertices of the enclosing triangle.}
\label{fig:ph}
\end{center}
\vspace{-0.5cm}
\end{figure}

Once the permutohedral lattice is constructed, the algorithm performs three main
steps:
\textit{splatting}, \textit{blurring} and \textit{slicing}.
During splatting, for each lattice point, 
the values of the neighbouring feature points are accumulated
using barycentric interpolation. Next, during blurring,
the values of the lattice points are convolved with a one dimensional truncated
Gaussian kernel along each feature dimension separately.
Finally, during slicing, the resulting values of the lattice points are
propagated back to the feature points using the same barycentric weights.
These steps are explained graphically in the top row of Fig.~\ref{fig:phalg}.
The pseudocode of the algorithm is given in Appendix~\ref{app:phold}. The time
complexity of this algorithm is
$\calO(dn)$~\cite{adams2010fast,koltun2011efficient}, and the complexity of the
permutohedral lattice creation $\calO(d^2n)$. Since the approach
in~\cite{desmaison2016efficient} creates multiple lattices at every iteration,
the overall complexity of this approach is $\calO(d^2n \log(n))$.

\begin{figure}
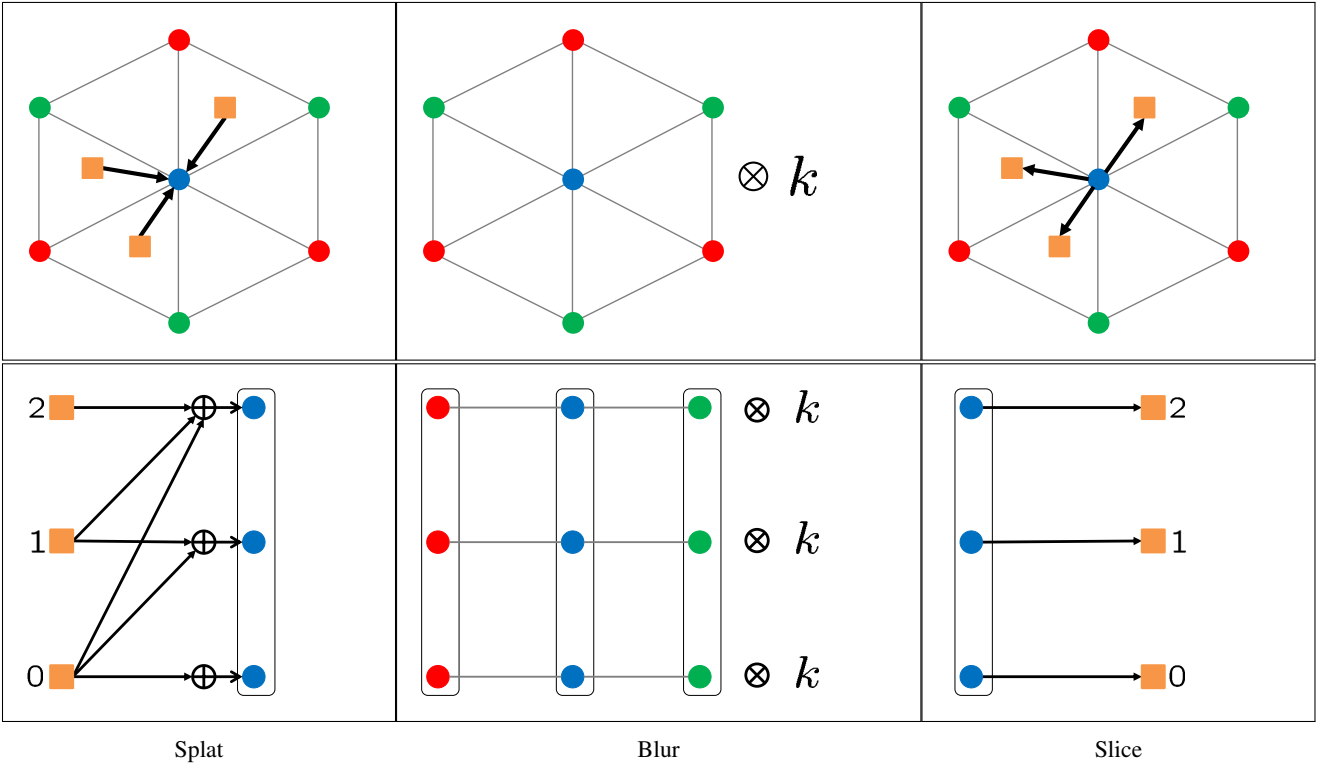

\def\IMHEIGHT{26ex}
\begin{center}
\begin{subfigure}{0.3\linewidth}
\begin{framed}
\includegraphics[height=\IMHEIGHT, trim=4.3cm 5.0cm 9.4cm 2.2cm, clip=true,
page=7]{figures.pdf}	
\end{framed}
\end{subfigure}%
\begin{subfigure}{0.4\linewidth}
\begin{framed}
\includegraphics[height=\IMHEIGHT, trim=4.3cm 5.0cm 5.9cm 2.2cm, clip=true,
 page=19]{figures.pdf}	
\end{framed}
\end{subfigure}%
\begin{subfigure}{0.3\linewidth}
\begin{framed}
\includegraphics[height=\IMHEIGHT, trim=4.3cm 5.0cm 9.4cm 2.2cm, clip=true,
page=21]{figures.pdf}	
\end{framed}
\end{subfigure}
\begin{subfigure}{0.3\linewidth}
\begin{framed}
\includegraphics[height=\IMHEIGHT, trim=4.2cm 6.3cm 14.1cm 4.0cm, clip=true,
page=14]{figures.pdf}	
\end{framed}
\vspace{-0.3cm}
\caption{Splat}
\end{subfigure}%
\begin{subfigure}{0.4\linewidth}
\begin{framed}
\includegraphics[height=\IMHEIGHT, trim=4.0cm 6.3cm 10.1cm 4.0cm, clip=true,
page=23]{figures.pdf} 
\end{framed}
\vspace{-0.3cm}
\caption{Blur}
\end{subfigure}%
\begin{subfigure}{0.3\linewidth}
\begin{framed}
\includegraphics[height=\IMHEIGHT, trim=4.2cm 6.3cm 14.1cm 4.0cm, clip=true,
page=22]{figures.pdf}	
\end{framed}
\vspace{-0.3cm}
\caption{Slice}
\end{subfigure}

\vspace{-0.2cm}
\caption{\em {\bf Top row: }Original filtering method.
The barycentric interpolation is denoted by an arrow and $k$ here is the truncated
Gaussian kernel. 
During splatting, for each lattice point, the values of
the neighbouring feature points are accumulated using
barycentric interpolation. Next, the lattice points are blurred with $k$ along
each dimension. Finally, the values of the lattice points are given back to the
feature points using the same barycentric weights.
{\bf Bottom row: }Our modified filtering method.
Here, $H = 3$, and the figure therefore illustrates 3 lattices.
We write the bin number of each feature point next to the
point. Note that, at the splatting step, the value 
of a feature point is accumulated to its neighbouring lattice points only if it is
above or equal to the feature point level. Then, blurring is performed at each
level independently. Finally, the resulting values are recovered from the
lattice points at the feature point level.}
\label{fig:phalg}
\end{center}
\vspace{-0.5cm}
\end{figure}

Note that, in this original algorithm, there is no notion of score $y_a$
associated with each pixel. In particular, during splatting, the values $v_a$
are accumulated to the neighbouring lattice points without considering their
scores.
Therefore, this algorithm cannot be directly applied to handle our ordering
constraint $\I[y_a\ge y_b]$.

\subsection{Modified filtering method} \label{sec:phnew}
We now introduce a filtering-based algorithm that can handle ordering
constraints. To this end, we uniformly discretize the continuous 
interval $[0,1]$ into $H$ different discrete bins, or levels. Note that each
pixel, or feature point, belongs to exactly one of these bins, according to its
corresponding score. We then propose to instantiate $H$ permutohedral
lattices, one for each level $h\in\{0\ldots H-1\}$. In other words, at each
level $h$, there is a lattice point $l$, whose value we denote by $\barv_{l:h}$.
To handle the ordering constraints, we then modify the splatting step in the
following manner. A feature point belonging to bin $q$ is splat to the
permutohedral lattices corresponding to levels $q \leq h < H$. Blurring is
then performed independently in each individual permutohedral lattice. This
guarantees that a feature point will only influence the values of the feature
points that belong to the same level or higher ones. In other words, a feature
point $b$ influences the value of a feature point $a$ only if $y_a \ge y_b$. 
Finally, during the slicing step, the value of a feature point belonging to
level $q$ is recovered from the $q^{\rm th}$ permutohedral lattice. Our
algorithm is depicted graphically in the bottom row of Fig.~\ref{fig:phalg}. Its
pseudocode is provided in Appendix~\ref{app:phnew}. Note that, while discussed
for constraints of the form $\I[y_a\ge y_b]$, this algorithm can easily be
adapted to handle $\I[y_a\le y_b]$ constraints, which are required for the
second term in Eq.~\eqref{eqn:cgrad}.

Overall, our modified filtering method has a time complexity of
$\calO(Hdn)$ and a space complexity of $\calO(Hdn)$. 
Note that the complexity of the lattice creation is still $\calO(d^2n)$ and
can be reused for each of the $H$ instances. Moreover, as opposed to the method
in~\cite{desmaison2016efficient}, this operation is performed only once, during
the initialization step.
In practice, we were able to choose $H$ as small as 10, thus
achieving a substantial speedup compared to the divide-and-conquer strategy of~\cite{desmaison2016efficient}.
By discretizing the interval $[0,1]$, we add another level
of approximation to the overall algorithm. However, this approximation can be
eliminated by using a dynamic data structure, which we briefly explain in
Appendix~\ref{app:dynph}.


\section{Related work}
We review the past work on three different aspects of our work in order to
highlight our contributions.

\paragraph{Dense CRF.}
The fully-connected CRF 
 has become increasingly popular for semantic segmentation. It
is particularly effective at preventing oversmoothing, thus providing
better accuracy at the boundaries of objects. As a matter of fact, in a
complementary direction, many methods have now proposed to combine dense CRFs
with convolutional neural
networks~\cite{chen2014semantic,schwing2015fully,zheng2015conditional} to
achieve state-of-the-art performance on segmentation benchmarks.

The main challenge that had previously prevented the use of dense CRFs
is their computational cost at inference, which, naively, is $\calO(n^2)$ per
iteration.
In the case of Gaussian pairwise potential, the efficient filtering method
of~\cite{adams2010fast} proved to be key to the tractability of inference in
the dense CRF.
While an approximate method, the accuracy of the computation proved sufficient
for practical purposes.
This was first observed in~\cite{koltun2011efficient} for the
specific case of mean-field inference. More recently, several continuous
relaxations, such as QP, DC and LP, were also shown to be
applicable to minimizing the dense CRF energy by exploiting this
filtering procedure in various ways~\cite{desmaison2016efficient}.
Unfortunately, while tractable, minimizing the LP relaxation, which is known 
to provide the best approximation to the original 
labelling problem, remained too slow in practice~\cite{desmaison2016efficient}.
Our algorithm is faster both theoretically and empirically. 
Furthermore, and as evidenced by
our experiments, it yields lower energy values than any existing dense CRF
inference strategy.

\paragraph{LP relaxation.} 
There are two ways to relax the integer program
\eqref{eqn:ip} to a linear program, depending on the label compatibility
function: 1) the standard LP relaxation~\cite{chekuri2004linear}; and 2)
the LP relaxation specialized to the Potts model~\cite{kleinberg2002approximation}.
There are many notable works on
minimizing the standard LP relaxation on sparse CRFs. This includes the
algorithms that directly make use the dual of this
LP~\cite{kolmogorov2006convergent,komodakis2011mrf,wainwright2005map}
and those based on a proximal minimization framework~\cite{meshi2015smooth,ravikumar2008message}.
Unfortunately, all of the above algorithms exploit the sparsity of the problem,
and they would yield an $\calO(n^2)$ cost per iteration in the fully-connected
case.
In this work, we focus on the Potts model based LP relaxation for dense
CRFs and provide an algorithm whose iterations have time complexity $\calO(n)$.
Even though we focus on the Potts
model, as pointed out in~\cite{desmaison2016efficient}, this LP relaxation can be extended
to general label compatibility functions using a hierarchical Potts
model~\cite{kumar2009map}.

\paragraph{Frank-Wolfe.}
The optimization problem of structural support vector machines (SVM) has a form
similar to our proximal problem. The Frank-Wolfe algorithm~\cite{frank1956algorithm} was shown
to provide an effective and efficient solution to such a problem via block-coordinate
optimization~\cite{lacoste2012block}. Several works have recently focused on
improving the performance of this
algorithm~\cite{osokin2016minding,shah2015multi} and extended its application
domain~\cite{krishnan2015barrier}.
Our work draws inspiration from this structural SVM literature, and makes use of
the Frank-Wolfe algorithm to solve a subtask of our overall LP minimization
method. Efficiency, however, could only be achieved thanks to our modification
of the efficient filtering procedure to handle ordering constraints.

To the best of our knowledge, our approach constitutes the first LP
minimization algorithm for dense CRFs to have linear time iterations. Our
experiments demonstrate the importance of this result on both speed and 
labelling quality. Being fast, our algorithm can be incorporated in
any end-to-end learning framework, such as~\cite{zheng2015conditional}. We
therefore believe that it will have a significant impact on future semantic
segmentation results, and potentially in other application domains.


\section{Experiments}
In this section, we will first discuss two variants that 
further speedup our algorithm and some implementation details. We then turn to the empirical results.

\subsection{Accelerated variants}\label{sec:proxlpacc}
Empirically we observed that, 
our algorithm can be accelerated by restricting the optimization
procedure to affect only relevant subsets of labels and pixels. These subsets
can be identified from an intermediate solution of PROX-LP.
In particular, we remove the label $i$ from the optimization if $y_{a:i} <
0.01$ for all pixels $a$. In other words, the score of a label
$i$ is insignificant for all the pixels. We denote this version as
PROX-LP$_\ell$. Similarly, we optimize over a pixel only if it is
\textit{uncertain} in choosing a label.
Here, a pixel $a$ is called uncertain if $\max_i y_{a:i} < 0.95$. In other words, 
no label has a score higher than $0.95$.
The intuition behind this strategy is that, after a few iterations of
PROX-LP$_\ell$, most of the pixels are labelled correctly, 
and we only need to fine tune the few remaining ones. 
In practice, we limit this restricted set to 
$10\%$ of the total number of pixels. We denote
this accelerated algorithm as PROX-LP$_\text{acc}$.
As shown in our experiments, PROX-LP$_\text{acc}$ yields
a significant speedup at virtually no loss in the quality of the results.

\begin{figure*}
\def\IMHEIGHT{30ex}
\def\IMHEIGHTZ{27ex}
\begin{center}
\begin{subfigure}{0.34\linewidth}
\includegraphics[height=\IMHEIGHT]{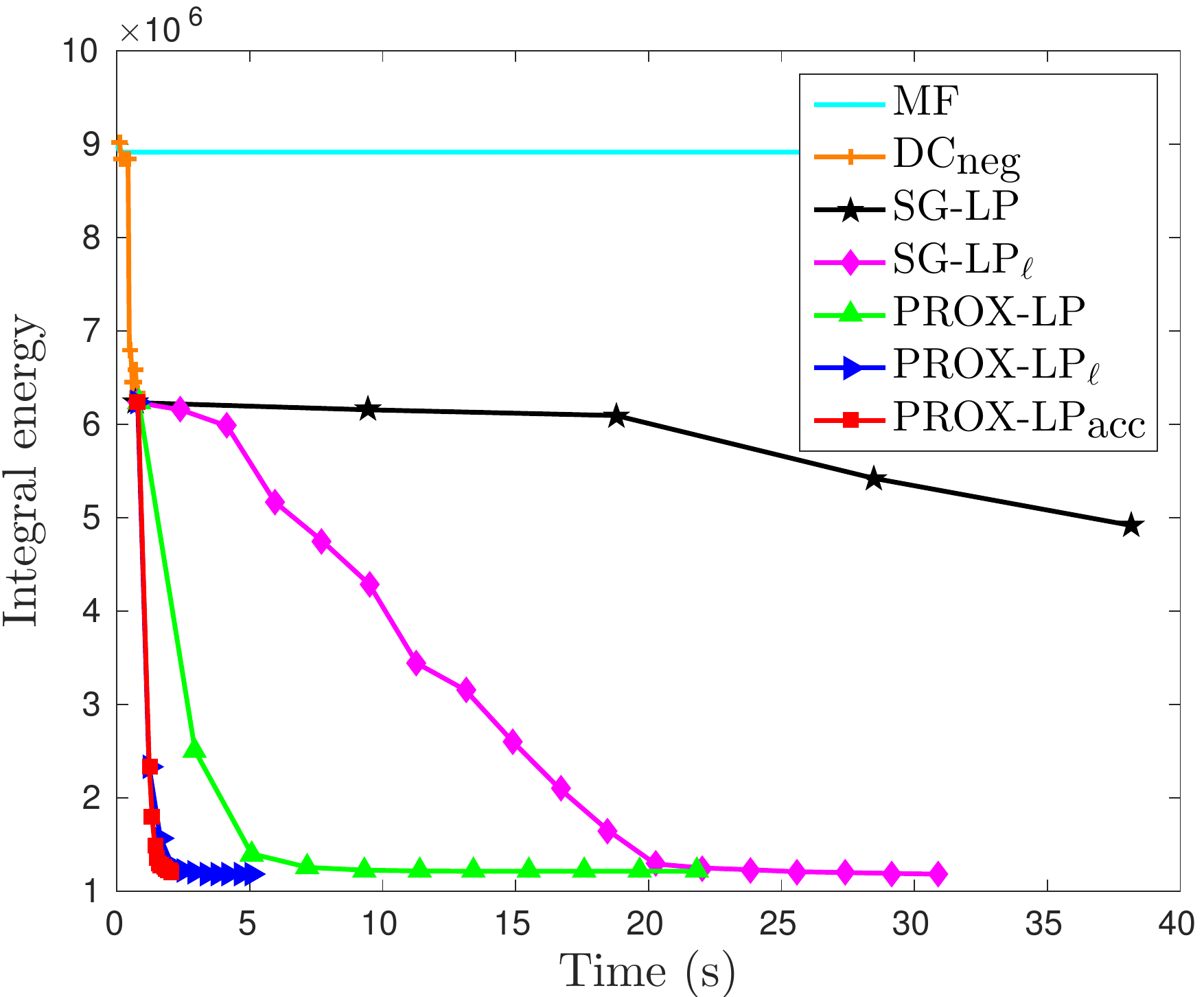}
\end{subfigure}%
\hfill
\begin{subfigure}{0.16\linewidth}
\includegraphics[height=\IMHEIGHTZ]{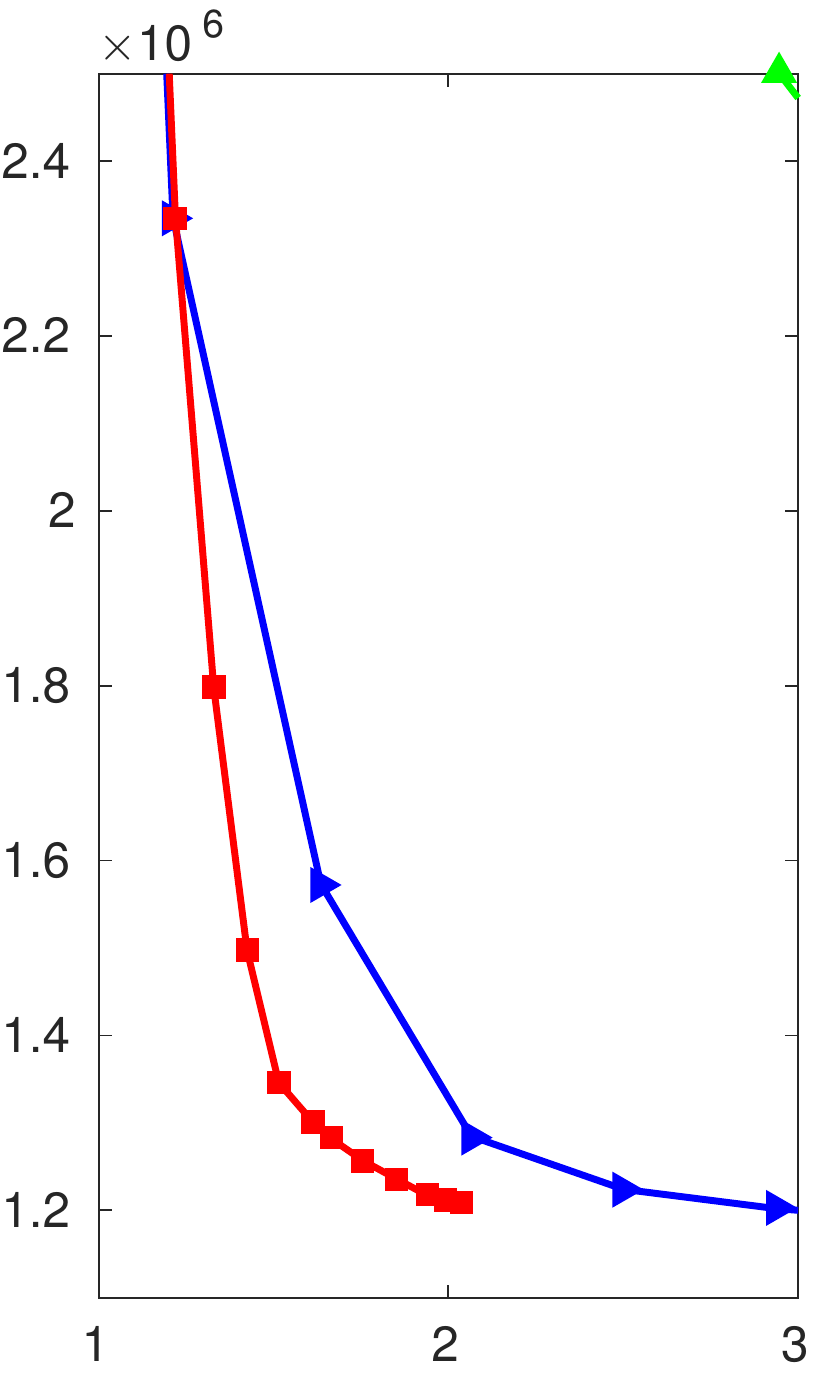}
\end{subfigure}%
\hfill
\begin{subfigure}{0.34\linewidth}
\includegraphics[height=\IMHEIGHT]{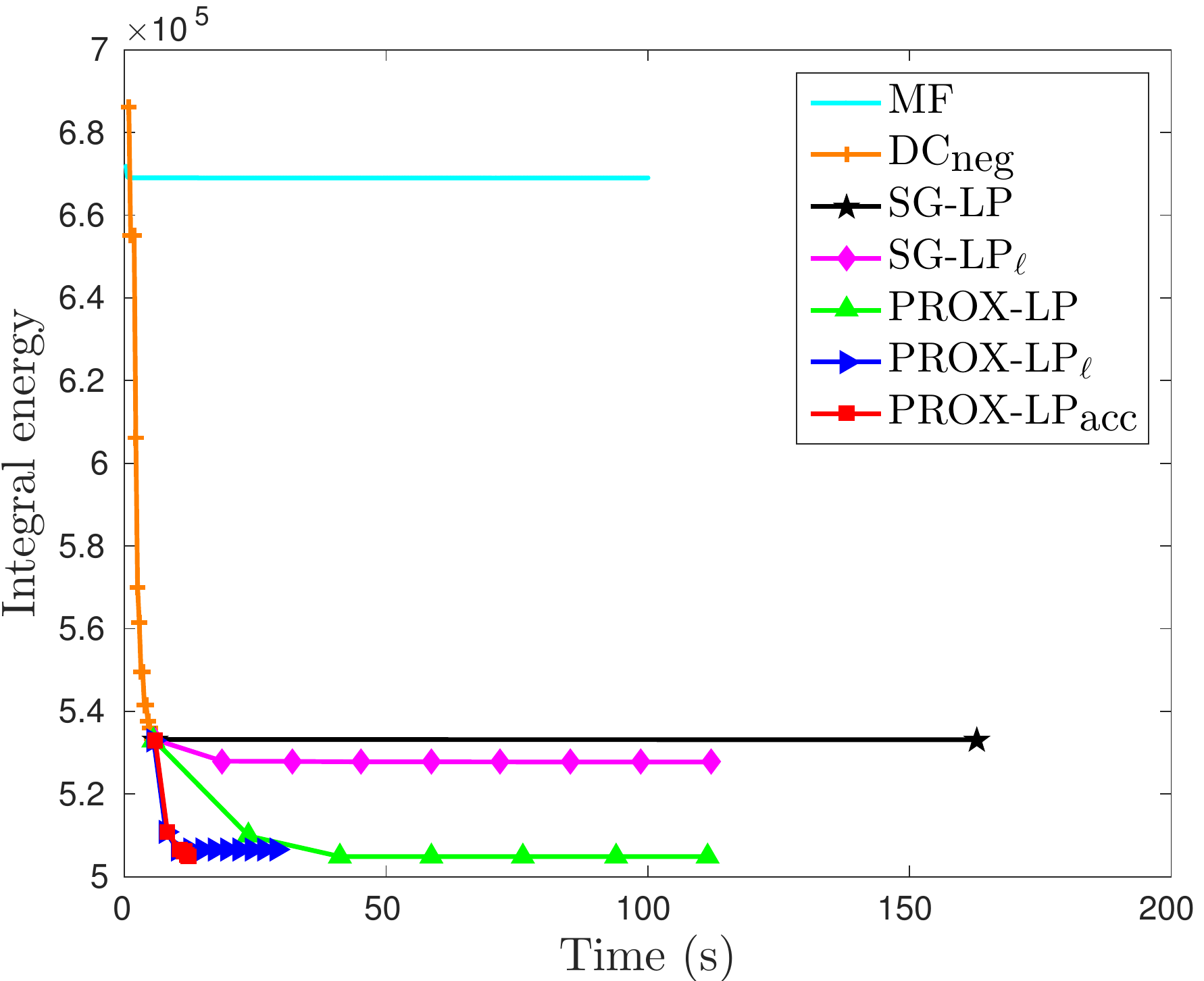}
\end{subfigure}%
\hfill
\begin{subfigure}{0.16\linewidth}
\includegraphics[height=\IMHEIGHTZ]{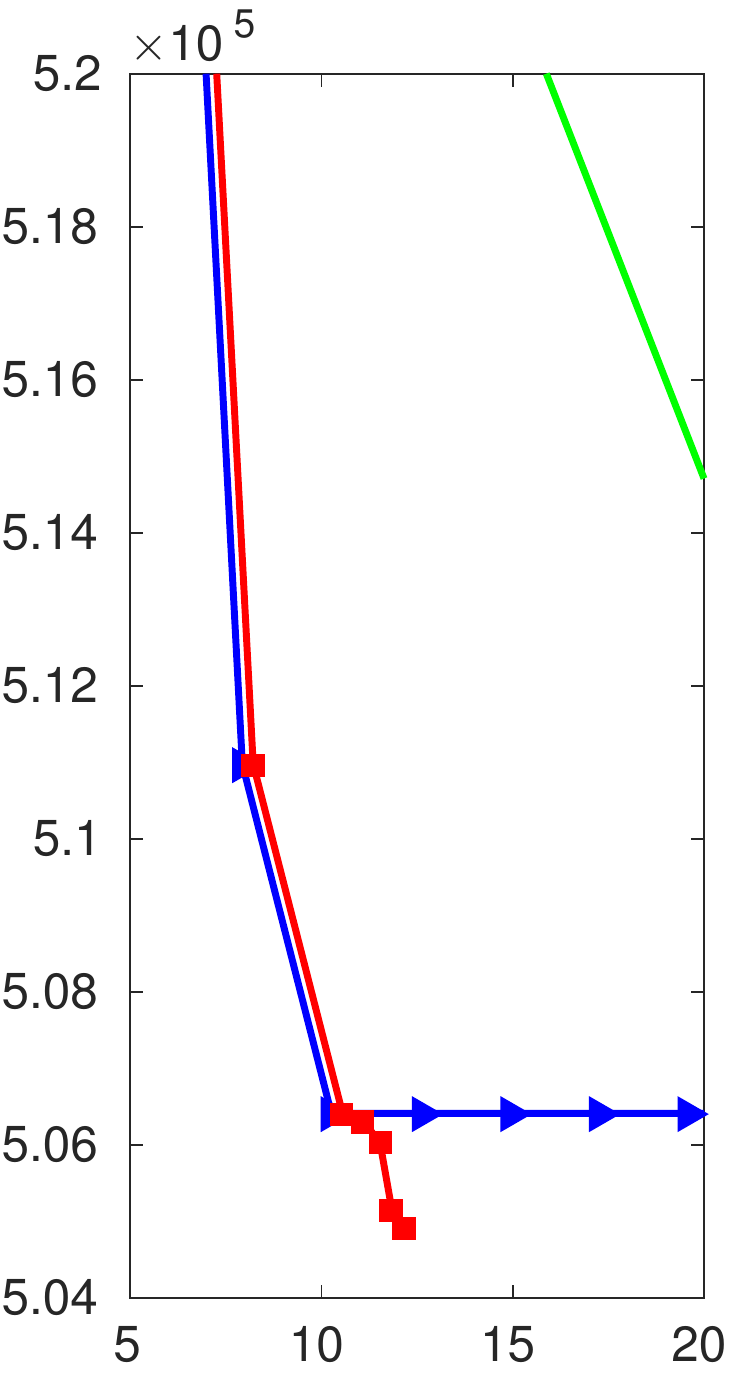}
\end{subfigure}

\vspace{-0.3cm}
\caption{\em Assignment energy as a function of time for DC$_\text{neg}$
parameters for an image in (\textbf{left}) MSRC and (\textbf{right}) Pascal.
A zoomed-in version is shown next to each plot.
Except MF, all other algorithms are initialized with DC$_\text{neg}$.
Note that PROX-LP clearly outperforms SG-LP$_\ell$ by
obtaining much lower energies in fewer iterations.
Furthermore, the accelerated versions of our algorithm obtain roughly the 
same energy as PROX-LP
but significantly faster.}
\label{fig:et}
\end{center}
\vspace{-0.5cm}
\end{figure*}

\begin{table*}[t]
\begin{center}
\begin{tabular}
{>{\raggedright\arraybackslash}m{0.15cm}|
>{\raggedright\arraybackslash}m{1.8cm}|>{\raggedleft\arraybackslash}m{0.5cm}
>{\raggedleft\arraybackslash}m{0.5cm}>{\raggedleft\arraybackslash}m{0.5cm}
>{\raggedleft\arraybackslash}m{1.0cm}>{\raggedleft\arraybackslash}m{1.0cm}
>{\raggedleft\arraybackslash}m{1.0cm}>{\raggedleft\arraybackslash}m{1.0cm}
|>{\raggedleft\arraybackslash}m{1.0cm}>{\raggedleft\arraybackslash}m{1.0cm}
>{\raggedleft\arraybackslash}m{1.0cm}>{\raggedleft\arraybackslash}m{1.0cm}}
&& MF5 & MF & DC$_\text{neg}$ & SG-LP$_\ell$ & PROX-LP  &
 PROX-LP$_\ell$ & PROX-LP$_\text{acc}$ & Ave. E ($\times10^3$) & Ave. T (s) &
 Acc. & IoU \\
\hline
\parbox[t]{2mm}{\multirow{7}{*}{\rotatebox[origin=c]{90}{MSRC}}}
&MF5&-&0&0&0&0&0&0&8078.0&\textbf{0.2}&79.33&52.30\\
&MF&96&-&0&0&0&0&0&8062.4&0.5&79.35&52.32\\
&DC$_\text{neg}$&96&96&-&0&0&0&0&3539.6&1.3&83.01&57.92\\
&SG-LP$_\ell$&96&96&90&-&3&1&1&3335.6&13.6&83.15&58.09\\\cline{2-13}
&PROX-LP&96&96&94&92&-&13&45&1274.4&23.5&83.99&\textbf{59.66}\\
&PROX-LP$_\ell$&96&96&95&94&81&-&61&\textbf{1189.8}&6.3&83.94&59.50\\
&PROX-LP$_\text{acc}$&96&96&95&94&49&31&-&1340.0&3.7&\textbf{84.16}&59.65\\

\hline
\hline
\parbox[t]{1mm}{\multirow{7}{*}{\rotatebox[origin=c]{90}{Pascal}}}
&MF5&-&13&0&0&0&0&0&1220.8&0.8&79.13&27.53\\
&MF&2&-&0&0&0&0&0&1220.8&\textbf{0.7}&79.13&27.53\\
&DC$_\text{neg}$&99&99&-&-&0&0&0&629.5&3.7&80.43&28.60\\
&SG-LP$_\ell$&99&99&95&-&5&12&12&617.1&84.4&80.49&\textbf{28.68}\\\cline{2-13}
&PROX-LP&99&99&95&84&-&32&50&507.7&106.7&80.63&28.53\\
&PROX-LP$_\ell$&99&99&86&86&64&-&43&\textbf{502.1}&22.1&\textbf{80.65}&28.29\\
&PROX-LP$_\text{acc}$&99&99&86&86&46&39&-&507.7&14.7&80.58&28.45\\

\end{tabular}
\end{center}

\vspace{-0.5cm}
\caption{\em Results on the MSRC and Pascal datasets with the parameters tuned for
DC$_\text{neg}$. We show: the percentage of images where the row method
strictly outperforms the column one on the final integral energy, the average
integral energy over the test set, the average run time, the segmentation accuracy 
and the intersection over union score. Note
that all versions of our algorithm obtain much lower energies than the baselines.
Interestingly, while our fully accelerated version does slightly worse in terms of
energy, it is the best in terms of the segmentation accuracy in MSRC.}
\label{tab:dc}
\end{table*}

\subsection{Implementation details}\label{sec:implproxlp}
In practice, we initialize our algorithm with the solution of the best
continuous relaxation algorithm, which is called DC$_\text{neg}$
in~\cite{desmaison2016efficient}.
The parameters of our algorithm, such as the proximal regularization constant
$\lambda$ and the stopping criteria, are chosen manually. 
A small value of $\lambda$ leads to easier minimization of the proximal problem,
but also yields smaller steps at each proximal iteration.
We found
$\lambda = 0.1$ to work well in all our experiments. We fixed the maximum number
of proximal steps ($K$ in Algorithm~\ref{alg:proxlp}) to 10, and each proximal step
is optimized for a maximum of 5 Frank-Wolfe iterations ($T$ in
Algorithm~\ref{alg:proxlp}). 
%
In all our experiments the number of levels $H$ is fixed to $10$. 

\begin{figure*}
\def \SUBWIDTH{0.10\linewidth}
\begin{center}
\begin{subfigure}{\SUBWIDTH}
\includegraphics[width=0.99\linewidth]{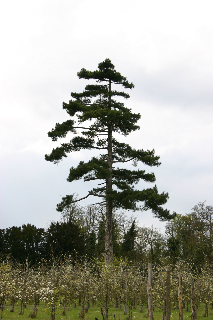}
\end{subfigure}%
\begin{subfigure}{\SUBWIDTH}
\includegraphics[width=0.99\linewidth]{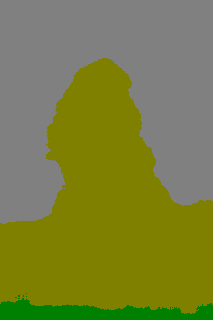}
\end{subfigure}%
\begin{subfigure}{\SUBWIDTH}
\includegraphics[width=0.99\linewidth]{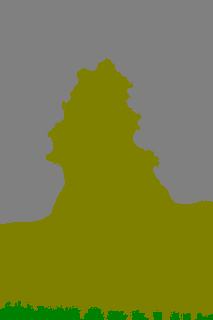}
\end{subfigure}%
\begin{subfigure}{\SUBWIDTH}
\includegraphics[width=0.99\linewidth]{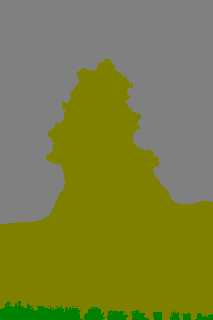}
\end{subfigure}%
\begin{subfigure}{\SUBWIDTH}
\mybox{red}{\includegraphics[width=0.99\linewidth]{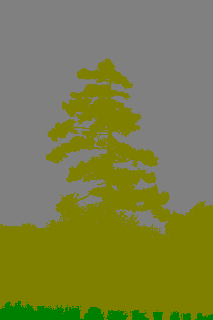}}
\end{subfigure}%
\begin{subfigure}{\SUBWIDTH}
\mybox{red}{\includegraphics[width=0.99\linewidth]{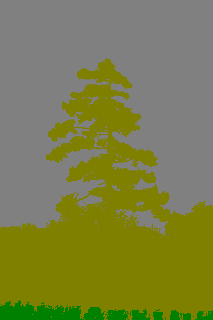}}
\end{subfigure}%
\begin{subfigure}{\SUBWIDTH}
\includegraphics[width=0.99\linewidth]{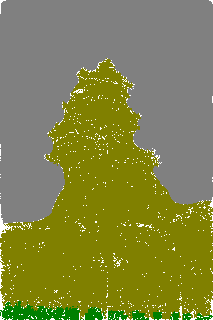}
\end{subfigure}%
\begin{subfigure}{\SUBWIDTH}
\mybox{red}{\includegraphics[width=0.99\linewidth]{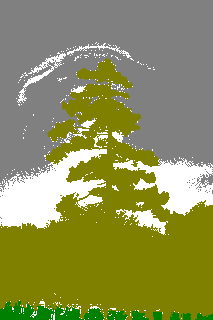}}
\end{subfigure}%
\begin{subfigure}{\SUBWIDTH}
\mybox{red}{\includegraphics[width=0.99\linewidth]{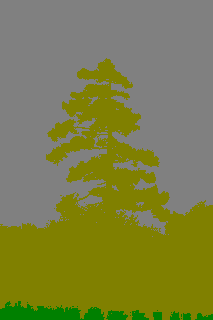}}
\end{subfigure}%
\begin{subfigure}{\SUBWIDTH}
\includegraphics[width=0.99\linewidth]{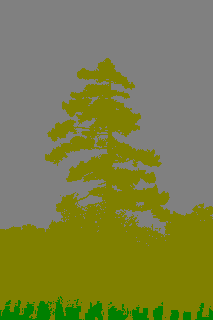}
\end{subfigure}

\begin{subfigure}{\SUBWIDTH}
\includegraphics[width=0.99\linewidth]{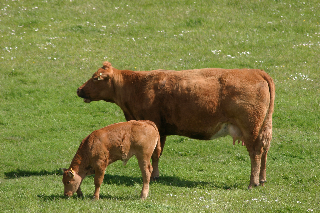}
\end{subfigure}%
\begin{subfigure}{\SUBWIDTH}
\includegraphics[width=0.99\linewidth]{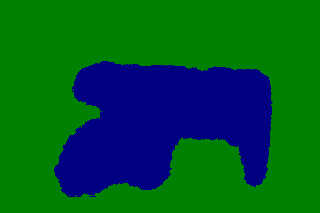}
\end{subfigure}%
\begin{subfigure}{\SUBWIDTH}
\includegraphics[width=0.99\linewidth]{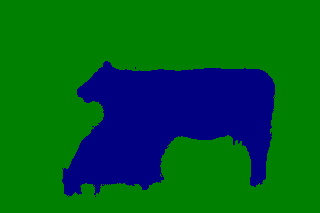}
\end{subfigure}%
\begin{subfigure}{\SUBWIDTH}
\includegraphics[width=0.99\linewidth]{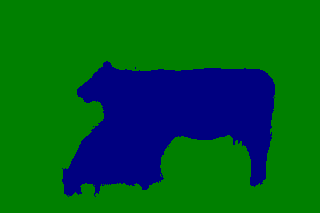}
\end{subfigure}%
\begin{subfigure}{\SUBWIDTH}
\mybox{red}{\includegraphics[width=0.99\linewidth]{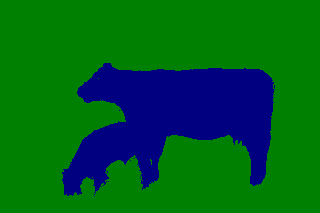}}
\end{subfigure}%
\begin{subfigure}{\SUBWIDTH}
\mybox{red}{\includegraphics[width=0.99\linewidth]{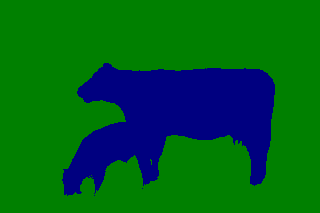}}
\end{subfigure}%
\begin{subfigure}{\SUBWIDTH}
\includegraphics[width=0.99\linewidth]{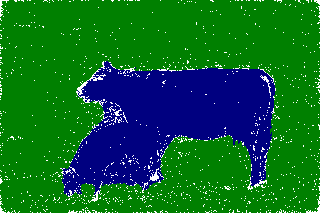}
\end{subfigure}%
\begin{subfigure}{\SUBWIDTH}
\mybox{red}{\includegraphics[width=0.99\linewidth]{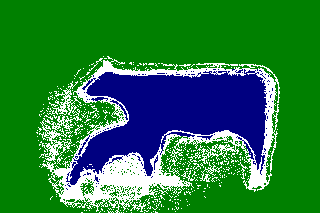}}
\end{subfigure}%
\begin{subfigure}{\SUBWIDTH}
\mybox{red}{\includegraphics[width=0.99\linewidth]{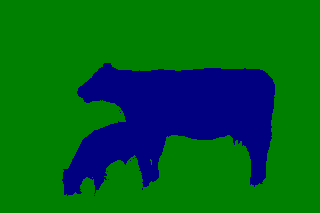}}
\end{subfigure}%
\begin{subfigure}{\SUBWIDTH}
\includegraphics[width=0.99\linewidth]{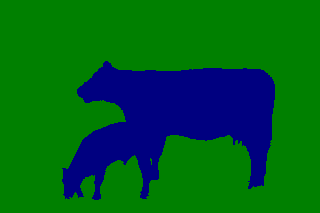}
\end{subfigure}

\begin{subfigure}{\SUBWIDTH}
\includegraphics[width=0.99\linewidth]{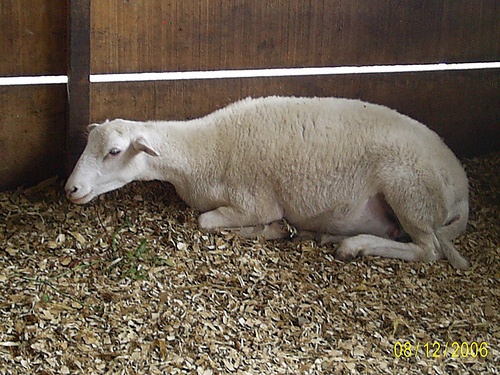}
\end{subfigure}%
\begin{subfigure}{\SUBWIDTH}
\includegraphics[width=0.99\linewidth]{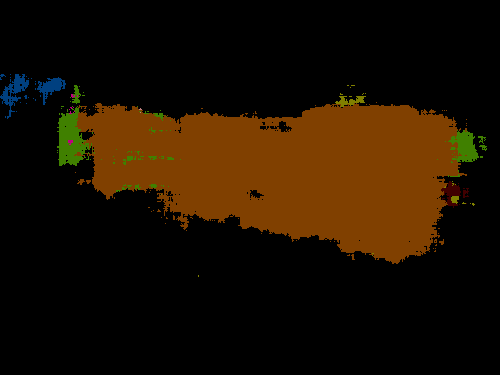}
\end{subfigure}%
\begin{subfigure}{\SUBWIDTH}
\includegraphics[width=0.99\linewidth]{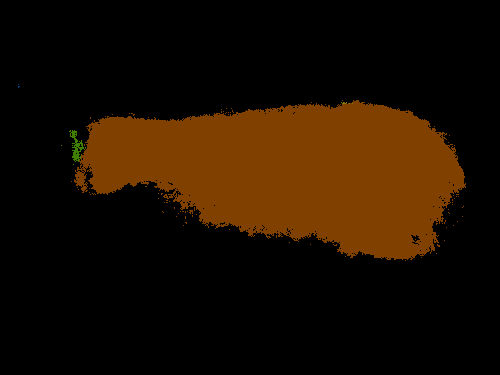}
\end{subfigure}%
\begin{subfigure}{\SUBWIDTH}
\includegraphics[width=0.99\linewidth]{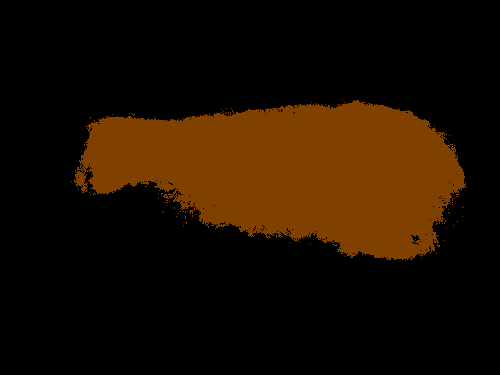}
\end{subfigure}%
\begin{subfigure}{\SUBWIDTH}
\mybox{red}{\includegraphics[width=0.99\linewidth]{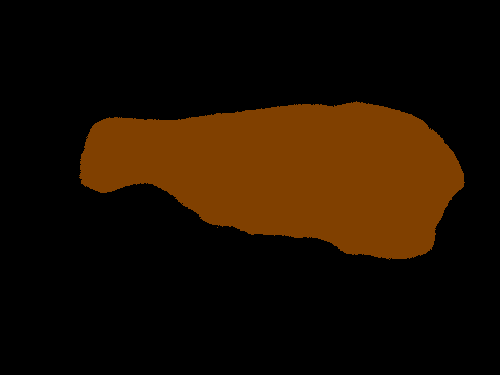}}
\end{subfigure}%
\begin{subfigure}{\SUBWIDTH}
\mybox{red}{\includegraphics[width=0.99\linewidth]{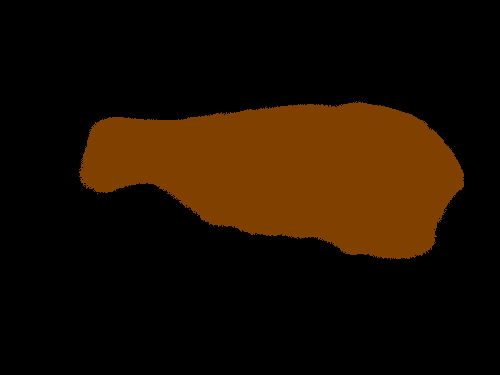}}
\end{subfigure}%
\begin{subfigure}{\SUBWIDTH}
\includegraphics[width=0.99\linewidth]{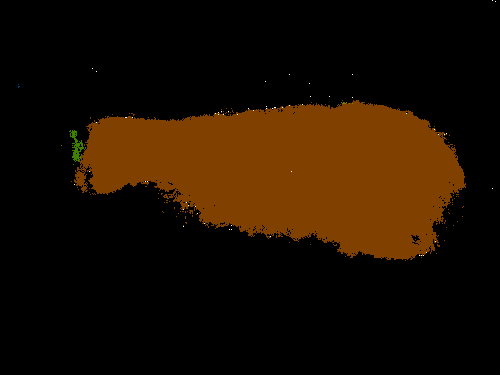}
\end{subfigure}%
\begin{subfigure}{\SUBWIDTH}
\mybox{red}{\includegraphics[width=0.99\linewidth]{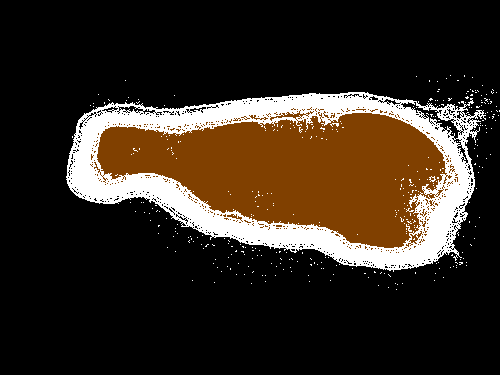}}
\end{subfigure}%
\begin{subfigure}{\SUBWIDTH}
\mybox{red}{\includegraphics[width=0.99\linewidth]{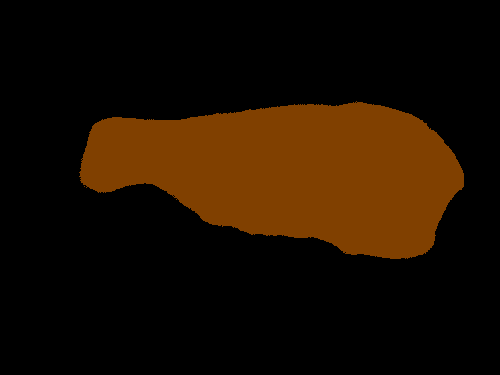}}
\end{subfigure}%
\begin{subfigure}{\SUBWIDTH}
\includegraphics[width=0.99\linewidth]{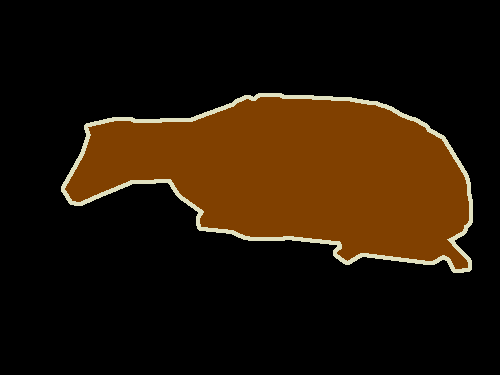}
\end{subfigure}

\begin{subfigure}{\SUBWIDTH}
\includegraphics[width=0.99\linewidth]{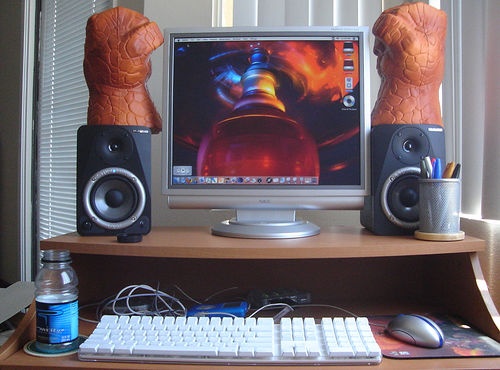}
\caption{Image}
\end{subfigure}%
\begin{subfigure}{\SUBWIDTH}
\includegraphics[width=0.99\linewidth]{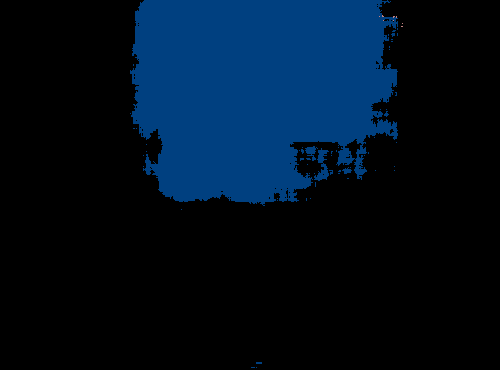}
\caption{MF}
\end{subfigure}%
\begin{subfigure}{\SUBWIDTH}
\includegraphics[width=0.99\linewidth]{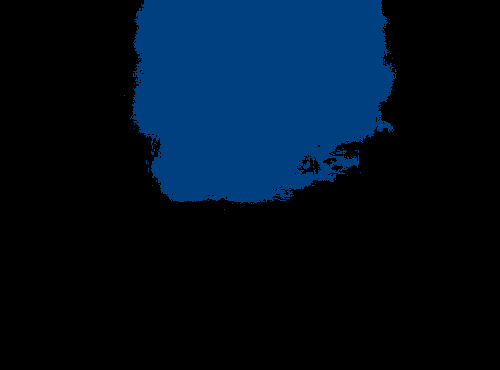}
\caption{DC$_\text{neg}$}
\end{subfigure}%
\begin{subfigure}{\SUBWIDTH}
\includegraphics[width=0.99\linewidth]{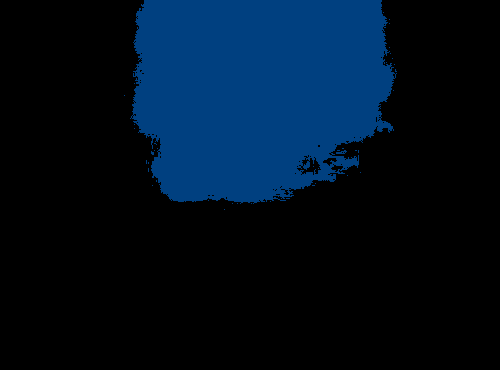}
\caption{SG-LP$_\ell$}
\end{subfigure}%
\begin{subfigure}{\SUBWIDTH}
\mybox{red}{\includegraphics[width=0.99\linewidth]{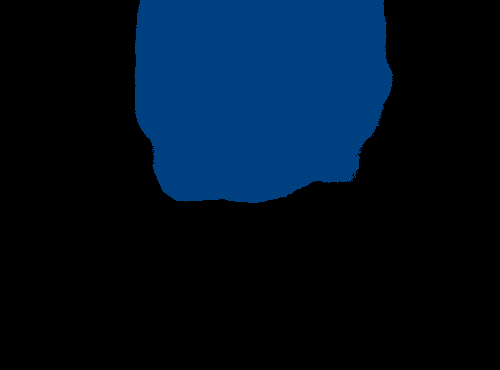}}
\caption{PROX-LP}
\end{subfigure}%
\begin{subfigure}{\SUBWIDTH}
\mybox{red}{\includegraphics[width=0.99\linewidth]{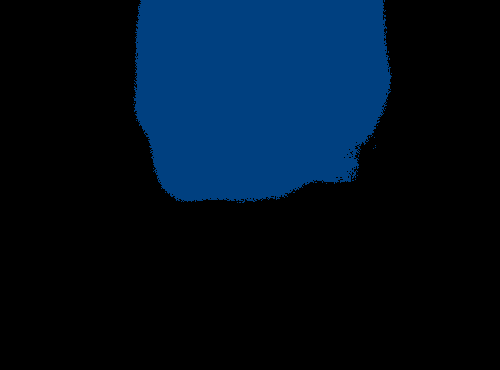}}
\caption{PROX-LP$_\ell$}
\end{subfigure}%
\begin{subfigure}{\SUBWIDTH}
\includegraphics[width=0.99\linewidth]{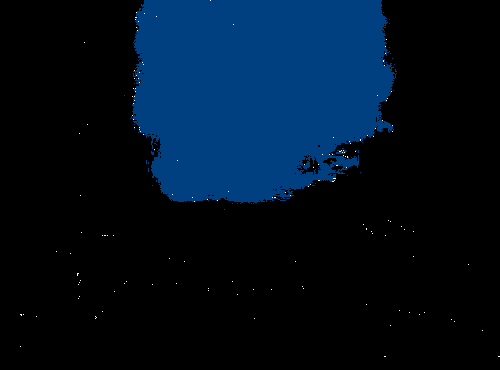}
\caption{Uncer.(DC$_\text{neg}$)}
\end{subfigure}%
\begin{subfigure}{\SUBWIDTH}
\mybox{red}{\includegraphics[width=0.99\linewidth]{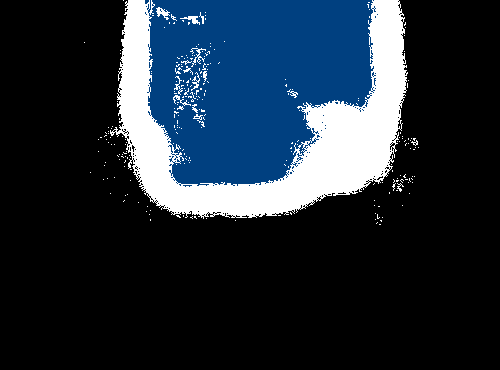}}
\caption{Uncer.(ours)}
\end{subfigure}%
\begin{subfigure}{\SUBWIDTH}
\mybox{red}{\includegraphics[width=0.99\linewidth]{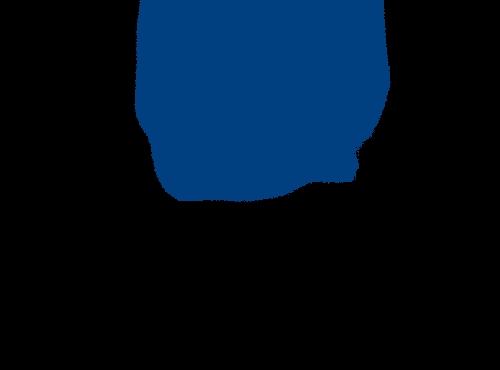}}
\caption{PROX-LP$_\text{acc}$}
\end{subfigure}%
\begin{subfigure}{\SUBWIDTH}
\includegraphics[width=0.99\linewidth]{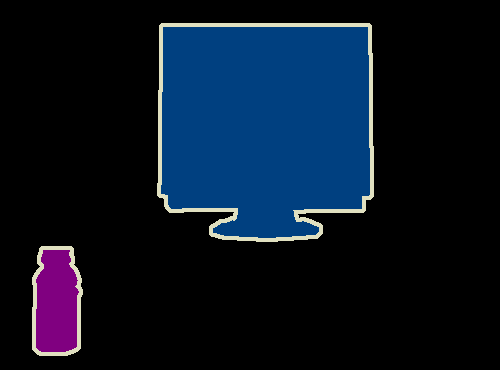}
\caption{Ground truth}
\end{subfigure}

\vspace{-0.2cm}
\caption{\em Results with DC$_\text{neg}$ parameters,
for an image in (\textbf{top}) MSRC and (\textbf{bottom}) Pascal. The
uncertain pixels identified by DC$_\text{neg}$ and PROX-LP$_\text{acc}$ are
marked in white. 
Note that, all versions of our algorithm obtain visually good segmentations.
In addition, even though DC$_\text{neg}$ is less accurate (the percentatge of
uncertain pixels for DC$_\text{neg}$ is usually less than 1\%) in predicting
uncertain pixels, our algorithm marks most of the crucial pixels (object boundaries and shadows)
 as uncertain. 
Furthermore, in the MSRC images, the improvement of PROX-LP$_\text{acc}$ over
the baselines is clearly visible and the final segmentation is virtually the
same as the accurate ground truth. (Best viewed in color)}
\label{fig:seg}
\end{center}
\vspace{-0.5cm}
\end{figure*}

\subsection{Segmentation results}
We evaluated our algorithm on the MSRC and Pascal VOC
2010~\cite{everingham2010pascal} segmentation datasets, and compare it against
mean-field inference (MF)~\cite{koltun2011efficient}, the best performing
continuous relaxation method of~\cite{desmaison2016efficient} (DC$_\text{neg}$)
and the subgradient based LP minimization method of~\cite{desmaison2016efficient}
(SG-LP). Note that, in~\cite{desmaison2016efficient}, the LP was initialized
with the DC$_\text{neg}$ solution and optimized for 5 iterations.
Furthermore, the LP optimization was performed on a subset of labels identified
by the DC$_\text{neg}$ solution in a similar manner to the one discussed in
Section~\ref{sec:proxlpacc}.
We refer to this algorithm as SG-LP$_\ell$.
For all the baselines, we employed the respective authors'
implementations that were obtained from the web or through personal
communication.
Furthermore, for all the algorithms, the integral
labelling is computed from the fractional solution using the \textit{argmax} rounding scheme.

For both datasets, we used the same splits and unary potentials as
in~\cite{koltun2011efficient}.
The pairwise potentials were defined using two kernels: a spatial kernel and a
bilateral one~\cite{koltun2011efficient}. For each method, the kernel parameters
were cross validated on validation data using
Spearmint~\cite{snoek2012practical}.
To be able to compare energy values, we then evaluated all methods with the same
parameters. In other words, for each dataset, each method was run several times
with different parameter values. The final parameter values for MF and
DC$_\text{neg}$ are given in Appendix~\ref{app:param}. Note that, on MSRC,
cross-validation was performed on the less accurate ground truth provided 
with the original dataset. Nevertheless, we evaluated all
methods on the accurate ground truth annotations provided by~\cite{koltun2011efficient}.

The results for the parameters tuned for DC$_\text{neg}$ on the MSRC and Pascal 
datasets are given in Table~\ref{tab:dc}.
Here MF5 denotes the mean-field algorithm run for 5 iterations. In
Fig.~\ref{fig:et}, we show the assignment energy as a function of time 
for an image in MSRC (the tree image in Fig.~\ref{fig:seg}) and for an
image in Pascal (the sheep image in Fig.~\ref{fig:seg}).
Furthermore, we provide some of the segmentation results in
Fig.~\ref{fig:seg}.

In summary, PROX-LP$_\ell$ obtains the lowest integral energy in both
datasets.
Furthermore, our fully accelerated version is the fastest LP minimization
algorithm and always outperforms the baselines by a great margin in terms of
energy.
From Fig.~\ref{fig:seg}, we can see that PROX-LP$_\text{acc}$ 
marks most of the crucial pixels (\eg, object boundaries) as
\textit{uncertain}, and optimizes over them efficiently and
effectively. 
Note that, on top of being fast, PROX-LP$_\text{acc}$ obtains the 
highest accuracy in MSRC for the parameters tuned for
DC$_\text{neg}$.

To ensure consistent behaviour across different energy
parameters, we ran the same experiments for the parameters tuned for MF. 
In this setting, all versions of our algorithm again yield significantly lower
energies than the baselines. The quantitative and qualitative results for this parameter
setting are given in Appendix~\ref{app:addexp}.

\subsection{Modified filtering method}
We then compare our modified filtering method, described in
Section~\ref{sec:ph}, with the divide-and-conquer strategy
of~\cite{desmaison2016efficient}. To this end, we evaluated both algorithms on
one of the Pascal VOC test images (the sheep image in Fig.~\ref{fig:seg}),
but varying the image size, the number of labels and the Gaussian kernel
standard deviation. Note that, to generate a plot for one variable, the other
variables are fixed to their respective standard values. The standard value for
the number of pixels is $187500$, for the number of labels $21$, and for the
standard deviation $1$.
For this experiment, the conditional gradients were computed from a random primal solution $\tbfy^t$.
In Fig.~\ref{fig:pasphpl}, we show the speedup of our
modified filtering approach over the one of~\cite{desmaison2016efficient} as a 
function of the number of pixels and labels. As shown in
Appendix~\ref{app:spph}, the speedup with respect to
 the kernel standard deviation is roughly constant.
The timings were averaged over 10 runs, and we observed only
negligible timing variations between the different runs.

\begin{figure}
\begin{center}
\begin{subfigure}{0.5\linewidth}
\includegraphics[width=0.95\linewidth]{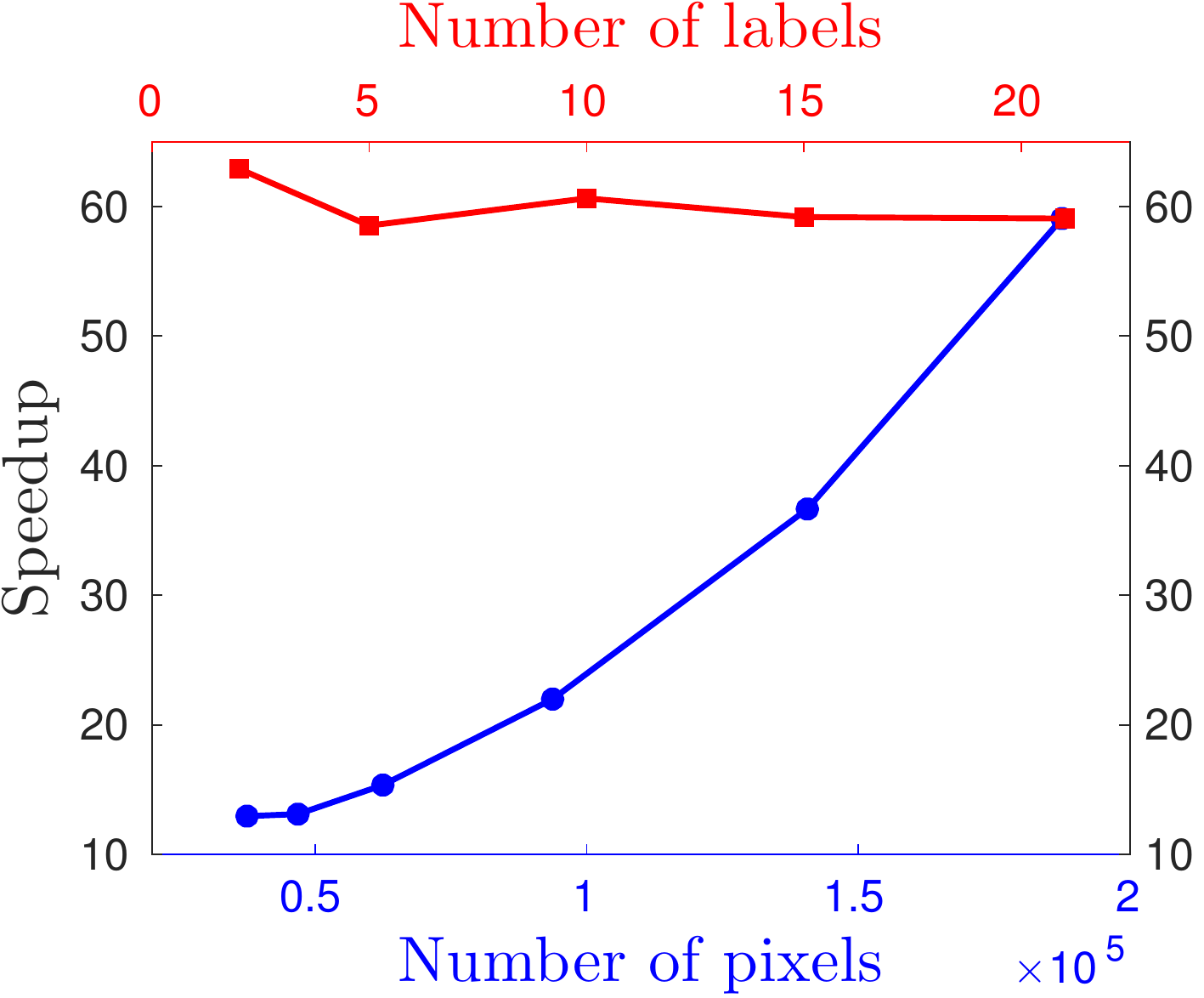}
\vspace{-0.1cm}
\caption{Spatial kernel ($d = 2$)}
\end{subfigure}%
\begin{subfigure}{0.5\linewidth}
\includegraphics[width=0.95\linewidth]{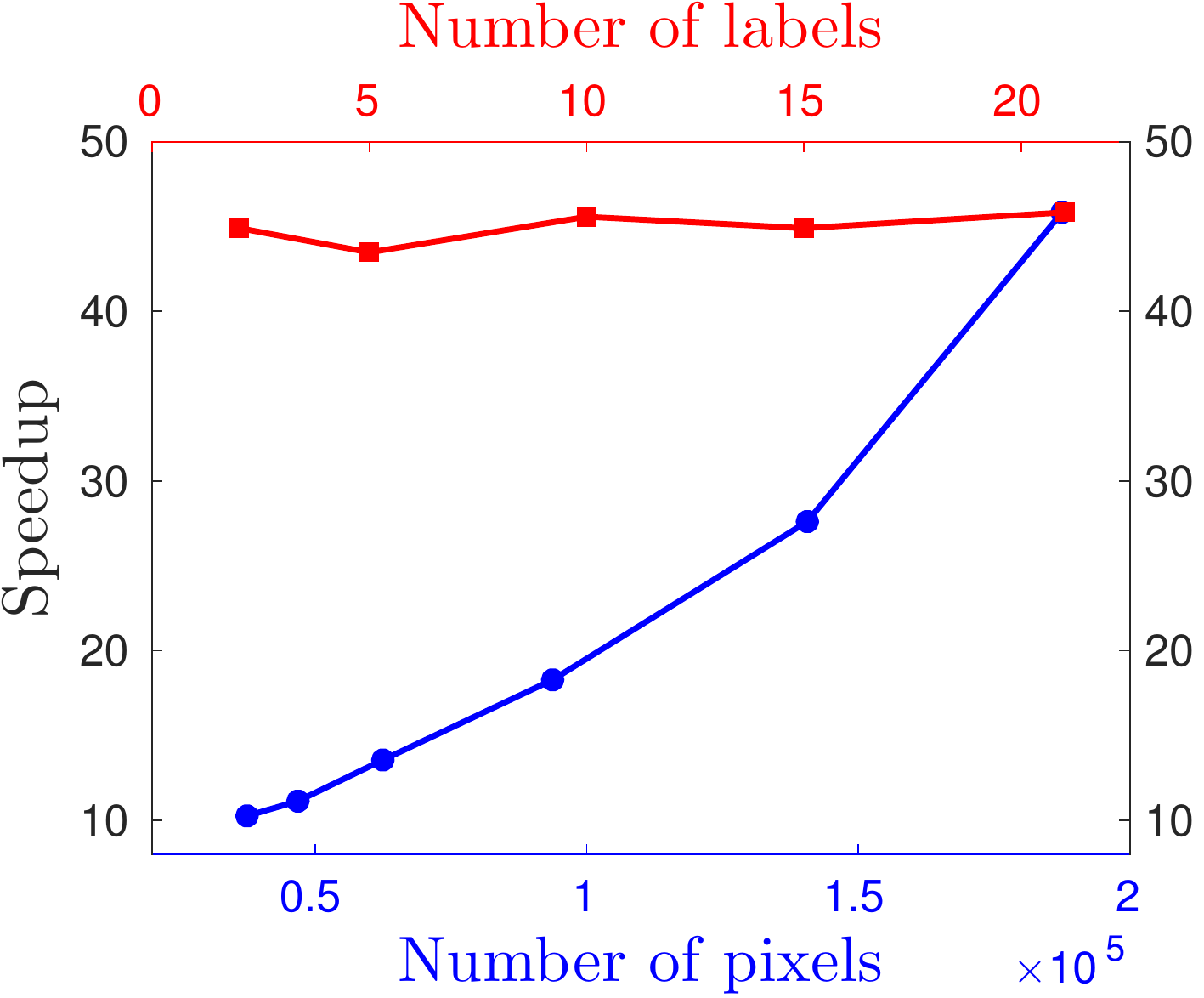}
\vspace{-0.1cm}
\caption{Bilateral kernel ($d=5$)}
\end{subfigure}

\vspace{-0.3cm}
\caption{\em Speedup of our modified filtering
algorithm over the divide-and-conquer strategy of~\cite{desmaison2016efficient} on a Pascal image. 
Note that our speedup grows with the number of pixels and is approximately constant with respect 
to the number of labels.
(Best viewed in color)}
\label{fig:pasphpl}
\end{center}
\vspace{-1.0cm}
\end{figure}

In summary, our modified filtering method is $10 - 65$ times faster than the
state-of-the-art algorithm of~\cite{desmaison2016efficient}. Furthermore,
note that all versions of our algorithm operate in
the region where the speedup is around $45-65$.


\section{Discussion}
We have introduced the first LP minimization algorithm for dense CRFs with
Gaussian pairwise potentials whose iterations are linear in the number of
pixels and labels. Thanks to the efficiency of our algorithm and to the tightness of
the LP relaxation, our approach yields much lower energy values than state-of-the-art
dense CRF inference methods.
Furthermore, our experiments demonstrated that, with the right set of energy
parameters, highly accurate segmentation results can be obtained with our
algorithm. The speed and effective energy
minimization of our algorithm make it a perfect 
candidate to be incorporated in an end-to-end learning framework, such
as~\cite{zheng2015conditional}. This, we believe, will be key to further
improving the accuracy of deep semantic segmentation architectures.

\section{Acknowledgements}
This work was supported by the EPSRC, the ERC grant ERC- 2012-AdG 321162-HELIOS,
the EPSRC/MURI grant ref EP/N019474/1, the EPSRC grant EP/M013774/1, the EPSRC
programme grant Seebibyte EP/M013774/1, the Microsoft Research PhD Scholarship,
ANU PhD Scholarship and Data61 Scholarship. Data61 (formerly NICTA) is funded by
the Australian Government as represented by the Department of Broadband,
Communications and the Digital Economy and the Australian Research Council
through the ICT Centre of Excellence program.


\appendix

\section{Proximal minimization for LP relaxation - Supplementary material}
In this section, we give the detailed derivation of our proximal minimization
algorithm for the LP relaxation.

\subsection{Dual formulation}\label{app:dual}
Let us first restate the definition of the matrices $A$ and $B$, and analyze
their properties. This would be useful to derive the dual of the proximal
problem~\eqref{eqn:p}.

\begin{dfn}
Let $A \in \R^{nm\times p}$ and $B\in\R^{nm\times n}$ be two matrices such that
\begin{align}
\label{eqn:ab}
\left(A\bfalpha\right)_{a:i} &= -\sum_{b\ne a}\left(\alpha^1_{ab:i} -
\alpha^2_{ab:i} + \alpha^2_{ba:i} - \alpha^1_{ba:i}\right)\ ,\\\nonumber
\left(B\bfbeta\right)_{a:i} &= \beta_a\ .
\end{align}
\end{dfn}

\begin{pro}
  Let $\bfx \in \R^{nm}$. Then, for all $a \ne b$ and $i \in \calL$,
  \label{pro:a}
   \begin{align}
 	\left(A^T\bfx\right)_{{ab:i}^1} &= x_{b:i} - x_{a:i}\ ,\\\nonumber
 	\left(A^T\bfx\right)_{{ab:i}^2} &= x_{a:i} - x_{b:i}\ .
   \end{align}
Here, the index ${{ab:i}^1}$ denotes the element corresponding to
$\alpha^1_{ab:i}$.
\end{pro}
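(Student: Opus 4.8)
The plan is to exploit the defining property of the transpose (adjoint): for every $\bfalpha \in \R^p$ and every $\bfx \in \R^{nm}$ one has $\langle A\bfalpha, \bfx\rangle = \langle \bfalpha, A^T\bfx\rangle$. Since the left-hand side is an explicit bilinear expression in $\bfalpha$ and $\bfx$ (obtained by substituting the definition~\eqref{eqn:ab} of $(A\bfalpha)_{a:i}$), I can read off each component $(A^T\bfx)_{{ab:i}^1}$ and $(A^T\bfx)_{{ab:i}^2}$ as the coefficient multiplying $\alpha^1_{ab:i}$ and $\alpha^2_{ab:i}$, respectively. This avoids writing $A$ out as an explicit matrix and reduces the proof to a coefficient-collection exercise.

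Concretely, I would first write
\[
\langle A\bfalpha, \bfx\rangle = \sum_a\sum_i x_{a:i}\,(A\bfalpha)_{a:i} = -\sum_a\sum_i\sum_{b\ne a} x_{a:i}\left(\alpha^1_{ab:i} - \alpha^2_{ab:i} + \alpha^2_{ba:i} - \alpha^1_{ba:i}\right)\ .
\]
Then, fixing an ordered pair $(a,b)$ with $a\ne b$ and a label $i$, I would gather every occurrence of $\alpha^1_{ab:i}$ in this triple sum. It appears twice: once directly in the row of pixel $a$ (the $-x_{a:i}\,\alpha^1_{ab:i}$ contribution) and once in the row of pixel $b$, coming from the $-\alpha^1_{ba:i}$ term after the summation indices are relabelled (contributing $+x_{b:i}\,\alpha^1_{ab:i}$). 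Summing these gives the coefficient $x_{b:i}-x_{a:i}$, which is exactly $(A^T\bfx)_{{ab:i}^1}$. The identical bookkeeping for $\alpha^2_{ab:i}$ yields the coefficient $x_{a:i}-x_{b:i}$, establishing the second identity.

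The one point requiring care — and the only real obstacle — is this double counting: each dual coordinate $\alpha^1_{ab:i}$ (resp.\ $\alpha^2_{ab:i}$) is produced by two different rows of $A$, namely the row indexed by $a{:}i$ (through the terms written with neighbour $b$) and the row indexed by $b{:}i$ (through the $\alpha^1_{ba:i},\alpha^2_{ba:i}$ terms, after relabelling the outer summation variable). One must track the signs of both contributions correctly; the antisymmetric pattern $\alpha^1_{ab:i}-\alpha^2_{ab:i}+\alpha^2_{ba:i}-\alpha^1_{ba:i}$ is arranged precisely so that the two contributions reinforce rather than cancel, producing the clean difference $x_{b:i}-x_{a:i}$. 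Once the coefficients are matched, the two claimed identities follow immediately, completing the proof.
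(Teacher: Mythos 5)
Your proof is correct: the coefficient-collection via $\langle A\bfalpha,\bfx\rangle=\langle\bfalpha,A^T\bfx\rangle$ is just a careful, rigorous way of carrying out the ``inspection of the matrix $A$'' that the paper invokes as its entire proof, and you correctly track the two contributions (from rows $a{:}i$ and $b{:}i$) with the right signs. No gaps; this matches the paper's intended argument, only written out in full.
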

\begin{proof}
This can be easily proved by inspecting the matrix $A$.
\end{proof}

\begin{pro}
The matrix $B\in \R^{nm\times n}$ defined in Eq.~\eqref{eqn:ab} satisfies the
following properties:
\label{pro:b}
\begin{enumerate}
  \item Let $\bfx \in \R^{nm}$. Then, $\left(B^T\bfx\right)_a =
  \sum_{i\in\calL}x_{a:i}$ for all $a\in \allpixels$.
  \item $B^TB = mI$, where $I\in \R^{n \times n}$ is the identity matrix.
  \item $BB^T$ is a block diagonal matrix, with each block $\left(BB^T\right)_a
  = \bfone$ for all $a\in \allpixels$, where $\bfone \in \R^{m \times m}$ is the matrix of all
  ones.
\end{enumerate}
\end{pro}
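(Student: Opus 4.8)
The plan is to make the sparse indicator structure of $B$ explicit and then read off all three properties by direct entry computation. First I would observe that the definition $(B\bfbeta)_{a:i} = \beta_a$ forces the entry of $B$ in row $a:i$ and column $b$ to be $B_{a:i,\,b} = \I[a=b]$: the column indexed by pixel $b$ is the $0/1$ vector supported exactly on the $m$ rows $\{b:i \mid i \in \calL\}$. With this single observation in hand, each part becomes an immediate index manipulation.

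Concretely, part 1 follows from $(B^T\bfx)_a = \sum_{b,i} B_{b:i,\,a}\, x_{b:i} = \sum_{i\in\calL} x_{a:i}$, since $B_{b:i,\,a}$ is nonzero only when $b=a$. Part 2 follows from $(B^TB)_{ab} = \sum_{c,i} B_{c:i,\,a}\, B_{c:i,\,b} = \I[a=b]\sum_{i\in\calL} 1 = m\,\I[a=b]$, i.e.\ $B^TB = mI$. Part 3 follows from $(BB^T)_{a:i,\,b:j} = \sum_c B_{a:i,\,c}\, B_{b:j,\,c} = \I[a=b]$, which is independent of $i$ and $j$; hence the $(a,b)$ block vanishes when $a\ne b$ and equals the $m\times m$ all-ones matrix $\bfone$ when $a=b$, which is exactly the claimed block-diagonal form.

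The hard part is essentially nonexistent here: once $B$ is recognised as a column-selection (indicator) matrix, every identity is a one-line index computation. The only point requiring care is to keep the composite row index $a:i$ distinct from the pixel index $a$ when expanding the matrix products, so that the summations over the $m$ labels correctly produce the factor $m$ in part 2 and the all-ones blocks in part 3.
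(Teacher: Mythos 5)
Your proposal is correct and follows essentially the same route as the paper: both identify $B$ as the indicator (label-repetition) matrix with $B_{a:i,\,b}=\I[a=b]$ and then read off the three properties by direct entry computation. If anything, your version is more explicit than the paper's, which displays the matrix only for $m=3$ and asserts that the remaining properties "can be proved easily," whereas you carry out the index manipulations in full.
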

\begin{proof}
Note that, from Eq.~\eqref{eqn:ab}, the matrix $B$ simply
repeats the elements $\beta_a$ for $m$ times. In particular, for
$m=3$, the matrix $B$ has the following form:
\begin{equation}
 B = 
\begin{bmatrix}
1  & 0  &\cdots &  \cdots & \cdots & 0 \\[-0.2cm]
1  & \vdots  & & & & \vdots \\[-0.2cm]
1 & 0  &\cdots &  \cdots & \cdots& \vdots \\[-0.2cm]
0 & 1 &  &  &    & \vdots \\[-0.2cm]
\vdots & 1&  &    & & \vdots\\[-0.2cm]
\vdots  & 1 &    &  &    & \vdots\\[-0.2cm]
\vdots  & 0 &  &   &   &  \vdots\\[-0.2cm]
\vdots  & \vdots  &  & &   &  0\\[-0.2cm]
\vdots  & \vdots  &  & &   &  1\\[-0.2cm]
\vdots  & \vdots  &  & &   &  1\\
0 & \cdots&  \cdots &  \cdots & 0 & 1
\end{bmatrix}\ .
\end{equation}
Therefore, multiplication by $B^T$ amounts to summing over the labels. From this, the
other properties can be proved easily. 
\end{proof}

We now derive the Lagrange dual of~\eqref{eqn:p}.

\begin{pro}
Given matrices $A \in\R^{nm\times p}$ and $B\in\R^{nm\times n}$ and dual
variables $(\bfalpha,\bfbeta,\bfgamma)$.
\begin{enumerate}
  \item The Lagrange dual of~\eqref{eqn:p} takes the following form:
	\begin{alignat}{3}
	\label{eqn:dual0}
	&\underset{\bfalpha,\bfbeta,\bfgamma}{\operatorname{min}}\ g(\bfalpha,
	\bfbeta, \bfgamma) &&= \frac{\lambda}{2}\|A\bfalpha +
	B\bfbeta+\bfgamma-\bfphi\|^2 + \left\langle A\bfalpha +
	B\bfbeta+\bfgamma-\bfphi, \bfy^k \right\rangle - \langle \bfone, \bfbeta
	\rangle\ ,\\\nonumber &\text{s.t.}\hskip0.08\linewidth \gamma_{a:i} &&\ge
	0\quad\forall\,a\in\allpixels\quad \forall\,i \in \calL\ , \\\nonumber
	&\hskip0.08\linewidth\bfalpha \in \calC &&= \left\{ \begin{array}{l|l}
	\multirow{2}{*}{$\bfalpha$} & \alpha^1_{ab:i} + \alpha^2_{ab:i} =
	\frac{K_{ab}}{2},\, \forall\,a\ne b,\, \forall\,i \in \calL\\
 	& \alpha^1_{ab:i}, \alpha^2_{ab:i} \ge 0,\,\forall\,a\ne
 	b,\, \forall\,i \in \calL \end{array} \right\}\ .
	\end{alignat}
  \item The primal variables $\bfy$ satisfy
	\begin{equation}
	\label{eqn:cp0}
	\bfy = \lambda \left(A\bfalpha+B\bfbeta+\bfgamma-\bfphi\right)+\bfy^k\ .
	\end{equation}
\end{enumerate}
\end{pro}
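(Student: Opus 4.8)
The plan is to derive~\eqref{eqn:dual0} by standard Lagrangian duality applied to the constrained quadratic program~\eqref{eqn:p}, minimizing the Lagrangian over the primal variables $\bfy$ and $\bfz$ in two stages. First I would attach multipliers to each constraint: $\alpha^1_{ab:i}\ge 0$ and $\alpha^2_{ab:i}\ge 0$ to the two families of inequalities~\eqref{eqn:proxlpc1} and~\eqref{eqn:proxlpc2}, a free multiplier $\beta_a$ to the equality~\eqref{eqn:proxlpc3}, and $\gamma_{a:i}\ge 0$ to the nonnegativity constraint~\eqref{eqn:proxlpc4}. To land on the stated constant term $-\langle\bfone,\bfbeta\rangle$, I would write the equality contribution as $\sum_a \beta_a\left(1-\sum_i y_{a:i}\right)$. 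This choice of signs immediately fixes the feasible domain of the dual: $\gamma_{a:i}\ge 0$ and $\alpha^1_{ab:i},\alpha^2_{ab:i}\ge 0$ are inherited from the inequality multipliers, while $\beta$ remains unconstrained.

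Next I would minimize over $\bfz$. Each $z_{ab:i}$ enters only linearly, with coefficient $\frac{K_{ab}}{2}-\alpha^1_{ab:i}-\alpha^2_{ab:i}$; since $z_{ab:i}$ is free, the infimum over $\bfz$ is $-\infty$ unless this coefficient vanishes. Requiring the dual function to be finite therefore forces $\alpha^1_{ab:i}+\alpha^2_{ab:i}=\frac{K_{ab}}{2}$, which is exactly the equality defining the set $\calC$, and makes all $z$ terms drop out.

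The core of the argument is the minimization over $\bfy$, where I would collect the terms linear in $\bfy$. The key simplification uses Proposition~\ref{pro:a}: since $y_{a:i}-y_{b:i}=-\left(A^T\bfy\right)_{{ab:i}^1}$ and $y_{b:i}-y_{a:i}=-\left(A^T\bfy\right)_{{ab:i}^2}$, the entire pairwise block collapses to $-\langle\bfalpha, A^T\bfy\rangle=-\langle A\bfalpha,\bfy\rangle$. Likewise the $\beta$ and $\gamma$ contributions become $-\langle B\bfbeta,\bfy\rangle$ and $-\langle\bfgamma,\bfy\rangle$ using $(B\bfbeta)_{a:i}=\beta_a$, and the unary term is $\langle\bfphi,\bfy\rangle$. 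The $\bfy$-dependent part of the Lagrangian is thus $\frac{1}{2\lambda}\|\bfy-\bfy^k\|^2-\langle A\bfalpha+B\bfbeta+\bfgamma-\bfphi,\,\bfy\rangle+\langle\bfone,\bfbeta\rangle$. This is an unconstrained strongly convex quadratic in $\bfy$; setting its gradient to zero gives $\bfy=\lambda\left(A\bfalpha+B\bfbeta+\bfgamma-\bfphi\right)+\bfy^k$, which is precisely part~2 of the proposition~\eqref{eqn:cp0}.

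Finally I would substitute this minimizer back. Writing $\mathbf{r}=A\bfalpha+B\bfbeta+\bfgamma-\bfphi$, the minimized Lagrangian value is $-\frac{\lambda}{2}\|\mathbf{r}\|^2-\langle\mathbf{r},\bfy^k\rangle+\langle\bfone,\bfbeta\rangle$, which is the Lagrange dual function to be maximized. Negating converts the maximization into the minimization of $g(\bfalpha,\bfbeta,\bfgamma)=\frac{\lambda}{2}\|\mathbf{r}\|^2+\langle\mathbf{r},\bfy^k\rangle-\langle\bfone,\bfbeta\rangle$, matching~\eqref{eqn:dual0}. I expect the only delicate point to be the bookkeeping in the pairwise block, namely tracking that the index $a$ appears both as the first and the second subscript in the $\alpha$ sums; but Proposition~\ref{pro:a} reduces this to the single clean identity $\langle\bfalpha, A^T\bfy\rangle=\langle A\bfalpha,\bfy\rangle$, and keeping the sign conventions consistent (in particular the choice $1-\sum_i y_{a:i}$ for the equality constraint) is what produces the $-\langle\bfone,\bfbeta\rangle$ term in the final objective.
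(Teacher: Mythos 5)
Your proposal is correct and follows essentially the same route as the paper's proof in Appendix~\ref{app:dual}: form the Lagrangian, force the linear $\bfz$-terms to vanish (yielding the constraint set $\calC$), minimize the strongly convex quadratic in $\bfy$ to obtain Eq.~\eqref{eqn:cp0}, and substitute back with a sign flip to get $g$. The only cosmetic difference is that you invoke Proposition~\ref{pro:a} to collapse the pairwise block into $-\langle A\bfalpha,\bfy\rangle$, whereas the paper reorders the sums directly from the definition of $A$ in Eq.~\eqref{eqn:ab}; the two are equivalent.
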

\begin{proof}
The Lagrangian associated with the primal
problem~\eqref{eqn:p} can be written as~\cite{boyd2009convex}:
\begin{alignat}{3}
\label{eqn:pd}
&\underset{\bfalpha,\bfbeta,\bfgamma}{\operatorname{max}}\ 
\underset{\bfy,\bfz}{\operatorname{min}}\ L(\bfalpha, \bfbeta, \bfgamma, \bfy,
\bfz) &&= \sum_a\sum_i \phi_{a:i}\,y_{a:i} + \sum_{a,b\ne a} \sum_i
\frac{K_{ab}}{2}z_{ab:i} + 
\frac{1}{2\lambda}\sum_a\sum_i\left(y_{a:i} - y^k_{a:i}\right)^2 \\\nonumber 
&&&+ \sum_{a,b\ne a}
\sum_i \alpha^1_{ab:i}\left(y_{a:i} - y_{b:i} - z_{ab:i}\right)
 + \sum_{a,b\ne a} \sum_i \alpha^2_{ab:i}\left(y_{b:i} - y_{a:i} - z_{ab:i}\right)
 \\\nonumber &&&+ \sum_a
\beta_a\left(1 - \sum_i y_{a:i}\right) - \sum_a\sum_i\gamma_{a:i}\,y_{a:i}\
,\\\nonumber
&\text{s.t.}\hskip0.1\linewidth \alpha^1_{ab:i}, \alpha^2_{ab:i} &&\ge 0\quad
\forall\,a\ne b\quad \forall\,i\in \calL\ ,\\\nonumber
&\hskip0.18\linewidth\gamma_{a:i} &&\ge 0\quad
\forall\,a\in\allpixels\quad\forall\,i\in \calL\ .
\end{alignat}
Note that the dual problem is obtained by minimizing the Lagrangian over the
primal variables $(\bfy, \bfz)$.
With respect to $\bfz$, the Lagrangian is linear and when
$\nabla_{\bfz}L(\bfalpha, \bfbeta, \bfgamma, \bfy,\bfz) \ne 0$, the minimization
in $\bfz$ yields $-\infty$. This situation is not useful as the dual function is
unbounded. Therefore we restrict ourselves to the case where
$\nabla_{\bfz}L(\bfalpha, \bfbeta, \bfgamma, \bfy,\bfz) = 0$.
By differentiating with respect to $\bfz$ and setting the derivatives to zero,
we obtain
\begin{equation}
\label{eqn:dz}
\alpha^1_{ab:i} + \alpha^2_{ab:i} = \frac{K_{ab}}{2} \quad
\forall\,a\ne b\quad \forall\,i\in \calL\ .
\end{equation}
The minimum of the Lagrangian with respect to $\bfy$ is attained when $\nabla_{\bfy}L(\bfalpha, \bfbeta, \bfgamma, \bfy,\bfz) = 0$.
Before differentiating with respect to $\bfy$, let us rewrite the
Lagrangian using Eq.~\eqref{eqn:dz} and reorder the terms:
\begin{align}
L(\bfalpha, \bfbeta, \bfgamma, \bfy, \bfz) &= \sum_a\sum_i (\phi_{a:i}
-\beta_a - \gamma_{a:i})\,y_{a:i}  + 
\frac{1}{2\lambda}\sum_a\sum_i\left(y_{a:i} - y^k_{a:i}\right)^2 
+ \sum_{a,b\ne a}\sum_i \left(\alpha^1_{ab:i} - \alpha^2_{ab:i}\right)\,y_{a:i}
\\\nonumber &+ \sum_{a,b\ne a}\sum_i \left(\alpha^2_{ba:i} -
\alpha^1_{ba:i}\right)\,y_{a:i}
+ \sum_a \beta_a\ .
\end{align}
Now, by differentiating with respect to $\bfy$ and setting the derivatives to
zero, we get
\begin{align}
\frac{1}{\lambda}\left(y_{a:i}-y^k_{a:i}\right) &= -\sum_{b\ne a}\left(\alpha^1_{ab:i}-
\alpha^2_{ab:i} + \alpha^2_{ba:i} - \alpha^1_{ba:i}\right) + \beta_a +
\gamma_{a:i} - \phi_{a:i}\quad \forall\,a\in \allpixels\quad \forall\,i\in\calL\
.
\end{align}
Using Eq.~\eqref{eqn:ab}, the above equation can be written in vector form as
\begin{equation}
\label{eqn:dy}
 \frac{1}{\lambda}\left(\bfy-\bfy^k\right) = A\bfalpha +
B\bfbeta+\bfgamma-\bfphi\ .
\end{equation}
This proves Eq.~\eqref{eqn:cp0}.
Now, using Eqs.~\eqref{eqn:dz} and~\eqref{eqn:dy}, the dual
problem can be written as
\begin{alignat}{3}
&\underset{\bfalpha,\bfbeta,\bfgamma}{\operatorname{min}}\ g(\bfalpha,
\bfbeta, \bfgamma) &&= \frac{\lambda}{2}\|A\bfalpha +
B\bfbeta+\bfgamma-\bfphi\|^2 + \left\langle A\bfalpha + B\bfbeta+\bfgamma-\bfphi, \bfy^k \right\rangle - \langle \bfone, \bfbeta
\rangle\ ,\\\nonumber
&\text{s.t.}\hskip0.08\linewidth \gamma_{a:i} &&\ge
0\quad\forall\,a\in\allpixels\quad \forall\,i \in \calL\ , \\\nonumber
&\hskip0.08\linewidth\bfalpha \in \calC &&= \left\{ \begin{array}{l|l}
\multirow{2}{*}{$\bfalpha$} & \alpha^1_{ab:i} + \alpha^2_{ab:i} = \frac{K_{ab}}{2},\,
\forall\,a\ne b,\, \forall\,i \in \calL\\
 & \alpha^1_{ab:i}, \alpha^2_{ab:i} \ge 0,\,\forall\,a\ne
 b,\, \forall\,i \in \calL \end{array} \right\}\ .
\end{alignat}
Here, $\bfone$ denotes the vector of all ones of appropriate dimension.
Note that we converted our problem to a minimization one by changing the sign of all the terms.
This proves Eq.~\eqref{eqn:dual0}.
\end{proof}

\subsection{Optimizing over $\bfbeta$ and $\bfgamma$}\label{app:gamma}
In this section, for a fixed value of $\bfalpha^t$, we optimize over $\bfbeta$
and $\bfgamma$. 
\begin{pro}
If $\nabla_{\bfbeta}g(\bfalpha^t, \bfbeta,\bfgamma)=0$, then 
$\bfbeta$ satisfy
\begin{equation}
\bfbeta = B^T\left(A\bfalpha^t + \bfgamma - \bfphi\right)/m\ .
\label{eqn:beta0}
\end{equation}
\end{pro}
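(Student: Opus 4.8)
The plan is to solve the first-order stationarity condition $\nabla_{\bfbeta} g = \bfzero$ in closed form, treating $\bfalpha = \bfalpha^t$ and $\bfgamma$ as fixed and exploiting the algebraic structure of $B$ recorded in Proposition~\ref{pro:b}. First I would differentiate the three terms of the dual objective $g$ in~\eqref{eqn:dual0} with respect to $\bfbeta$. Introducing the shorthand $\mathbf{u} = A\bfalpha^t + B\bfbeta + \bfgamma - \bfphi$ and noting that $\partial\mathbf{u}/\partial\bfbeta = B$, the chain rule gives that the quadratic term $\frac{\lambda}{2}\|\mathbf{u}\|^2$ contributes $\lambda B^T\mathbf{u}$, the inner-product term $\langle \mathbf{u}, \bfy^k\rangle$ contributes $B^T\bfy^k$, and the linear term $-\langle \bfone, \bfbeta\rangle$ contributes $-\bfone$. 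Setting the sum to zero yields the stationarity condition $\lambda B^T\mathbf{u} + B^T\bfy^k - \bfone = \bfzero$.

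The step I expect to carry the argument is the cancellation of the two affine terms, which relies on $\bfy^k$ being feasible. Since $\bfy^k \in \calM$, it satisfies $\sum_i y^k_{a:i} = 1$ for every pixel $a$; by part~1 of Proposition~\ref{pro:b}, multiplication by $B^T$ simply sums over labels, so $(B^T\bfy^k)_a = \sum_i y^k_{a:i} = 1$ for all $a$, i.e. $B^T\bfy^k = \bfone$. Consequently $B^T\bfy^k - \bfone = \bfzero$, and because $\lambda > 0$ the condition collapses to $B^T\mathbf{u} = \bfzero$.

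Finally I would substitute back $\mathbf{u} = A\bfalpha^t + B\bfbeta + \bfgamma - \bfphi$ and invoke part~2 of Proposition~\ref{pro:b}, namely $B^TB = mI$, to isolate $\bfbeta$. Since $B^T\mathbf{u} = B^T(A\bfalpha^t + \bfgamma - \bfphi) + m\bfbeta$, setting this to zero and dividing by $m$ produces the claimed identity~\eqref{eqn:beta0}, so that $\bfbeta$ is expressed purely in terms of the currently fixed variables $\bfalpha^t$ and $\bfgamma$. The computation is otherwise routine; the only delicate point is careful sign bookkeeping when rearranging the stationarity equation, together with the recognition that feasibility of $\bfy^k$ is exactly what makes the $B^T\bfy^k$ and $\bfone$ terms cancel.
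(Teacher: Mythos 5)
Your approach is exactly the paper's: differentiate $g$ in $\bfbeta$, use $B^T\bfy^k=\bfone$ (feasibility of $\bfy^k$) and $B^TB=mI$ from Proposition~\ref{pro:b}, and solve the resulting linear equation; the paper's own proof is just a terse statement of these same two identities. The one thing to fix is the sign in your last step, which you yourself flag as the delicate point and then get inconsistent: from
\begin{equation*}
B^T\bigl(A\bfalpha^t+\bfgamma-\bfphi\bigr)+m\bfbeta=\bfzero
\end{equation*}
you obtain $\bfbeta=-B^T\bigl(A\bfalpha^t+\bfgamma-\bfphi\bigr)/m$, which is the \emph{negative} of the displayed identity, so it does not literally ``produce the claimed identity'' as you assert. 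Your stationarity computation is the correct one --- indeed the subsequent derivation in Appendix~\ref{app:gamma} requires $A\bfalpha^t+B\bfbeta+\bfgamma-\bfphi=D\bigl(A\bfalpha^t+\bfgamma-\bfphi\bigr)$ with $D=I-BB^T/m$, which only holds with the minus sign --- so the displayed proposition appears to have dropped a sign; you should either state the result with the minus sign or explicitly note the discrepancy rather than paper over it.
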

\begin{proof}
By differentiating the dual objective $g$ with respect to $\bfbeta$ and setting
the derivatives to zero, we obtain the above equation. Note that, from Proposition~\ref{pro:b},
$B^T\bfy^k = \bfone$ since $\bfy^k \in \calM$ (defined in Eq.~\eqref{eqn:lp}),
and $B^TB = mI$. Both these identities are used to simplify the above equation.
\end{proof}

Let us now define a matrix $D$ and analyze its properties.
This will be useful to simplify the optimization over $\bfgamma$.

\begin{dfn}
Let $D\in\R^{nm\times nm}$ be a matrix that satisfy 
\begin{equation}
D=I-\frac{BB^T}{m}\ ,
\end{equation}
where $B$ is defined in Eq.~\eqref{eqn:ab}.
\end{dfn}

\begin{pro}
\label{pro:d}
The matrix $D$ satisfies the following properties:
\begin{enumerate}
  \item $D$ is block diagonal, with each block matrix $D_a = I - \bfone/m$, where
  $I\in\R^{m\times m}$ is the identity matrix and $\bfone\in\R^{m\times m}$ is
  the matrix of all ones.
  \item $D^TD = D\ .$
\end{enumerate}
\end{pro}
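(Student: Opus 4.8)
The plan is to read off both properties directly from the block structure of $BB^T$ already established in Proposition~\ref{pro:b}, rather than manipulating $D$ entrywise. For the first property, I would note that the ambient identity $I\in\R^{nm\times nm}$ is itself block diagonal, with each diagonal block being the $m\times m$ identity, and that Proposition~\ref{pro:b}(3) states $BB^T$ is block diagonal with each block equal to $\bfone\in\R^{m\times m}$. Since $D = I - BB^T/m$ is a difference of two matrices sharing the same block-diagonal partition (one block per pixel $a$), $D$ inherits that block structure, and its $a$-th diagonal block is simply $I - \bfone/m$. This settles item~1 with no real computation.

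For the second property, the first observation is that $D$ is symmetric: both $I$ and $BB^T$ are symmetric, so $D^T = D$, and therefore $D^TD = D^2$. It then suffices to show that $D$ is idempotent, i.e.\ $D^2 = D$. I would expand
\begin{equation}
D^2 = \left(I - \tfrac{1}{m}BB^T\right)^2 = I - \tfrac{2}{m}BB^T + \tfrac{1}{m^2}(BB^T)^2\ .
\end{equation}
The key step is to collapse the quadratic term using the identity $B^TB = mI$ from Proposition~\ref{pro:b}(2): writing $(BB^T)^2 = B\,(B^TB)\,B^T = m\,BB^T$, the last term becomes $\tfrac{1}{m^2}\,m\,BB^T = \tfrac{1}{m}BB^T$. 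Substituting back gives $D^2 = I - \tfrac{1}{m}BB^T = D$, which combined with symmetry yields $D^TD = D$.

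There is no genuinely hard step here; the only thing to be careful about is invoking $B^TB = mI$ correctly when simplifying $(BB^T)^2$, since that single fact is what makes the projection idempotent. As a sanity check, the same conclusion can be verified block-wise using item~1: each block satisfies $(I-\bfone/m)^2 = I - \tfrac{2}{m}\bfone + \tfrac{1}{m^2}\bfone^2 = I - \bfone/m$, because $\bfone^2 = m\,\bfone$ for the $m\times m$ all-ones matrix, and block-diagonal matrices multiply block-by-block, so idempotency of every block gives idempotency of $D$. Either route reduces the proposition to the structural facts about $B$ proved earlier, so I would present the global argument as the main line and mention the block-wise computation as corroboration.
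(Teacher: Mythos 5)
Your proposal is correct and follows essentially the same route as the paper: item~1 is read off from the block structure of $BB^T$ established in Proposition~\ref{pro:b}, and item~2 is the ``simple matrix algebra'' the paper leaves implicit, which you correctly supply via symmetry of $D$ and the identity $(BB^T)^2 = B(B^TB)B^T = m\,BB^T$. The block-wise check using $\bfone^2 = m\,\bfone$ is a fine corroboration but adds nothing beyond the paper's argument.
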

\begin{proof}
From Proposition~\ref{pro:b}, the matrix $BB^T$ is block
diagonal with each block $\left(BB^T\right)_a = \bfone$.
Therefore $D$ is block diagonal with each block
matrix $D_a = I - \bfone/m$. Note that the block matrices $D_a$ are identical. 
The second property can be proved using simple matrix algebra.
\end{proof}

Now we turn to the optimization over $\bfgamma$. 

\begin{pro}
The optimization over $\bfgamma$ decomposes over pixels where for a pixel $a$,
this QP has the form
\begin{alignat}{3}
\label{eqn:qpgammas0}
&\underset{\bfgamma_a}{\operatorname{\min}}\quad 
&&\frac{1}{2}\bfgamma^T_aQ\bfgamma_a + \left\langle \bfgamma_a, 
Q\left((A\bfalpha^t)_a-\bfphi_a\right) + \bfy^k_a \right\rangle \
,\\\nonumber 
&\text{s.t.}\quad &&\bfgamma_a\ge \bfzero\ .
\end{alignat}
Here, $\bfgamma_a$ denotes the vector $\{\gamma_{a:i}\mid
i\in\calL\}$ and $Q= \lambda\left(I-\bfone/m\right)\in\R^{m\times m}$, with $I$
the identity matrix and $\bfone$ the matrix of all ones.
\end{pro}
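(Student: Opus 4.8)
The plan is to substitute the optimal $\bfbeta$ from Eq.~\eqref{eqn:beta0} into the dual objective $g$ and show that the resulting function of $\bfgamma$ (with $\bfalpha^t$ fixed) separates across pixels. Write $\mathbf{u} = A\bfalpha^t + \bfgamma - \bfphi$ for brevity, so that the residual appearing in $g$ is $A\bfalpha^t + B\bfbeta + \bfgamma - \bfphi = \mathbf{u} + B\bfbeta$. First I would plug in the optimal $\bfbeta$ expressed through $B^T\mathbf{u}/m$; using $B^TB = mI$ from Proposition~\ref{pro:b}, this residual collapses to $D\mathbf{u}$, where $D = I - BB^T/m$ is precisely the matrix just defined. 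Thus the entire dependence of $g$ on $\bfgamma$ is routed through $D\mathbf{u}$.

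Next I would simplify the three terms of $g$ in turn. For the quadratic term, $\frac{\lambda}{2}\|D\mathbf{u}\|^2 = \frac{\lambda}{2}\mathbf{u}^T D^T D\mathbf{u} = \frac{\lambda}{2}\mathbf{u}^T D\mathbf{u}$ by the idempotence $D^TD = D$ of Proposition~\ref{pro:d}. Since $D$ is block diagonal with identical blocks $D_a = I - \bfone/m$, this splits as $\sum_a \frac{\lambda}{2}\mathbf{u}_a^T D_a \mathbf{u}_a$, and expanding $\mathbf{u}_a = (A\bfalpha^t)_a - \bfphi_a + \bfgamma_a$ (with $D_a$ symmetric) produces the quadratic $\frac{1}{2}\bfgamma_a^T Q\bfgamma_a$ and the cross term $\langle\bfgamma_a, Q((A\bfalpha^t)_a - \bfphi_a)\rangle$, where $Q = \lambda D_a$, plus a $\bfgamma$-independent constant that I would discard.

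The remaining two terms are the delicate part, and I expect the cancellation here to be the main obstacle. For the linear term $\langle D\mathbf{u}, \bfy^k\rangle = \langle\mathbf{u}, D\bfy^k\rangle$ (using symmetry of $D$), I would invoke $B^T\bfy^k = \bfone$, which holds because $\bfy^k\in\calM$ (Proposition~\ref{pro:b}), to obtain $D\bfy^k = \bfy^k - \bfone/m$; its $\bfgamma$-dependent part is then $\langle\bfgamma,\bfy^k\rangle - \frac{1}{m}\langle\bfgamma,\bfone\rangle$. For the term $-\langle\bfone,\bfbeta\rangle$, substituting the optimal $\bfbeta$ and using $B\bfone = \bfone$ leaves a $\bfgamma$-dependent contribution equal to $+\frac{1}{m}\langle\bfone,\bfgamma\rangle$. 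The key observation is that these two $\frac{1}{m}\langle\bfone,\bfgamma\rangle$ pieces cancel exactly, so only $\langle\bfgamma,\bfy^k\rangle = \sum_a\langle\bfgamma_a,\bfy^k_a\rangle$ survives.

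Finally I would collect the per-pixel contributions and note that the constraint $\bfgamma\ge\bfzero$ decouples as $\bfgamma_a\ge\bfzero$ for each $a$, so minimizing $g$ over $\bfgamma$ is equivalent to solving, independently for every pixel $a$, the QP of Eq.~\eqref{eqn:qpgammas0} with objective $\frac{1}{2}\bfgamma_a^T Q\bfgamma_a + \langle\bfgamma_a, Q((A\bfalpha^t)_a - \bfphi_a) + \bfy^k_a\rangle$ and $Q = \lambda(I - \bfone/m)$. The only genuine care required is tracking the $\bfgamma$-independent constants, which may be dropped, and pinning down the signs of the $\langle\bfone,\bfgamma\rangle$ terms so that the cancellation goes through; everything else is routine application of Propositions~\ref{pro:b} and~\ref{pro:d}.
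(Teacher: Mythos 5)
Your proposal is correct and follows essentially the same route as the paper's proof in Appendix~\ref{app:gamma}: substitute the optimal $\bfbeta$ so the residual collapses to $D(A\bfalpha^t+\bfgamma-\bfphi)$, use $D^TD=D$, $B^T\bfy^k=\bfone$, and the block-diagonal structure of $D$ to reduce to the per-pixel QPs with $Q=\lambda D_a$. The explicit cancellation of the two $\frac{1}{m}\langle\bfone,\bfgamma\rangle$ contributions that you highlight is exactly the simplification the paper performs (somewhat tersely) in passing from its intermediate objective to Eq.~\eqref{eqn:gamma1}.
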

\begin{proof}
By substituting $\bfbeta$ in the dual
problem~(\ref{eqn:dual0}) with Eq.~\eqref{eqn:beta0}, the optimization problem
over $\bfgamma$ takes the following form:
\begin{alignat}{3}
&\underset{\bfgamma}{\operatorname{min}}\  g(\bfalpha^t, \bfgamma) &&=
\frac{\lambda}{2}\|D(A\bfalpha^t +\bfgamma-\bfphi)\|^2 + \left\langle D(A\bfalpha^t + \bfgamma-\bfphi), \bfy^k
\right\rangle + \frac{1}{m} \left\langle \bfone, A\bfalpha^t + \bfgamma-\bfphi
\right\rangle\ ,\\\nonumber
&\text{s.t.}\hskip0.07\linewidth \bfgamma &&\ge \bfzero \ ,
\end{alignat}
where $D=I-\frac{BB^T}{m}$.

Note that, since $\bfy^k \in \calM$, from
Proposition~\ref{pro:b}, $B^T\bfy^k = \bfone$. Using this fact, the identity
$D^TD = D$, and by removing the constant terms, the optimization problem over $\bfgamma$ can be simplified:
\begin{alignat}{3}
\label{eqn:gamma1}
&\underset{\bfgamma}{\operatorname{min}}\ g(\bfalpha^t, \bfgamma) &&=
\frac{\lambda}{2}\bfgamma^TD\bfgamma + \langle \bfgamma, \lambda D(A\bfalpha^t-\bfphi) + \bfy^k \rangle \
,\\\nonumber 
&\text{s.t.}\hskip0.07\linewidth \bfgamma &&\ge \bfzero\ .
\end{alignat}
Furthermore, since $D$ is block diagonal from Proposition~\ref{pro:d}, we obtain
\begin{equation}
\label{eqn:bgamma}
\underset{\bfgamma\ge \bfzero}{\operatorname{min}}\  g(\bfalpha^t, \bfgamma)
= \sum_a\underset{\bfgamma_a\ge \bfzero}{\operatorname{min}} \ 
\frac{\lambda}{2}\bfgamma^T_aD_a\bfgamma_a + \langle \bfgamma_a, \lambda
D_a\left((A\bfalpha^t)_a-\bfphi_a\right) + \bfy^k_a \rangle \ ,
\end{equation}
where the notation $\bfgamma_a$ denotes the vector $\{\gamma_{a:i}\mid
i\in\calL\}$. By substituting $Q = \lambda\, D_a$, the QP associated with each
pixel $a$ can be written as
\begin{alignat}{3}
&\underset{\bfgamma_a\ge \bfzero}{\operatorname{\min}}\quad 
&&\frac{1}{2}\bfgamma^T_aQ\bfgamma_a + \left\langle \bfgamma_a, 
Q\left((A\bfalpha^t)_a-\bfphi_a\right) + \bfy^k_a \right\rangle \ .
\end{alignat}
\end{proof}

Each of these $m$ dimensional quadratic programs (QP) are
optimized using the iterative algorithm of~\cite{xiao2014multiplicative}. Before
we give the update equation, let us first write our problem in the form used
in~\cite{xiao2014multiplicative}. For a given $a\in\allpixels$, this yields
\begin{equation}
\underset{\bfgamma_a\ge \bfzero}{\operatorname{min}}\
\frac{1}{2}\bfgamma_a^TQ\bfgamma_a - \left\langle \bfgamma_a, \bfh_a
\right\rangle\ ,
\end{equation}
where 
\begin{align}
Q &= \lambda \left(I - \frac{\bfone}{m}\right)\ ,\\\nonumber
\bfh_a &= -Q\left((A\bfalpha^t)_a-\bfphi_a\right) - \bfy^k_a\ .
\end{align}
Hence, at
each iteration, the element-wise update equation has the following form:
\begin{equation}
\gamma_{a:i} =
\gamma_{a:i}\left[\frac{2\,(Q^-\bfgamma_a)_i + h^+_{a:i} +
c}{(|Q|\bfgamma_a)_i + h^-_{a:i} + c}\right]\ ,
\end{equation}
where $Q^- = \max(-Q, 0)$, $|Q| = \text{abs}(Q)$, $h^+_{a:i} = \max(h_{a:i}, 0)$
and $h^-_{a:i} = \max(-h_{a:i}, 0)$ and $0 < c \ll 1$. These $\max$ and
$\text{abs}$ operations are element-wise. We refer the interested reader
to~\cite{xiao2014multiplicative} for more detail on this update rule.

Note that, even though the matrix $Q$ has $m^2$ elements, the multiplication by
$Q$ can be performed in $\calO(m)$. In
particular, the multiplication by $Q$ can be decoupled into a multiplication by the
identity matrix and a matrix of all ones, both of which can be performed in
linear time. Similar observations can be made for the matrices $Q^-$ and $|Q|$.
Hence, the time complexity of the above update is $\calO(m)$.
Once the optimal $\bfgamma$ for a given $\bfalpha^t$ is computed, the
corresponding optimal $\bfbeta$ is given by Eq.~(\ref{eqn:beta0}).

\subsection{Conditional gradient computation}\label{app:cgrad}
\begin{pro}
The conditional gradient $\bfs$ satisfy
\begin{equation}
\label{eqn:cgrad0}
\left(A\bfs\right)_{a:i} = -\sum_b \left(K_{ab} \I[\ty^t_{a:i} \ge
\ty^t_{b:i}] - K_{ab} \I[\ty^t_{a:i} \le \ty^t_{b:i}]\right)\ ,
\end{equation}
where $\tbfy^t = \lambda\left(A\bfalpha^t+B\bfbeta^t+\bfgamma^t-\bfphi\right) +
\bfy^k$ using Eq.~\eqref{eqn:cp0}.
\end{pro}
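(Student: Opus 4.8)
The plan is to reduce the conditional-gradient computation to a separable linear program over $\calC$ whose closed-form vertex solution produces the claimed indicator expression.

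First I would compute $\nabla_\bfalpha g$ from the dual objective in Eq.~\eqref{eqn:dual0}. Writing $w = A\bfalpha + B\bfbeta + \bfgamma - \bfphi$, only the quadratic term $\tfrac{\lambda}{2}\|w\|^2$ and the linear term $\langle w, \bfy^k\rangle$ depend on $\bfalpha$ (the term $-\langle \bfone,\bfbeta\rangle$ does not), so differentiating directly gives
\[
\nabla_\bfalpha g(\bfalpha^t,\bfbeta^t,\bfgamma^t) = A^T\left(\lambda\left(A\bfalpha^t+B\bfbeta^t+\bfgamma^t-\bfphi\right)+\bfy^k\right) = A^T\tbfy^t,
\]
where the last equality is just the definition of $\tbfy^t$ from Eq.~\eqref{eqn:cp0}. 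By Proposition~\ref{pro:a}, this gradient has the two coordinates $(A^T\tbfy^t)_{ab:i^1} = \ty^t_{b:i} - \ty^t_{a:i}$ and $(A^T\tbfy^t)_{ab:i^2} = \ty^t_{a:i} - \ty^t_{b:i}$.

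Next I would exploit the structure of $\calC$. The linearized objective $\langle \hbfs, A^T\tbfy^t\rangle$ is a sum of terms $s^1_{ab:i}(\ty^t_{b:i}-\ty^t_{a:i}) + s^2_{ab:i}(\ty^t_{a:i}-\ty^t_{b:i})$, and the constraints defining $\calC$ couple only $s^1_{ab:i}$ and $s^2_{ab:i}$, through $s^1_{ab:i}+s^2_{ab:i}=K_{ab}/2$ with both nonnegative. Hence the minimization decouples into independent problems, one per ordered pair $(a,b)$ with $a\ne b$ and label $i$, each over a line segment. Since the objective is linear, each minimizer is a vertex: all mass $K_{ab}/2$ is placed on the variable with the smaller coefficient. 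This gives $s^1_{ab:i}=K_{ab}/2,\ s^2_{ab:i}=0$ when $\ty^t_{a:i}\ge\ty^t_{b:i}$, and $s^1_{ab:i}=0,\ s^2_{ab:i}=K_{ab}/2$ when $\ty^t_{a:i}\le\ty^t_{b:i}$ (ties broken arbitrarily), so in indicator form $s^1_{ab:i}-s^2_{ab:i} = \tfrac{K_{ab}}{2}\left(\I[\ty^t_{a:i}\ge\ty^t_{b:i}] - \I[\ty^t_{a:i}\le\ty^t_{b:i}]\right)$.

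Finally I would substitute $\bfs$ into the definition of $A\bfs$ from Eq.~\eqref{eqn:ab}. The expression for $(A\bfs)_{a:i}$ combines the contribution from the pair $(a,b)$, via $s^1_{ab:i}-s^2_{ab:i}$, with that from the reversed pair $(b,a)$, via $s^2_{ba:i}-s^1_{ba:i}$; rewriting the latter in indicator form and using the symmetry $K_{ba}=K_{ab}$ of the Gaussian kernel, the two contributions coincide and add, yielding
\[
(A\bfs)_{a:i} = -\sum_{b\ne a} K_{ab}\left(\I[\ty^t_{a:i}\ge\ty^t_{b:i}] - \I[\ty^t_{a:i}\le\ty^t_{b:i}]\right),
\]
which is exactly Eq.~\eqref{eqn:cgrad0}. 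I expect the only delicate point to be the bookkeeping in this last step — correctly identifying the optimal vertex for the reversed pair $(b,a)$ and verifying that $K_{ab}=K_{ba}$ merges the $(a,b)$ and $(b,a)$ terms into a single factor $K_{ab}$ — together with the harmless handling of ties $\ty^t_{a:i}=\ty^t_{b:i}$, for which any vertex choice yields the same value of $(A\bfs)_{a:i}$.
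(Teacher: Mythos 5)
Your proposal is correct and follows essentially the same route as the paper's proof: compute $\nabla_{\bfalpha}g = A^T\tbfy^t$, use the separability of $\calC$ into segments $\calC_{ab:i}$ to read off the vertex minimizer of each linear subproblem via Proposition~\ref{pro:a}, and then assemble $(A\bfs)_{a:i}$ using the symmetry $K_{ab}=K_{ba}$. The only cosmetic difference is that the paper additionally remarks that the sum over $b\ne a$ can be extended to all $b$ since the $b=a$ term vanishes, which reconciles your $\sum_{b\ne a}$ with the $\sum_b$ in the statement.
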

\begin{proof}
The conditional gradient with respect to $\bfalpha$ is obtained by solving the
following linearization problem:
\begin{equation}
\bfs = \underset{\hbfs\in\calC} {\operatorname{argmin}}\,
 \left\langle \hbfs,\nabla_{\bfalpha} g(\bfalpha^t, \bfbeta^t, \bfgamma^t)
 \right\rangle\ ,
\end{equation}
where 
\begin{equation}
\nabla_{\bfalpha} g(\bfalpha^t, \bfbeta^t,\bfgamma^t) = A^T\tbfy^t\ ,
\end{equation}
with $\tbfy^t = \lambda\left(A\bfalpha^t+B\bfbeta^t+\bfgamma^t-\bfphi\right) +
\bfy^k$ using Eq.~(\ref{eqn:cp0}). 

Note that the feasible set
$\calC$ is separable, \ie, it can be written as 
$\calC = \prod_{a,\, b\ne a,\,i \in \calL}\, \calC_{ab:i}$, with\\
\mbox{$\calC_{ab:i} = \left\{(\alpha^1_{ab:i},
\alpha^2_{ab:i})\mid \alpha^1_{ab:i} + \alpha^2_{ab:i} = K_{ab}/2,
\alpha^1_{ab:i}, \alpha^2_{ab:i} \ge 0\right\}$}.
Therefore, the conditional gradient can be computed separately, corresponding to
each set $\calC_{ab:i}$. This yields
\begin{alignat}{3}
\label{eqn:s0}
&\underset{\hs^1_{ab:i}, \hs^2_{ab:i}}{\operatorname{\min}}\quad
&&\hs^1_{ab:i}\nabla_{\alpha^1_{ab:i}}g(\bfalpha^t, \bfbeta^t,\bfgamma^t) +
\hs^2_{ab:i}\nabla_{\alpha^2_{ab:i}}g(\bfalpha^t, \bfbeta^t,\bfgamma^t)\ ,\\\nonumber 
&\quad\text{s.t.}\quad &&\hs^1_{ab:i} + \hs^2_{ab:i} = K_{ab}/2\ ,\\\nonumber
&&&\hs^1_{ab:i}, \hs^2_{ab:i} \ge 0\ ,
\end{alignat}
where, using Proposition~\ref{pro:a}, the gradients can be written as:
\begin{align}
\nabla_{\alpha^1_{ab:i}}g(\bfalpha^t, \bfbeta^t,\bfgamma^t) &= \ty^t_{b:i} -
\ty^t_{a:i}\ ,\\\nonumber
\nabla_{\alpha^2_{ab:i}}g(\bfalpha^t, \bfbeta^t,\bfgamma^t) &= \ty^t_{a:i} -
\ty^t_{b:i}\ .
\end{align}
Hence, the minimum is attained at:
\begin{align}
s^1_{ab:i} &= \left\{\begin{array}{ll}K_{ab}/2 & \mbox{if $\ty^t_{a:i}\ge
\ty^t_{b:i}$}\\ 0 & \mbox{otherwise}\ ,\end{array}\right.\\\nonumber
s^2_{ab:i} &= \left\{\begin{array}{ll}K_{ab}/2 & \mbox{if $\ty^t_{a:i}\le
\ty^t_{b:i}$}\\ 0 & \mbox{otherwise}\ .\end{array}\right.
\end{align}
Now, from Eq.~\eqref{eqn:ab}, $A\bfs$ takes the following form:
\begin{align}
\left(A\bfs\right)_{a:i} &= -\sum_{b\ne a}\left(\frac{K_{ab}}{2}\I[\ty^t_{a:i}\ge
\ty^t_{b:i}] - \frac{K_{ab}}{2}\I[\ty^t_{a:i}\le \ty^t_{b:i}] +
\frac{K_{ba}}{2}\I[\ty^t_{b:i}\le \ty^t_{a:i}] -
\frac{K_{ba}}{2}\I[\ty^t_{b:i}\ge \ty^t_{a:i}]\right)\ ,\\\nonumber &= -\sum_b
\left(K_{ab} \I[\ty^t_{a:i} \ge \ty^t_{b:i}] - K_{ab} \I[\ty^t_{a:i} \le
\ty^t_{b:i}]\right)\ .
\end{align}
Here, we used the symmetry of the kernel matrix $K$ to obtain this result. Note
that the second equation is a summation over $b\in \allpixels$. This is true due
to the identity $K_{aa}\I[\ty^t_{a:i}\ge \ty^t_{a:i}] - K_{aa}\I[\ty^t_{a:i}\le
\ty^t_{a:i}] = 0$ when $b = a$.
\end{proof}

\subsection{Optimal step size}\label{app:step}
\begin{pro}
The optimal step size $\delta$ satisfy
\begin{equation}
\delta = P_{[0,1]}\left(\frac{\langle A\bfalpha^t - A\bfs^t,
\tbfy^t\rangle}{\lambda\|A\bfalpha^t - A\bfs^t\|^2}\right)\ .
\end{equation}
Here, $P_{[0,1]}$ denotes the projection to the interval $[0,1]$, that is,
clipping the value to lie in $[0,1]$.
\end{pro}
\begin{proof}
The optimal step size $\delta$ gives the maximum
decrease in the objective function $g$ given the descent direction $\bfs^t$. 
This can be formulated as the following optimization problem:
\begin{align}
\underset{\delta} {\operatorname{min}}\quad &
\frac{\lambda}{2}\left\|A\bfalpha^t + \delta\left(A\bfs^t-A\bfalpha^t\right)+
B\bfbeta^t +\bfgamma^t-\bfphi\right\|^2 + \left\langle A\bfalpha^t +
\delta\left(A\bfs^t-A\bfalpha^t\right) +B\bfbeta^t + \bfgamma^t-\bfphi), \bfy^k
\right\rangle - \left\langle \bfone, \bfbeta \right\rangle\
,\\\nonumber \text{s.t.}\quad \delta &\in[0,1] \ .
\end{align}
Note that the above function is optimized over the scalar variable $\delta$ and
the minimum is attained when the derivative is zero. Hence, setting the
derivative to zero, we have
\begin{align}
0 &= \lambda\left\langle \delta\left(A\bfs^t - A\bfalpha^t\right) + A\bfalpha^t +
B\bfbeta^t + \bfgamma^t-\bfphi, A\bfs^t - A\bfalpha^t \right\rangle + \left\langle
\bfy^k, A\bfs^t - A\bfalpha^t\right\rangle\ ,\\\nonumber 
\delta &= \frac{\langle A\bfalpha^t - A\bfs^t, \lambda\left(A\bfalpha^t +
B\bfbeta^t + \bfgamma^t-\bfphi\right) + \bfy^k\rangle}{\lambda\|A\bfalpha^t -
A\bfs^t\|^2}\ ,\\\nonumber
\delta &= \frac{\langle A\bfalpha^t - A\bfs^t,
\tbfy^t\rangle}{\lambda\|A\bfalpha^t - A\bfs^t\|^2}\ .
\end{align}
In fact, if the optimal $\delta$ is out of the interval $[0,1]$, the value is
simply truncated to be in $[0,1]$.
\end{proof}

\section{Fast conditional gradient computation - Supplementary
material}\label{app:ph} 
In this section, we give the technical details of the original filtering
algorithm and then our modified filtering algorithm. To this end, we consider
the following computation
\begin{equation}
\forall\,a\in\allpixels, \quad v'_a = \sum_b k(\bff_a, \bff_b)\,v_b\,\I[y_a \ge
y_b]\ ,
\label{eqn:oph0}
\end{equation}
with $y_a, y_b \in [0,1]$ for all $a,b\in\allpixels$. Note that the above
equation is the same as Eq.~(\myref{14}), except for the multiplication by
the scalar $v_b$. In Section~\myref{4}, the value $v_b$ was assumed to be 1, but
here we consider the general case where $v_b\in \R$.

\subsection{Original filtering algorithm}\label{app:phold}
Let us first introduce some notations below. We denote the set of lattice points
of the original permutohedral lattice with $\calP$ and
the neighbouring feature points of lattice point $l$ by $N(l)$.
This neighbourhood is shown in Fig.~\myref{1} in the main paper. Furthermore, we denote the
neighbouring lattice points of a feature point $a$ by $\barN(a)$. In addition,
the barycentric weight between the lattice point $l$ and feature point $b$ is
denoted with $w_{lb}$. Furthermore, the value at feature point $b$ is denoted
by $v_b$ and the value at lattice point $l$ is denoted by $\barv_{l}$. 
Finally, the set of feature point scores is denoted by $\calY = \{y_b\mid b \in \allpixels\}$, their set of values is denoted by $\calV = \{v_b\mid b \in \allpixels\}$ and the set of lattice point values is denoted by $\barcalV = \{\barv_l\mid l \in \calP\}$.
The
pseudocode of the algorithm is given in Algorithm~\ref{alg:phold}.
\begin{algorithm}[H]
\caption{Original filtering algorithm~\cite{adams2010fast}}
\label{alg:phold}
\begin{algorithmic}
\Require Permutohedral lattice $\calP$, set of feature point values $\calV$
\State $\calV' \gets \bfzero\quad\barcalV \gets \bfzero\quad \barcalV' \gets \bfzero$\Comment{Initialization}
\ForAll{$l \in \calP$}\Comment{Splatting}
\ForAll{$b \in N(l)$}
\State $\barv_l \gets \barv_l + w_{lb}\,v_b$
\EndFor
\EndFor

\State $\barcalV' \gets k \otimes \barV$ \Comment{Blurring}

\ForAll{$a \in \allpixels$}\Comment{Slicing}
\ForAll{$l \in \barN(a)$}
\State $v'_a \gets v'_a + w_{la}\,\barv'_l$
\EndFor
\EndFor

\end{algorithmic}
\end{algorithm}

\subsection{Modified filtering algorithm}\label{app:phnew}
As mentioned in the main paper, the interval $[0,1]$ is discretized into $H$
 bins. Note that each bin $h\in \{0\ldots H-1\}$ is associated with an
interval which is identified as:
$\left[\frac{h}{H-1},\frac{h+1}{H-1}\right)$. Note that, the last bin (with bin id $H-1$) is associated with the interval $[1,\cdot)$. Since $y_b \le 1$, this bin contains the feature points whose scores are exactly $1$.
Given the score $y_b$ of the feature point $b$, its bin/level can be identified as
\begin{equation}
h_b = \left\lfloor y_b * (H-1) \right\rfloor\ ,
\end{equation}
where $\lfloor \cdot \rfloor$ denotes the standard floor function.

Furthermore, during splatting, the values $v_b$ are accumulated to the
neighbouring lattice point only if the lattice point is above or equal to the
feature point level. We denote the value at lattice point $l$ at level $h$ by $\barv_{l:h}$.
Formally, the barycentric interpolation at lattice point $l$ at level $h$ can be written as
\begin{equation}
\label{eqn:splat}
\barv_{l:h} = \sum_{\substack{b\in N(l)\\ h_b \le h}} w_{lb}\,v_b\ .
\end{equation}
Then, blurring is performed
independently at each discrete level $h$. Finally, during slicing, the
resulting values are interpolated at the feature point level.
Our modified algorithm is given in Algorithm~\ref{alg:phnew}. In this algorithm, we denote the set of values corresponding to all the lattice points at level $h$ as $\barcalV_h = \{v_{l:h}\mid l \in \calP\}$.

\begin{algorithm}[H]
\caption{Modified filtering algorithm}
\label{alg:phnew}
\begin{algorithmic}
\Require Permutohedral lattice $\calP$, set of feature point values $\calV$, discrete levels $H$, set of scores $\calY$
\State $\calV' \gets \bfzero\quad\barcalV \gets \bfzero\quad \barcalV' \gets \bfzero$\Comment{Initialization}
\ForAll{$l \in \calP$}\Comment{Splatting}
\ForAll{$b \in N(l)$}
\State $h_b \gets \left\lfloor y_b * (H-1) \right\rfloor$
\ForAll{$h \in \{h_b\ldots H-1\}$}\Comment{Splat at the feature point level and above}
\State $\barv_{l:h} \gets \barv_{l:h} + w_{lb}\,v_b$
\EndFor
\EndFor
\EndFor

\LineForAll{$h \in \{0\ldots H-1\}$} {$\barcalV'_h \gets k \otimes
\barcalV_h$}\Comment{Blurring at each level independently}

\ForAll{$a \in \allpixels$}\Comment{Slicing}
\State $h_a \gets \left\lfloor y_a * (H-1) \right\rfloor$
\ForAll{$l \in \barN(a)$}
\State $v'_a \gets v'_a + w_{la}\,\barv'_{l:h_a}$\Comment{Slice at the feature point level}
\EndFor
\EndFor

\end{algorithmic}
\end{algorithm}
Note that the above algorithm is given for the constraint $\I[y_a\ge y_b]$
(Eq.~(\myref{14})). However, it is fairly easy to modify it for the $\I[y_a\le
y_b]$ constraint. In particular, one needs to change the interval identified by
the bin $h$ to: $\left(\frac{h-1}{H-1},\frac{h}{H-1}\right]$. Using this fact,
one can easily derive the splatting and slicing equations for the $\I[y_a\le
y_b]$ constraint.
The algorithm given above introduces an approximation to the gradient computation that depends on the number of discrete bins $H$. However, this approximation can be eliminated by using a dynamic data structure which we briefly explain in the next section.

\subsubsection{Adaptive version of the modified filtering
algorithm}\label{app:dynph}
Here, we briefly explain the adaptive version of our modified algorithm, which replaces
the fixed discretization with a dynamic data structure.
Effectively, discretization boils down to storing a vector of length $H$ at each
lattice point.
Instead of such a fixed-length vector, one can use a dynamic data structure that
grows with the number of different scores encountered at each lattice
point in the splatting and blurring steps.
In the worst case, \ie, when all the neighbouring feature points have different
scores, the maximum number of values to store at a lattice point is
\begin{equation}
H = \max_l |N^2(l)|\ ,
\end{equation} 
where $N^2(l)$ denotes the union of neighbourhoods of the lattice point $l$ and
its neighbouring lattice points (the vertices of the shaded hexagon in
Fig.~\myref{1} in the main paper).
In our experiments, we observed that $|N^2(l)|$ is usually less than 100, with
an average around 10. Empirically, however, we found this dynamic version to be
slightly slower than the static one. We conjecture that this is due to the
static version benefitting from better compiler optimization. Furthermore, both the versions obtained results with similar precesion and therefore we used the static one for all our experiments.

\section{Additional experiments}
Let us first explain the pixel compatibility function used in the experiments.
We then turn to additional experiments.

\subsection{Pixel compatibility function used in the
experiments}\label{app:param} 
As mentioned in the main paper, our algorithm is
applicable to any pixel compatibility function that is composed of a mixture of Gaussian kernels. In all
our experiments, we used two kernels, namely spatial kernel and bilateral
kernel, similar to~\cite{desmaison2016efficient,koltun2011efficient}. Our pixel compatibility function can be written as
\begin{equation}
K_{ab} = w^{(1)}\,\exp\left(-\frac{|\bfp_a-\bfp_b|^2}{\sigma_1}\right) +
w^{(2)}\,\exp\left(-\frac{|\bfp_a-\bfp_b|^2}{\sigma_{2:s}}-\frac{|\bfbI_a-\bfbI_b|^2}{\sigma_{2:c}}\right)\
,
\end{equation}
where $\bfp_a$ denotes the $(x,y)$ position of pixel $a$ measured from top
left and $\bfbI_a$ denotes the $(r,g,b)$ values of pixel $a$. Note that
there are $5$ learnable parameters: $w^{(1)}, \sigma_1, w^{(2)},\sigma_{2:s},
\sigma_{2:c}$. These parameters are cross validated for different algorithms on
each data set. The final cross validated parameters for MF and DC$_{neg}$ are
given in Table~\ref{tab:cv}. 
To perform this cross-validation, we ran Spearmint for 2 days for each algorithm on both datasets. Note that, due to this time limitation, we were able to run approximately 1000 Spearmint iterations on MSRC but only 100 iterations on Pascal.
This is due to bigger images and larger validation set on the Pascal dataset. Hence, it resulted in less accurate energy parameters. 

\begin{table*}[t]
\begin{center}
\begin{tabular}{l|l|rr|rrr}
Data set& Algorithm &  $w^{(1)}$ & $\sigma_1$& $w^{(2)}$& $\sigma_{2:s}$ &
$\sigma_{2:c}$\\
\hline

\multirow{2}{*}{MSRC}
&MF&7.467846&1.000000&4.028773&35.865959&11.209644\\
&DC$_\text{neg}$&2.247081&3.535267&1.699011&31.232626&7.949970\\\hline

\multirow{2}{*}{Pascal}
&MF&100.000000&1.000000&74.877398&50.000000&5.454272\\
&DC$_\text{neg}$&0.500000&3.071772&0.960811&49.785678&1.000000

\end{tabular}
\end{center}
\caption{\em Parameters tuned for MF and DC$_\text{neg}$ on the MSRC and Pascal
validation sets using Spearmint~\cite{snoek2012practical}.}
\label{tab:cv}
\end{table*}

\subsection{Additional segmentation results}

In this section we provide additional segmentation results.

\subsubsection{Results on parameters tuned for MF}\label{app:addexp}

The results for the parameters tuned for MF on the MSRC and Pascal datasets are given in Table~\ref{tab:mf}. In
Fig.~\ref{fig:mfet}, we show the assignment energy as a function of time 
for an image in MSRC (the tree image in Fig.~\ref{fig:segmf}) and for an
image in Pascal (the sheep image in Fig.~\ref{fig:segmf}).
Furthermore, we provide some of the segmentation results in
Fig.~\ref{fig:segmf}.

Interestingly, for the parameters tuned for MF, even though our algorithm
obtains much lower energies, MF yields the best segmentation accuracy.
In fact, one can argue that the parameters tuned for MF do not model the
segmentation problem accurately, but were tuned such that the inaccurate MF
inference yields good results. 
Note that, in the Pascal dataset, when tuned for MF, the Gaussian mixture coefficients are very high (see Table~\ref{tab:cv}). In such a setting,
DC$_\text{neg}$ ended up classifying all pixel in most images as background.
In fact, SG-LP$_\ell$ was able to improve over DC$_\text{neg}$ in only 
1\% of the images, whereas all our versions improved over
DC$_\text{neg}$ in roughly 25\% of the images. 
Furthermore, our accelerated versions could not get any
advantage over the standard version and resulted in similar run times. Note that,
in most of the images, the \textit{uncertain} pixels are in fact the entire
image, as shown in Fig.~\ref{fig:segmf}.

\begin{figure*}
\def\IMHEIGHT{30ex}
\def\IMHEIGHTZ{27ex}
\begin{center}
\begin{subfigure}{0.34\linewidth}
\includegraphics[height=\IMHEIGHT]{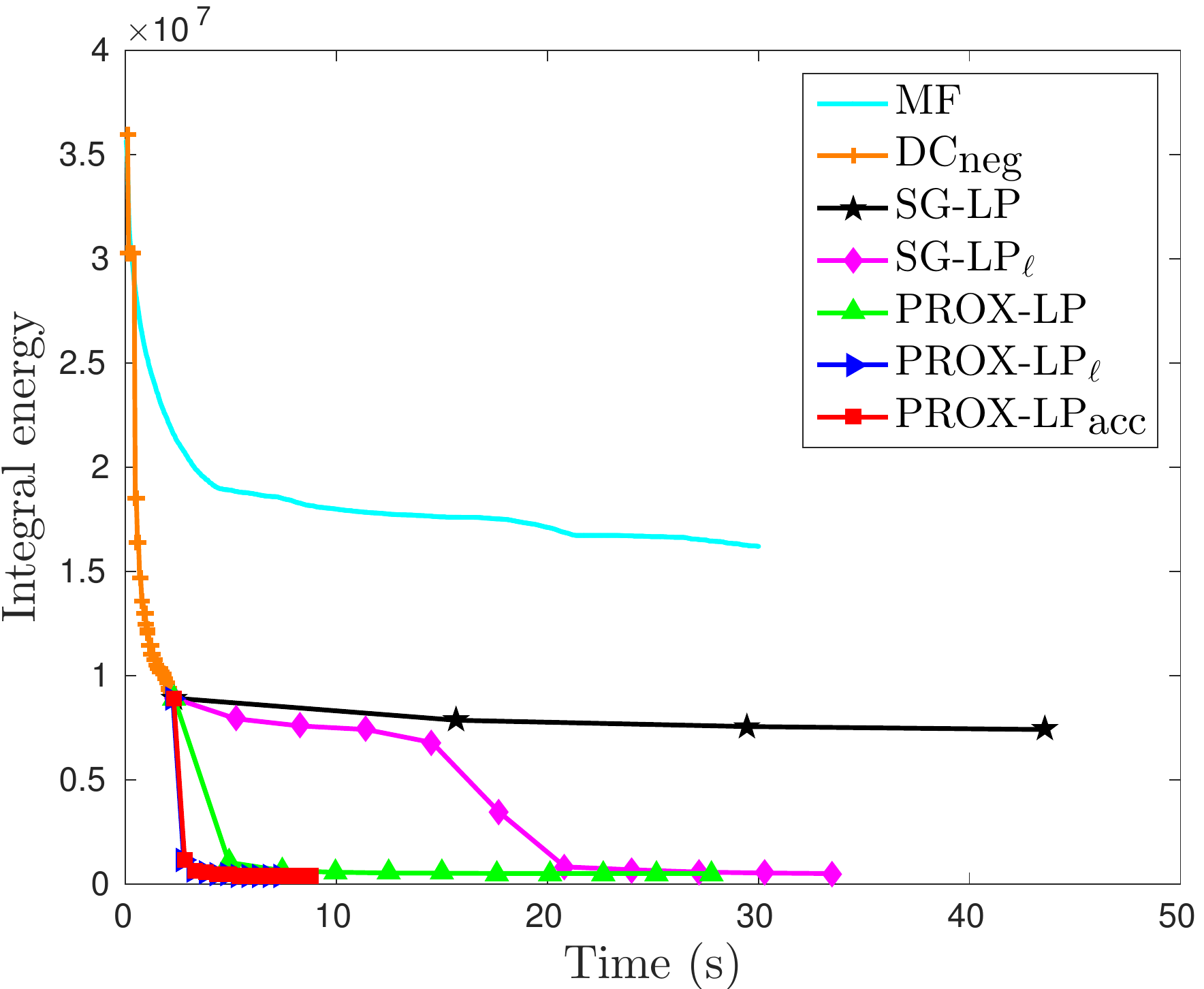}
\end{subfigure}%
\hfill
\begin{subfigure}{0.16\linewidth}
\includegraphics[height=\IMHEIGHTZ]{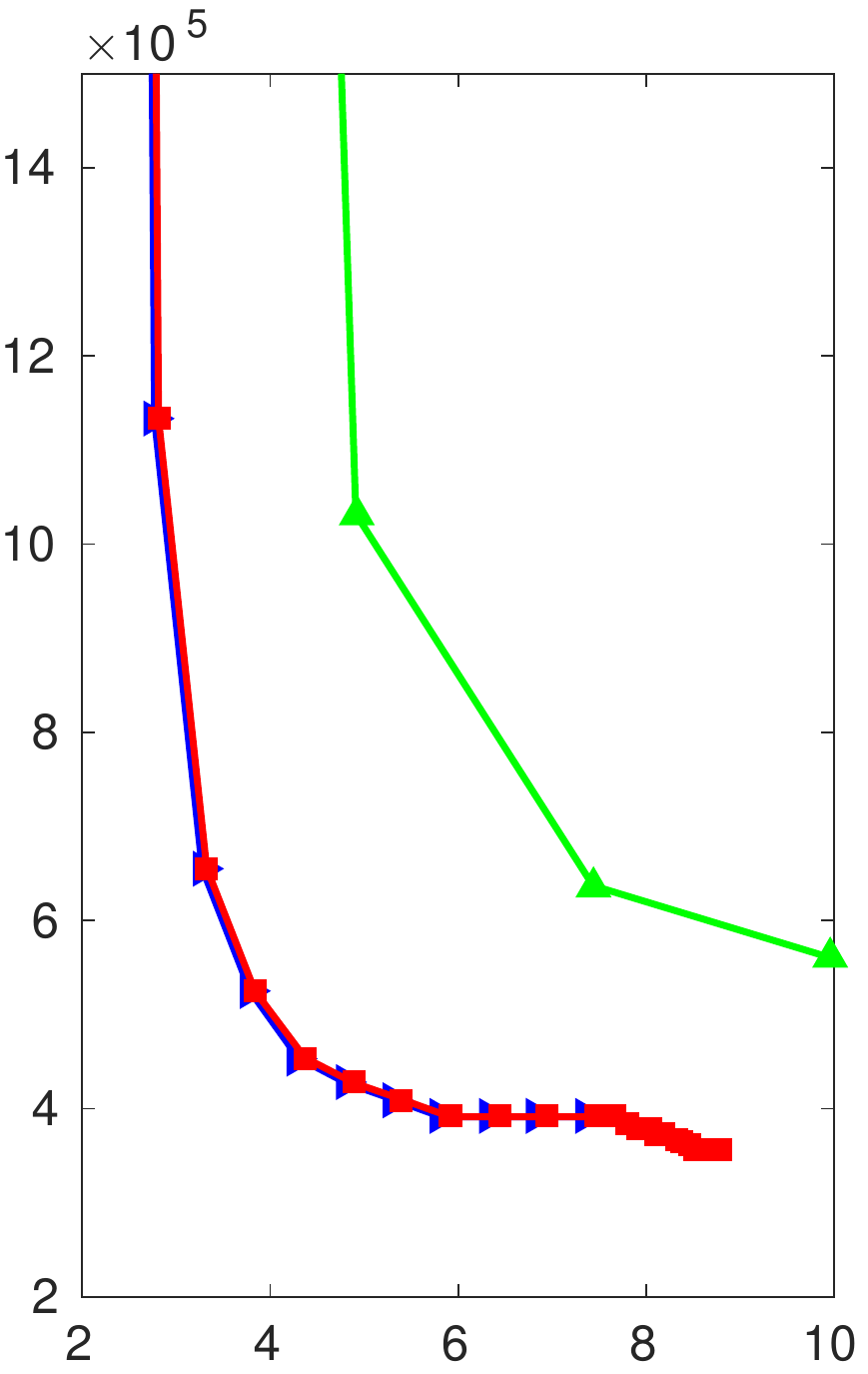}
\end{subfigure}%
\hfill
\begin{subfigure}{0.34\linewidth}
\includegraphics[height=\IMHEIGHT]{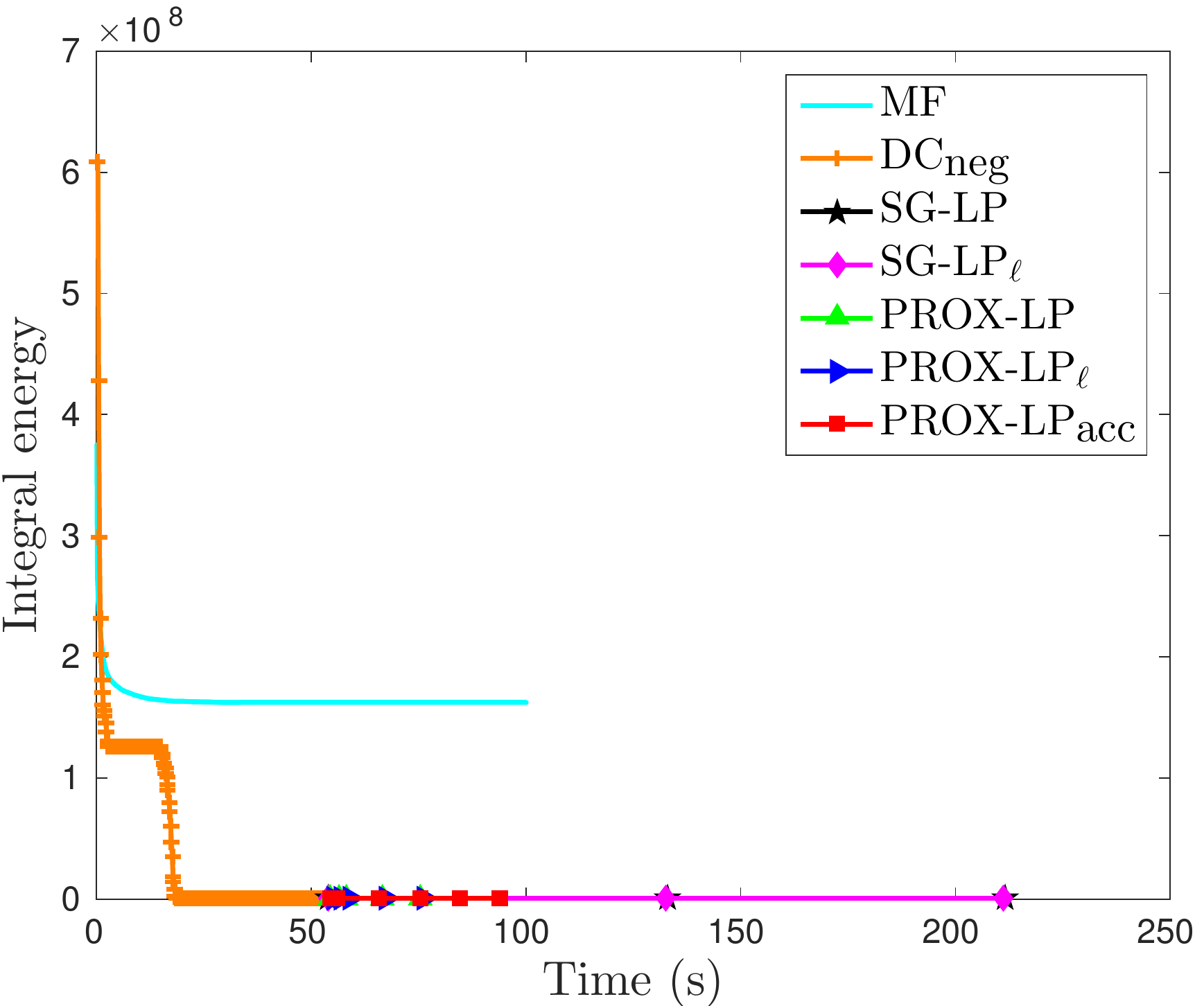}
\end{subfigure}%
\hfill
\begin{subfigure}{0.16\linewidth}
\includegraphics[height=\IMHEIGHTZ]{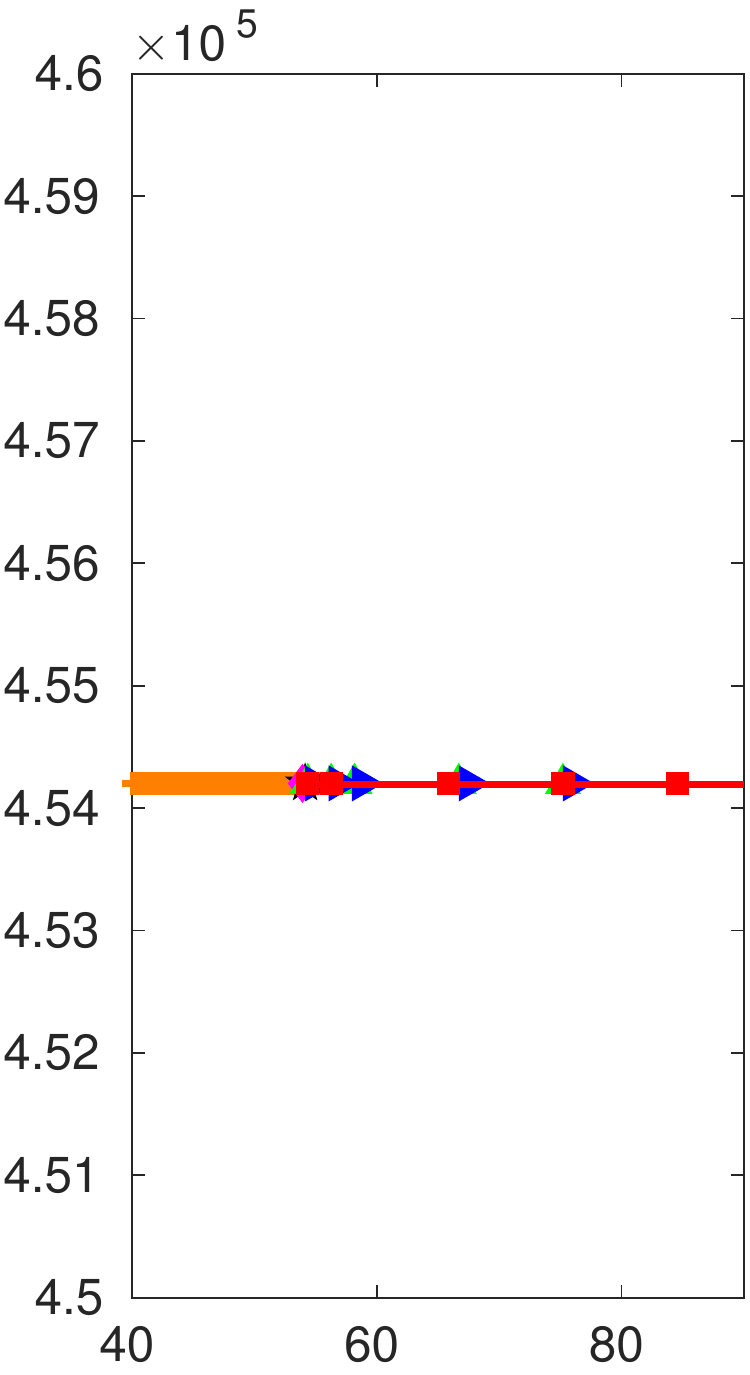}
\end{subfigure}
\caption{\em Assignment energy as a function of time for MF
parameters for an image in (\textbf{left}) MSRC and (\textbf{right}) Pascal.
A zoomed-in version is shown next to each plot.
Except for MF, all the algorithms were initialized with DC$_\text{neg}$.
For the MSRC image, PROX-LP clearly outperforms SG-LP$_\ell$ by obtaining much
lower energies in fewer iterations, and the accelerated versions of
our algorithm obtain roughly the same energy as PROX-LP but significantly
faster.
For the Pascal image, however, no LP algorithm is able to improve over DC$_\text{neg}$. Note that, in the Pascal dataset, for the MF parameters,
DC$_\text{neg}$ ended up classifying all pixel in most images as background (which yields low energy values) and no LP algorithm is able to improve over it.
}
\label{fig:mfet}
\end{center}
\end{figure*}

\begin{table*}[t]
\begin{center}
\begin{tabular}
{>{\raggedright\arraybackslash}m{0.15cm}|
>{\raggedright\arraybackslash}m{1.8cm}|>{\raggedleft\arraybackslash}m{0.5cm}
>{\raggedleft\arraybackslash}m{0.5cm}>{\raggedleft\arraybackslash}m{0.5cm}
>{\raggedleft\arraybackslash}m{1.0cm}>{\raggedleft\arraybackslash}m{1.0cm}
>{\raggedleft\arraybackslash}m{1.0cm}>{\raggedleft\arraybackslash}m{1.0cm}
|>{\raggedleft\arraybackslash}m{1.1cm}>{\raggedleft\arraybackslash}m{1.0cm}
>{\raggedleft\arraybackslash}m{1.0cm}>{\raggedleft\arraybackslash}m{1.0cm}}
&& MF5 & MF & DC$_\text{neg}$ & SG-LP$_\ell$ & PROX-LP  &
 PROX-LP$_\ell$ & PROX-LP$_\text{acc}$ & Ave. E ($\times10^4$) & Ave. T (s) &
 Acc. & IoU \\
\hline
\parbox[t]{2mm}{\multirow{7}{*}{\rotatebox[origin=c]{90}{MSRC}}}
&MF5&-&0&0&0&0&0&0&2366.6&\textbf{0.2}&81.14&54.60\\
&MF&95&-&18&15&2&1&2&1053.6&13.0&\textbf{83.86}&\textbf{59.75}\\
&DC$_\text{neg}$&95&77&-&0&0&0&0&812.7&2.8&83.50&59.67\\
&SG-LP$_\ell$&95&80&48&-&2&0&1&800.1&37.3&83.51&59.68\\\cline{2-13}
&PROX-LP&95&93&95&93&-&35&46&265.6&27.3&83.01&58.74\\
&PROX-LP$_\ell$&95&94&94&94&59&-&43&\textbf{261.2}&13.9&82.98&58.62\\
&PROX-LP$_\text{acc}$&95&93&93&93&49&46&-&295.9&7.9&83.03&58.97\\

\hline
\hline
\parbox[t]{1mm}{\multirow{7}{*}{\rotatebox[origin=c]{90}{Pascal}}}
&MF5&-&-&1&1&0&0&0&40779.8&\textbf{0.8}&80.42&28.66\\
&MF&93&-&3&3&0&0&1&20354.9&21.7&\textbf{80.95}&\textbf{28.86}\\
&DC$_\text{neg}$&93&87&-&0&0&0&0&2476.2&39.1&77.77&14.93\\
&SG-LP$_\ell$&93&87&1&-&0&0&0&2474.1&414.7&77.77&14.92\\\cline{2-13}
&PROX-LP&94&90&24&24&-&4&9&1475.6&81.0&78.04&15.79\\
&PROX-LP$_\ell$&94&90&24&24&5&-&9&\textbf{1458.9}&82.7&78.04&15.79\\
&PROX-LP$_\text{acc}$&94&89&28&27&18&18&-&1623.7&83.9&77.86&15.18\\

\end{tabular}
\end{center}

\vspace{-0.5cm}
\caption{\em Results on the MSRC and Pascal datasets with the parameters tuned for
MF. We show: the percentage of images where the row method
strictly outperforms the column one on the final integral energy, the average
integral energy over the test set, the average run time, the segmentation accuracy and the intersection over union score. Note
that all versions of our algorithm obtain much lower energies than the baselines.
However, as expected, lower energy does not correspond to better segmentation
accuracy, mainly due to the less accurate energy parameters.
Furthermore, the accelerated versions of our algorithm are similar in run time
and obtain similar energies compared to PROX-LP.}
\label{tab:mf}
\end{table*}

\begin{figure*}
\def \SUBWIDTH{0.10\linewidth}
\begin{center}
\begin{subfigure}{\SUBWIDTH}
\includegraphics[width=0.99\linewidth]{msrc/input/2_14_s.png}
\end{subfigure}%
\begin{subfigure}{\SUBWIDTH}
\includegraphics[width=0.99\linewidth]{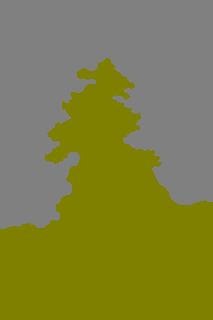}
\end{subfigure}%
\begin{subfigure}{\SUBWIDTH}
\includegraphics[width=0.99\linewidth]{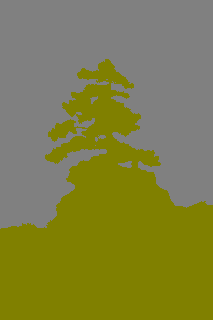}
\end{subfigure}%
\begin{subfigure}{\SUBWIDTH}
\includegraphics[width=0.99\linewidth]{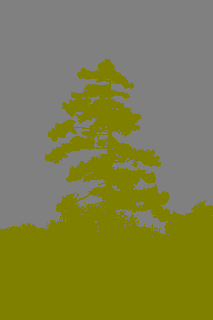}
\end{subfigure}%
\begin{subfigure}{\SUBWIDTH}
\mybox{red}{\includegraphics[width=0.99\linewidth]{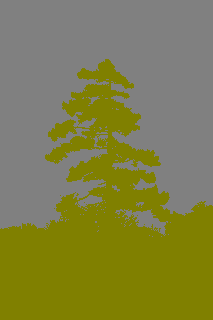}}
\end{subfigure}%
\begin{subfigure}{\SUBWIDTH}
\mybox{red}{\includegraphics[width=0.99\linewidth]{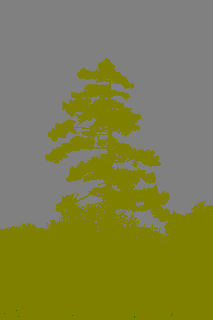}}
\end{subfigure}%
\begin{subfigure}{\SUBWIDTH}
\includegraphics[width=0.99\linewidth]{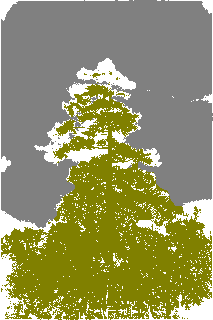}
\end{subfigure}%
\begin{subfigure}{\SUBWIDTH}
\mybox{red}{\includegraphics[width=0.99\linewidth]{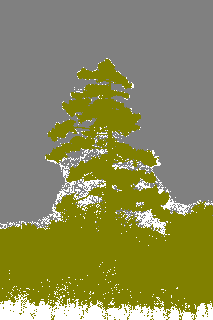}}
\end{subfigure}%
\begin{subfigure}{\SUBWIDTH}
\mybox{red}{\includegraphics[width=0.99\linewidth]{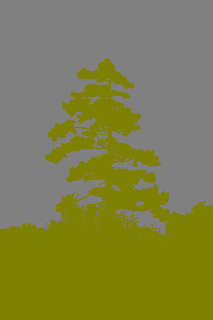}}
\end{subfigure}%
\begin{subfigure}{\SUBWIDTH}
\includegraphics[width=0.99\linewidth]{msrc/fine_annot/2_14_s_GT.png}
\end{subfigure}

\begin{subfigure}{\SUBWIDTH}
\includegraphics[width=0.99\linewidth]{msrc/input/1_9_s.png}
\end{subfigure}%
\begin{subfigure}{\SUBWIDTH}
\includegraphics[width=0.99\linewidth]{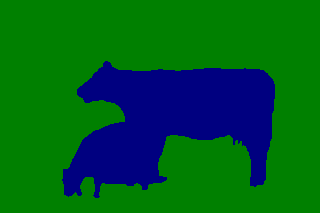}
\end{subfigure}%
\begin{subfigure}{\SUBWIDTH}
\includegraphics[width=0.99\linewidth]{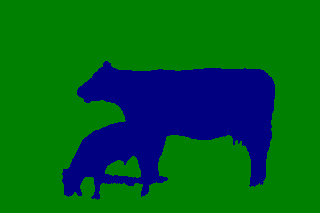}
\end{subfigure}%
\begin{subfigure}{\SUBWIDTH}
\includegraphics[width=0.99\linewidth]{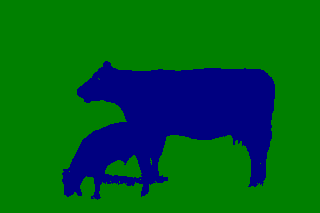}
\end{subfigure}%
\begin{subfigure}{\SUBWIDTH}
\mybox{red}{\includegraphics[width=0.99\linewidth]{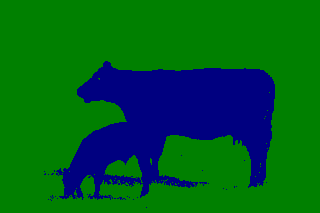}}
\end{subfigure}%
\begin{subfigure}{\SUBWIDTH}
\mybox{red}{\includegraphics[width=0.99\linewidth]{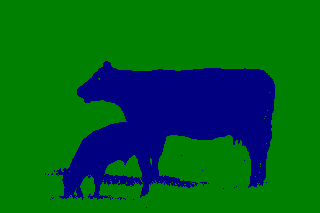}}
\end{subfigure}%
\begin{subfigure}{\SUBWIDTH}
\includegraphics[width=0.99\linewidth]{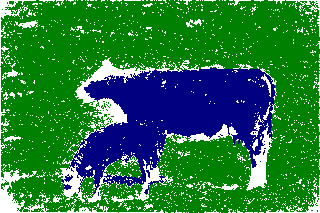}
\end{subfigure}%
\begin{subfigure}{\SUBWIDTH}
\mybox{red}{\includegraphics[width=0.99\linewidth]{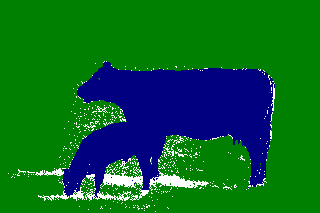}}
\end{subfigure}%
\begin{subfigure}{\SUBWIDTH}
\mybox{red}{\includegraphics[width=0.99\linewidth]{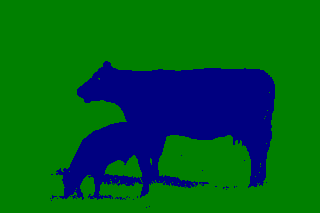}}
\end{subfigure}%
\begin{subfigure}{\SUBWIDTH}
\includegraphics[width=0.99\linewidth]{msrc/fine_annot/1_9_s_GT.png}
\end{subfigure}

\begin{subfigure}{\SUBWIDTH}
\includegraphics[width=0.99\linewidth]{pascal/input/2007_000676.jpg}
\end{subfigure}%
\begin{subfigure}{\SUBWIDTH}
\includegraphics[width=0.99\linewidth]{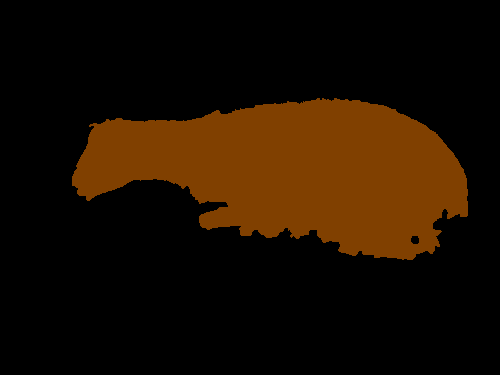}
\end{subfigure}%
\begin{subfigure}{\SUBWIDTH}
\includegraphics[width=0.99\linewidth]{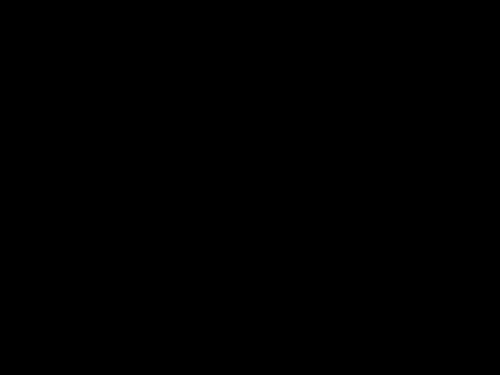}
\end{subfigure}%
\begin{subfigure}{\SUBWIDTH}
\includegraphics[width=0.99\linewidth]{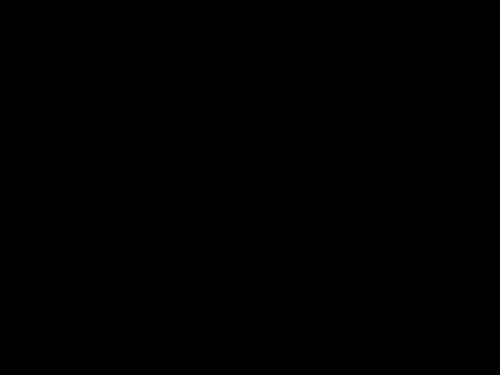}
\end{subfigure}%
\begin{subfigure}{\SUBWIDTH}
\mybox{red}{\includegraphics[width=0.99\linewidth]{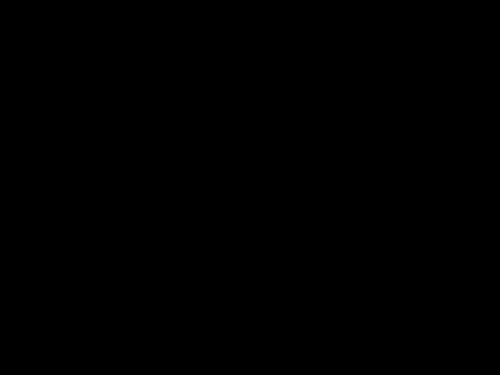}}
\end{subfigure}%
\begin{subfigure}{\SUBWIDTH}
\mybox{red}{\includegraphics[width=0.99\linewidth]{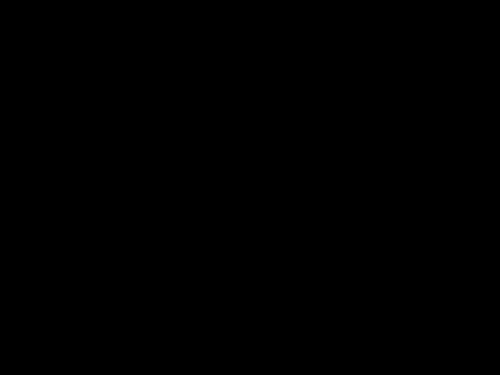}}
\end{subfigure}%
\begin{subfigure}{\SUBWIDTH}
\fbox{\includegraphics[width=0.99\linewidth]{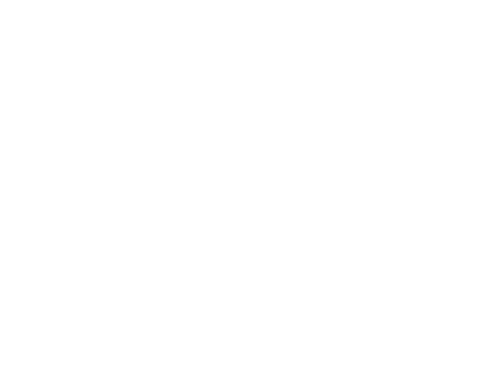}}
\end{subfigure}%
\begin{subfigure}{\SUBWIDTH}
\fbox{\includegraphics[width=0.99\linewidth]{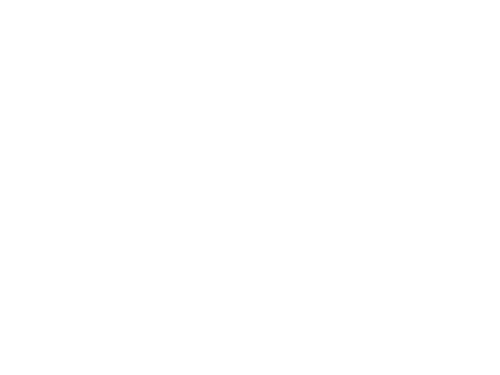}}
\end{subfigure}%
\begin{subfigure}{\SUBWIDTH}
\mybox{red}{\includegraphics[width=0.99\linewidth]{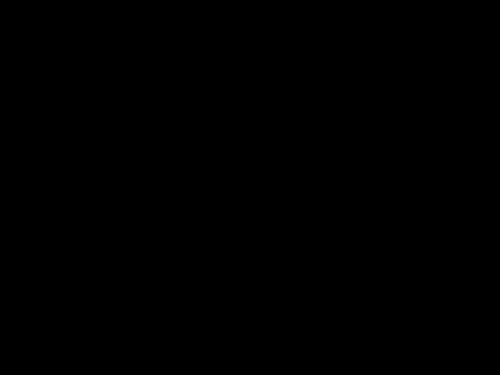}}
\end{subfigure}%
\begin{subfigure}{\SUBWIDTH}
\includegraphics[width=0.99\linewidth]{pascal/gt/2007_000676.png}
\end{subfigure}

\begin{subfigure}{\SUBWIDTH}
\includegraphics[width=0.99\linewidth]{pascal/input/2007_000559.jpg}
\caption{Image}
\end{subfigure}%
\begin{subfigure}{\SUBWIDTH}
\includegraphics[width=0.99\linewidth]{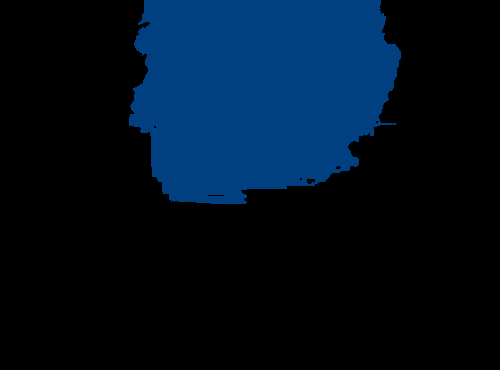}
\caption{MF}
\end{subfigure}%
\begin{subfigure}{\SUBWIDTH}
\includegraphics[width=0.99\linewidth]{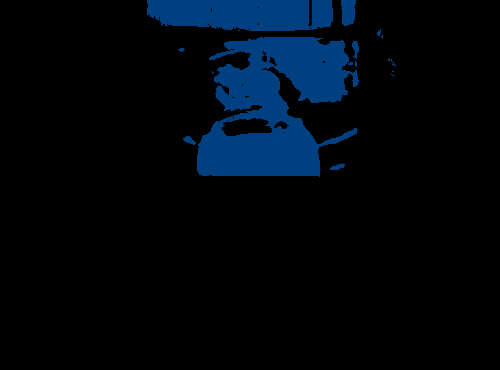}
\caption{DC$_\text{neg}$}
\end{subfigure}%
\begin{subfigure}{\SUBWIDTH}
\includegraphics[width=0.99\linewidth]{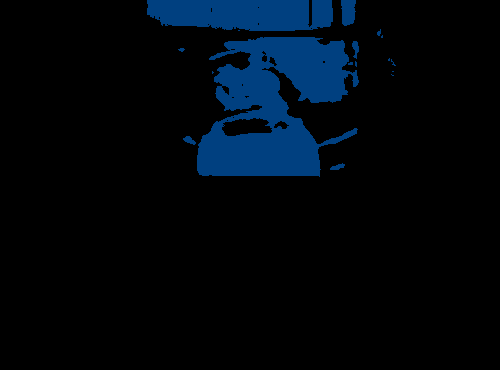}
\caption{SG-LP$_\ell$}
\end{subfigure}%
\begin{subfigure}{\SUBWIDTH}
\mybox{red}{\includegraphics[width=0.99\linewidth]{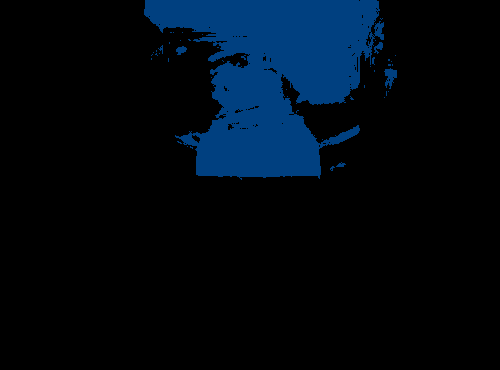}}
\caption{PROX-LP}
\end{subfigure}%
\begin{subfigure}{\SUBWIDTH}
\mybox{red}{\includegraphics[width=0.99\linewidth]{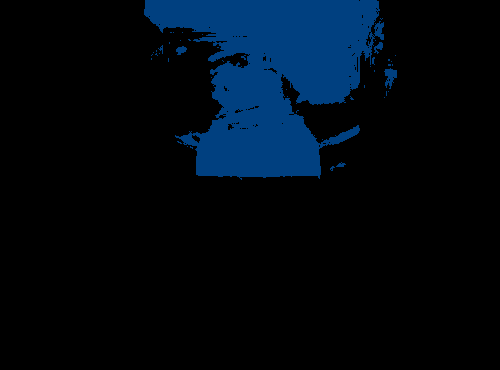}}
\caption{PROX-LP$_\ell$}
\end{subfigure}%
\begin{subfigure}{\SUBWIDTH}
\fbox{\includegraphics[width=0.99\linewidth]{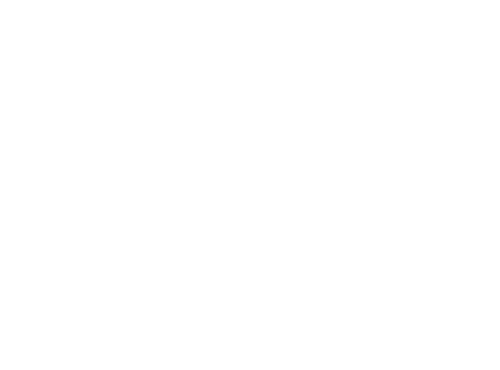}}
\caption{Uncer.(DC$_\text{neg}$)}
\end{subfigure}%
\begin{subfigure}{\SUBWIDTH}
\fbox{\includegraphics[width=0.99\linewidth]{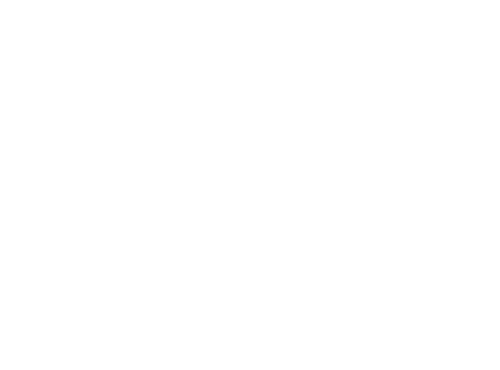}}
\caption{Uncer.(ours)}
\end{subfigure}%
\begin{subfigure}{\SUBWIDTH}
\mybox{red}{\includegraphics[width=0.99\linewidth]{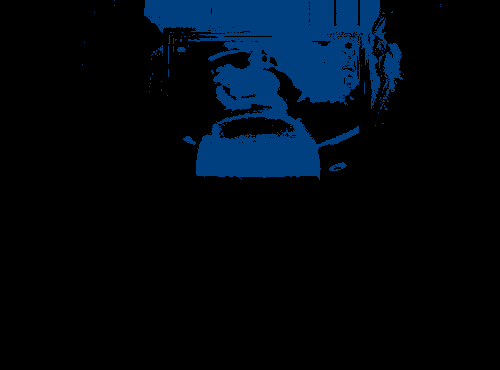}}
\caption{PROX-LP$_\text{acc}$}
\end{subfigure}%
\begin{subfigure}{\SUBWIDTH}
\includegraphics[width=0.99\linewidth]{pascal/gt/2007_000559.png}
\caption{Ground truth}
\end{subfigure}

\vspace{-0.2cm}
\caption{\em Results with MF parameters, for an image in (\textbf{top}) MSRC
and (\textbf{bottom}) Pascal. The uncertain pixels identified by DC$_\text{neg}$
and PROX-LP$_\text{acc}$ are marked in white. Note that, in MSRC all versions of
our algorithm obtain visually good segmentations similar to MF (or better).
In Pascal, the segmentation results are poor except for MF, even though they obtain much
lower energies. We argue that, in this case, the energy parameters do not model
the segmentation problem accurately.}
\label{fig:segmf}
\end{center}
\end{figure*}

\subsubsection{Summary}
We have evaluated all the algorithms using two different parameter settings.
Therefore, we summarize the best segmentation accuracy obtained by each
algorithm and the corresponding parameter setting in Table~\ref{tab:seg}. Note that, on MSRC, the best parameter setting for DC$_\text{neg}$ corresponds to the parameters tuned for MF. This is a strange result but can be explained by the fact that, as mentioned in the main paper, cross-validation was performed using the less accurate ground truth provided with the original dataset, but evaluation using the accurate ground truth annotations provided by~\cite{koltun2011efficient}.

Furthermore, in contrast to MSRC, the segmentation results of our algorithm on the Pascal dataset is not the state-of-the-art, even with the parameters tuned for DC$_\text{neg}$. This may be explained by the fact, that due to the limited cross-validation, the energy parameters obtained for the Pascal dataset is not accurate. Therefore, even though our algorithm obtained lower energies that was not reflected in the segmentation accuracy. Similar behaviour was observed in~\cite{desmaison2016efficient,wang2015efficient}.

\begin{table*}[t]
\begin{center}
\begin{tabular}{l||l|rr||l|rr}
\multirow{2}{*}{Algorithm} & \multicolumn{3}{c||}{MSRC} &
\multicolumn{3}{c}{Pascal}\\
 & Parameters & Ave. T (s) & Acc. & Parameters & Ave. T (s) & Acc. \\
\hline
MF5&MF&\textbf{0.2}&81.14&MF&\textbf{0.8}&80.42\\
MF&MF&13.0&83.86&MF&21.7&\textbf{80.95}\\
DC$_\text{neg}$&MF&2.8&83.50&DC$_\text{neg}$&3.7&80.43\\
SG-LP$_\ell$&MF&37.3&83.51&DC$_\text{neg}$&84.4&80.49\\\hline
PROX-LP&DC$_\text{neg}$&23.5&83.99&DC$_\text{neg}$&106.7&80.63\\
PROX-LP$_\ell$&DC$_\text{neg}$&6.3&83.94&DC$_\text{neg}$&22.1&80.65\\
PROX-LP$_\text{acc}$&DC$_\text{neg}$&3.7&\textbf{84.16}&DC$_\text{neg}$&14.7&80.58\\


\end{tabular}
\end{center}

\vspace{-0.5cm}
\caption{\em Best segmentation results of each algorithm with their respective
parameters, the average time on the test set and the segmentation accuracy. In MSRC,
the best segmentation accuracy is obtained by PROX-LP$_\text{acc}$ and in
Pascal it is by MF. Note that, on MSRC, the best parameter setting for DC$_\text{neg}$ corresponds to the parameters tuned for MF. This is due to the fact that cross-validation was performed on the less accurate ground truth but evaluation on the accurate ground truth annotations provided by~\cite{koltun2011efficient}. Furthermore, the low segmentation performance of our algorithm on the Pascal dataset is may be due to less accurate energy parameters resulted from limited cross-validation.}
\label{tab:seg}
\end{table*}

\subsection{Effect of the proximal regularization constant}
We plot the assignment energy as a function of time for an image in MSRC
(the same image used to generate Fig.~\ref{fig:et}) by varying the
proximal regularization constant $\lambda$. Here, we used the parameters tuned for
DC$_\text{neg}$. The plot is shown in Fig.~\ref{fig:msrclambdaet}. In summary,
for a wide range of $\lambda$, PROX-LP obtains similar energies with approximately the same run time.
\begin{figure}
\def\IMHEIGHT{30ex}
\def\IMHEIGHTZ{27ex}
\begin{center}
\begin{subfigure}{0.45\linewidth}
\includegraphics[width=0.99\linewidth]{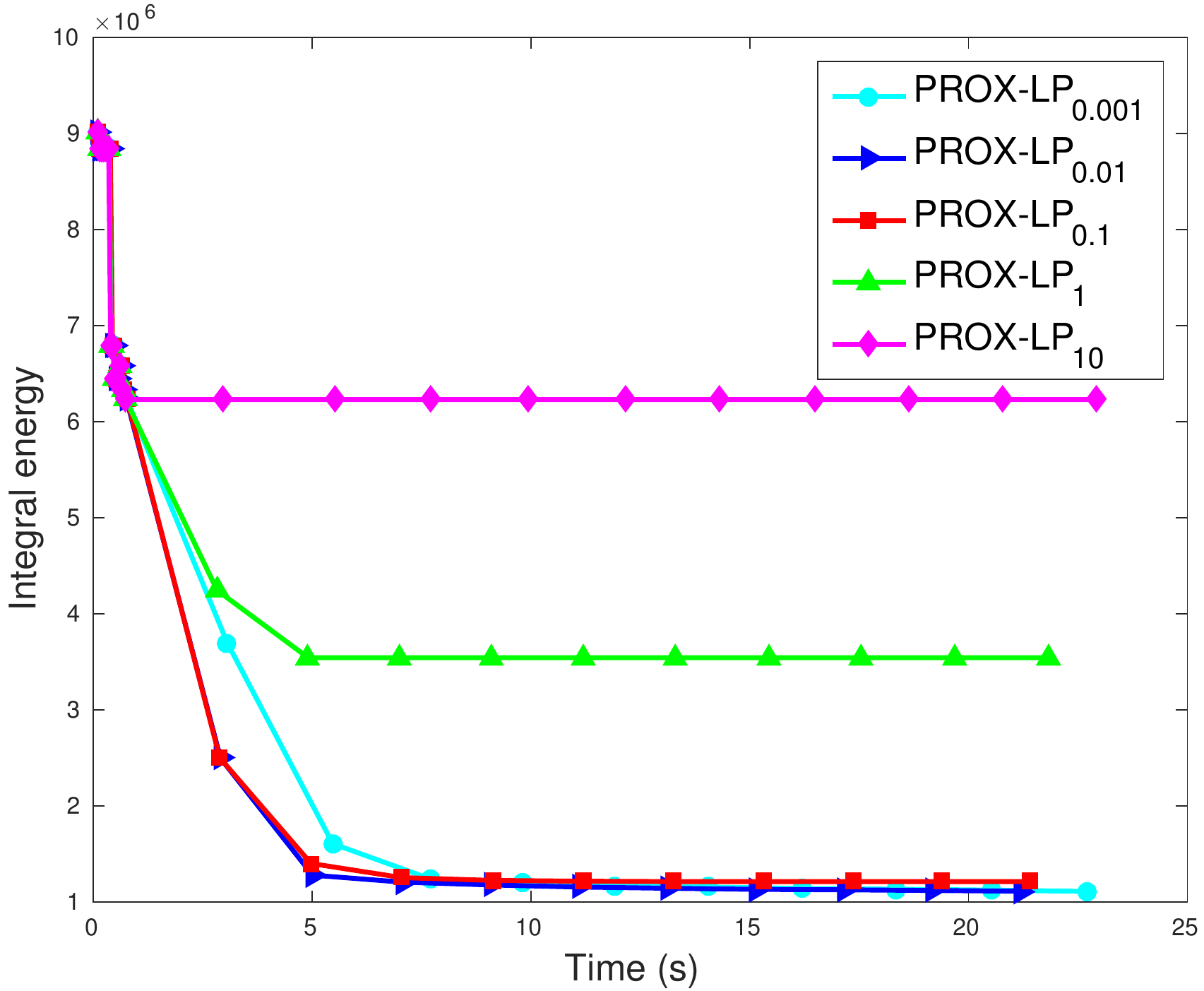}
\end{subfigure}%
\hfill
\begin{subfigure}{0.45\linewidth}
\includegraphics[width=0.99\linewidth]{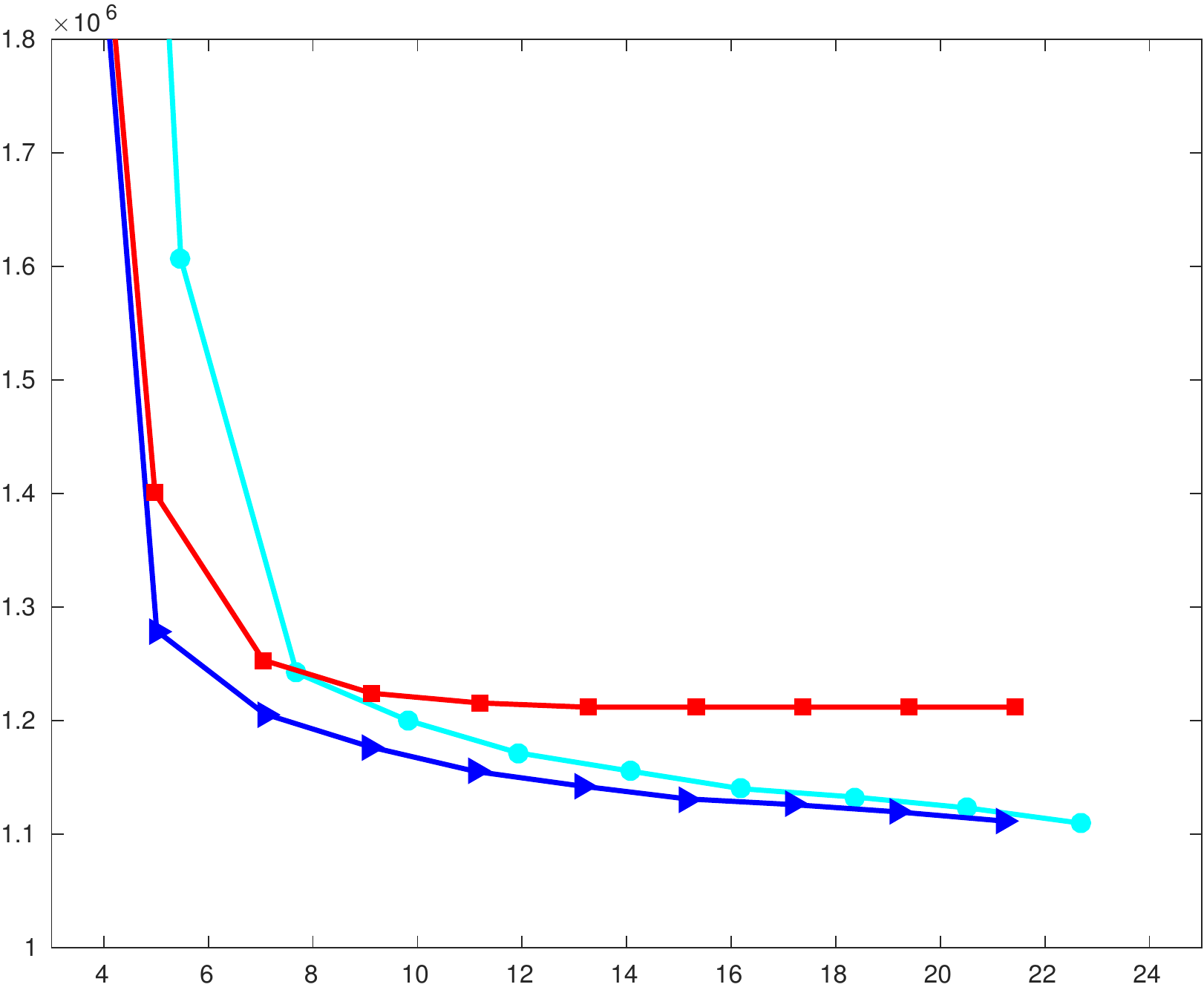}
\end{subfigure}
\caption{\em Assignment energy as a function of time for an image in MSRC, for
different values of $\lambda$. The zoomed plot is shown on the right. Note
that, for $\lambda = 0.1, 0.01, 0.001$, PROX-LP obtains similar energies in
approximately the same run time.}
\label{fig:msrclambdaet}
\end{center}
\end{figure}

\subsection{Modified filtering algorithm}\label{app:spph}

We compare our modified filtering method, described in
Section~\myref{4}, with the divide-and-conquer strategy
of~\cite{desmaison2016efficient}. To this end, we evaluated both algorithms on
one of the Pascal VOC test images (the sheep image in Fig.~\ref{fig:seg}),
but varying the image size, the number of labels and the Gaussian kernel
standard deviation. The respective plots are shown in Fig.~\ref{fig:pasph}. 
Note that, as claimed in the main paper, speedup with respect to the standard
deviation is roughly constant. 
Similar plots for an MSRC image (the tree image in Fig.~\ref{fig:seg}) are
shown in Fig.~\ref{fig:msrcph}. In this case, speedup is around $15-32$, with
around $23-32$ in the operating region of all versions of our algorithm.

\begin{figure*}
\def\SUBWIDTH{0.3\linewidth}
\begin{center}
\begin{subfigure}{\SUBWIDTH}
\includegraphics[width=0.95\linewidth]{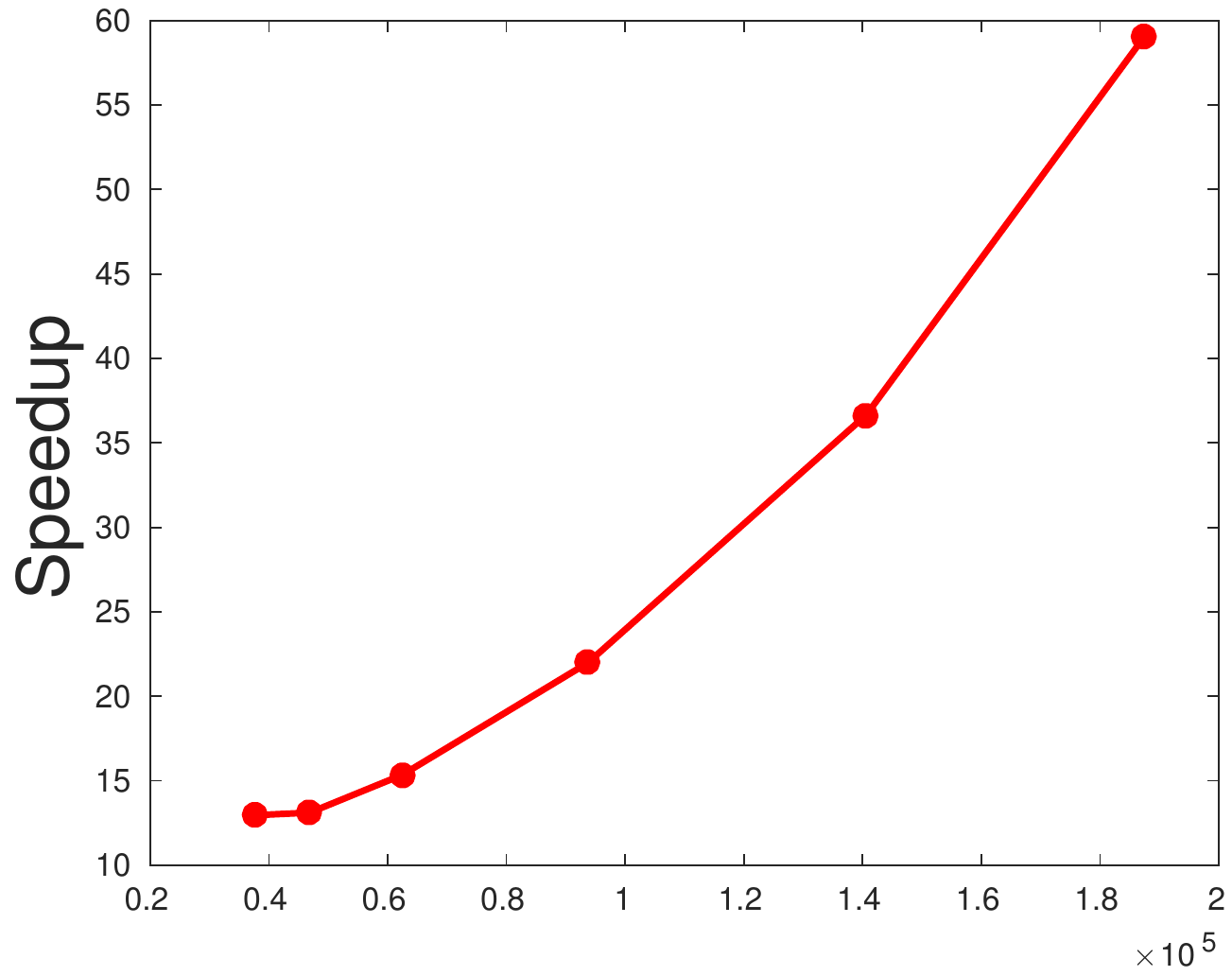}
\end{subfigure}%
\begin{subfigure}{\SUBWIDTH}
\includegraphics[width=0.95\linewidth]{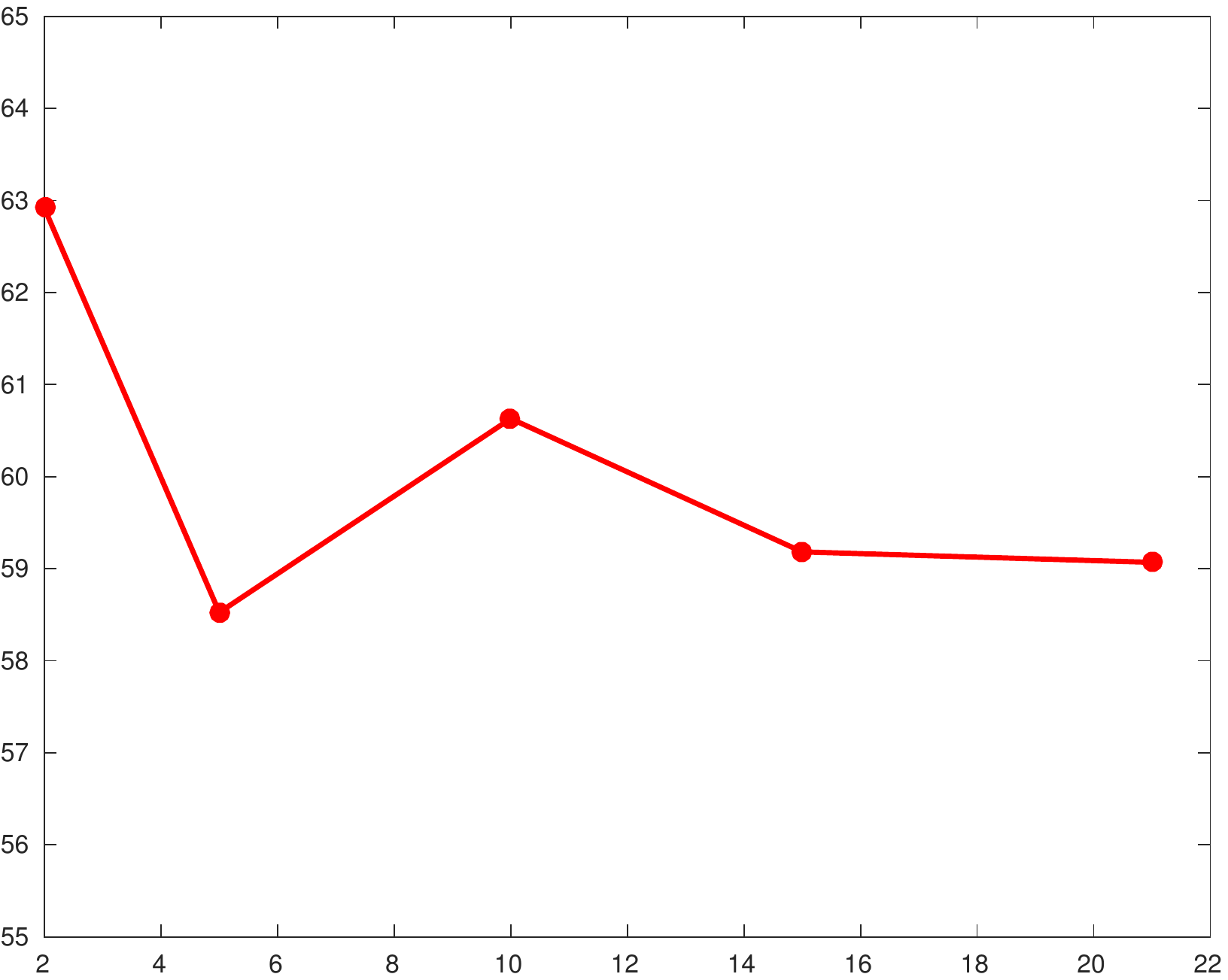}
\end{subfigure}%
\begin{subfigure}{\SUBWIDTH}
\includegraphics[width=0.95\linewidth]{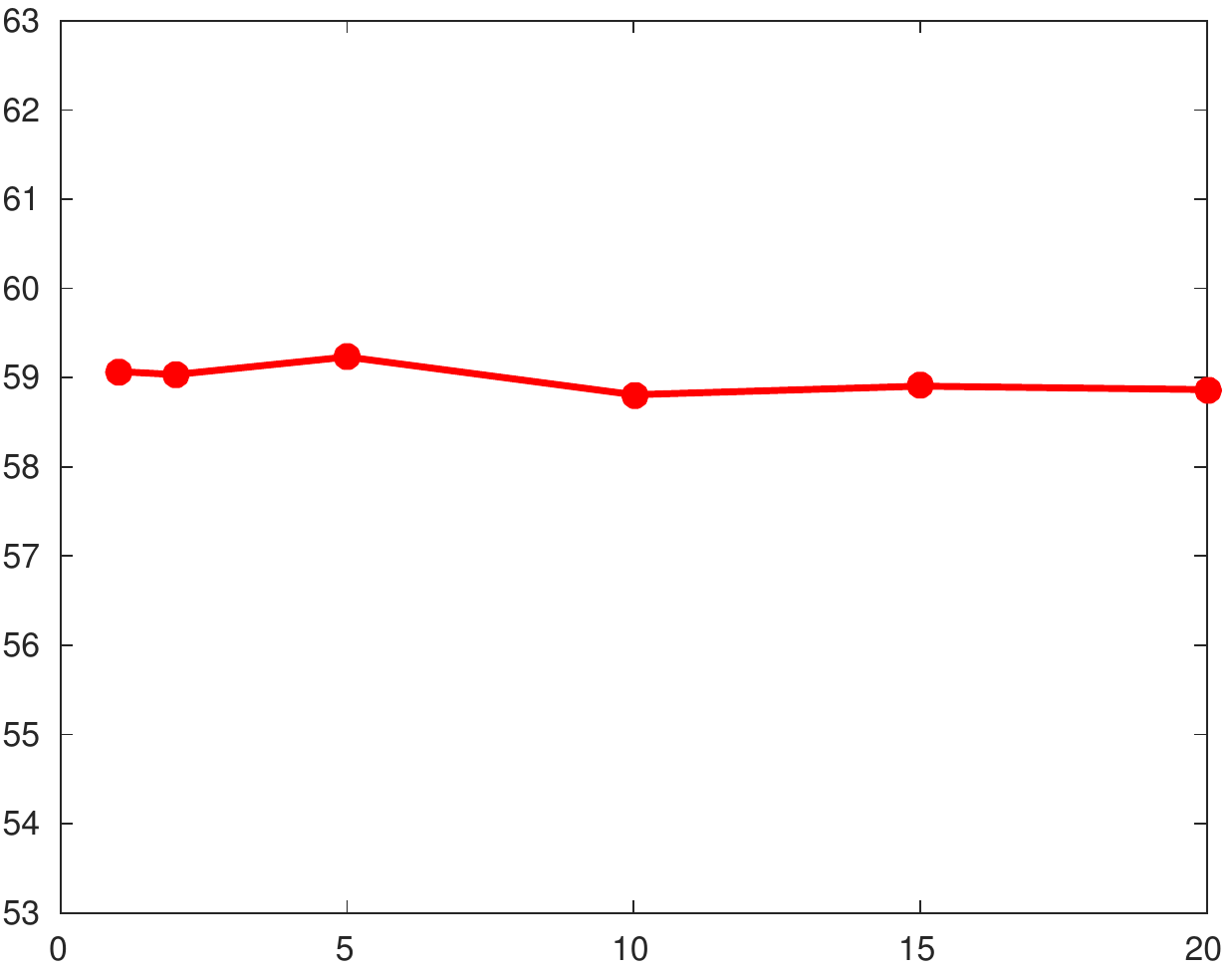}
\end{subfigure}%

\begin{subfigure}{\SUBWIDTH}
\includegraphics[width=0.95\linewidth]{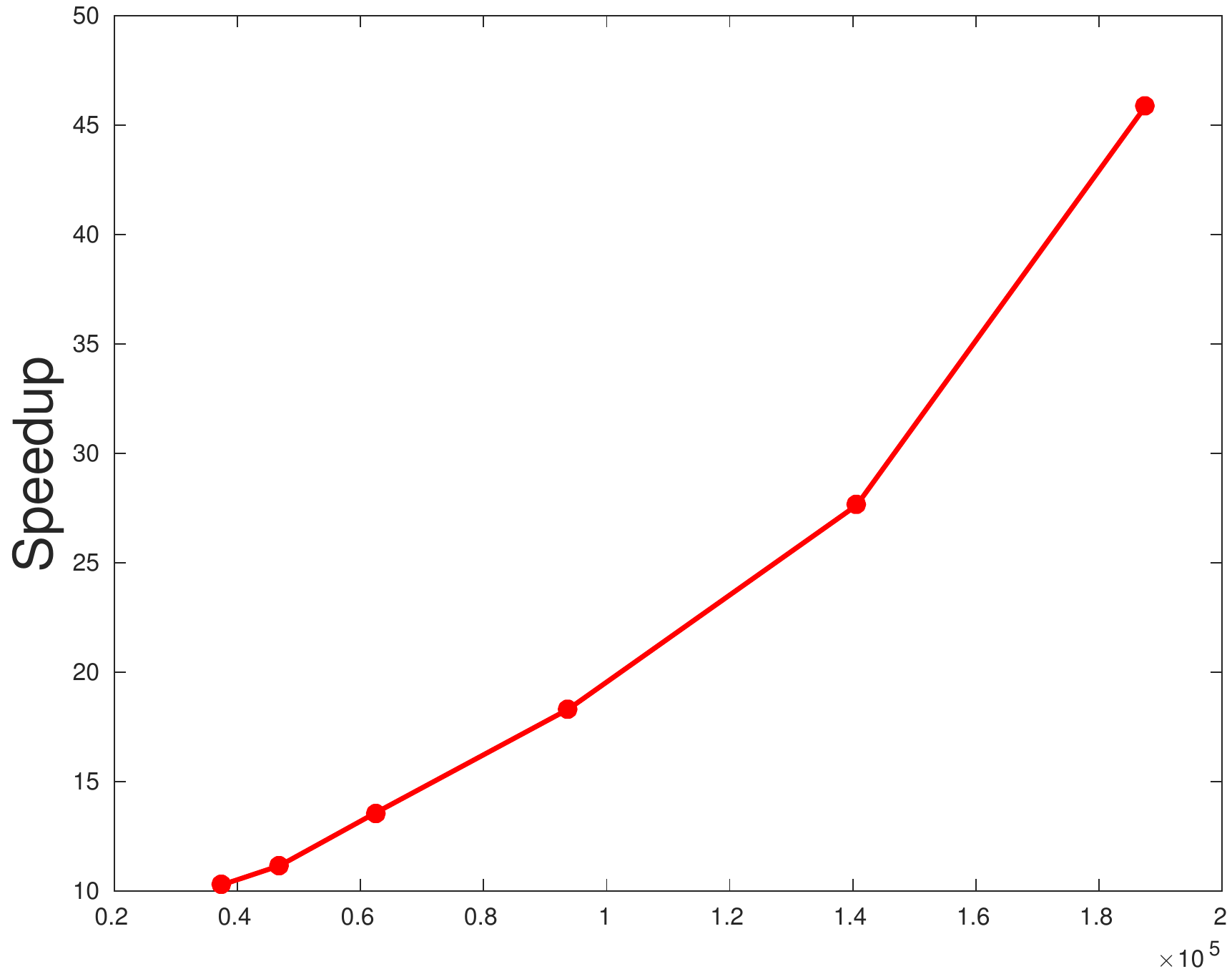}
\caption{Number of pixels}
\end{subfigure}%
\begin{subfigure}{\SUBWIDTH}
\includegraphics[width=0.95\linewidth]{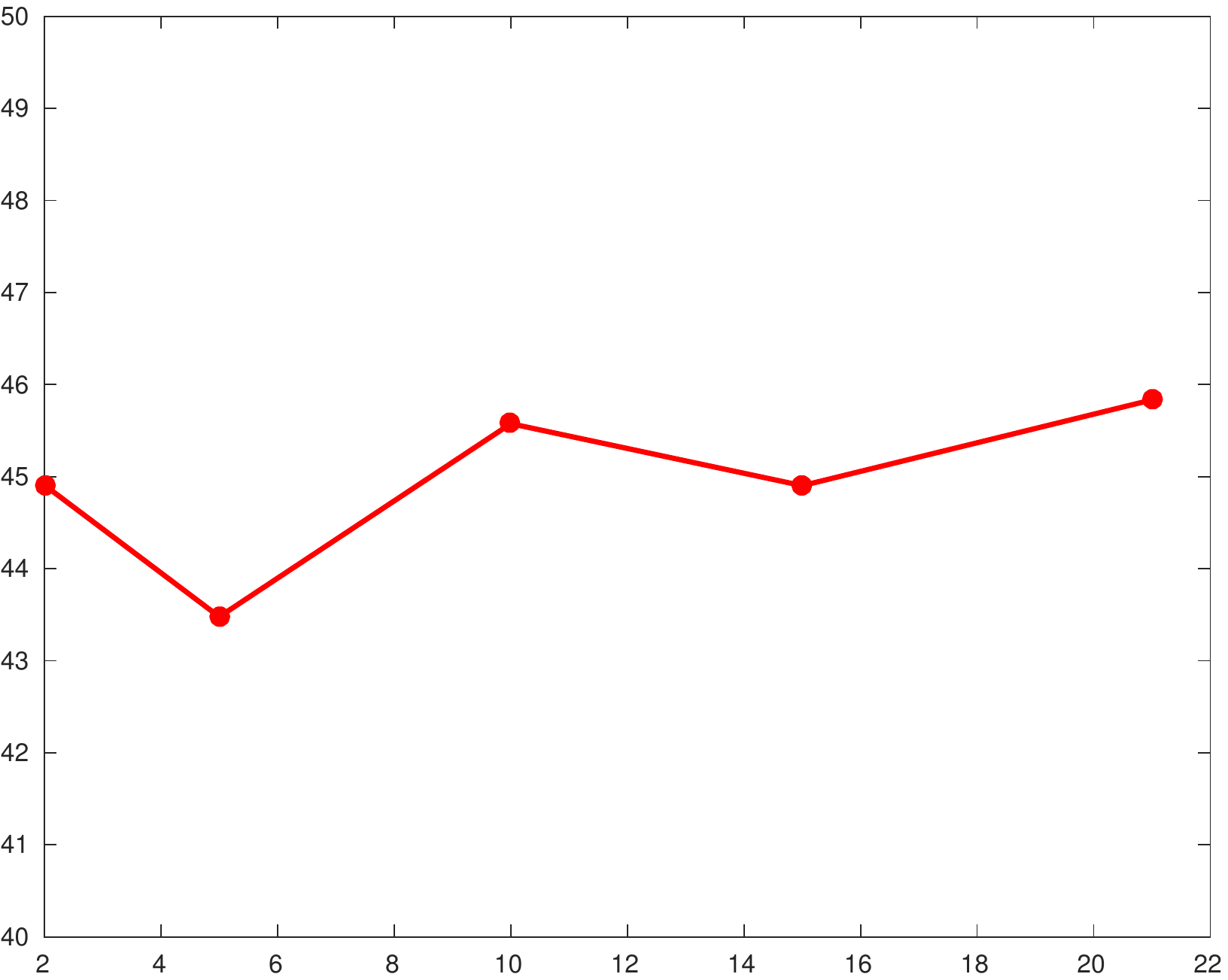}
\caption{Number of labels}
\end{subfigure}%
\begin{subfigure}{\SUBWIDTH}
\includegraphics[width=0.95\linewidth]{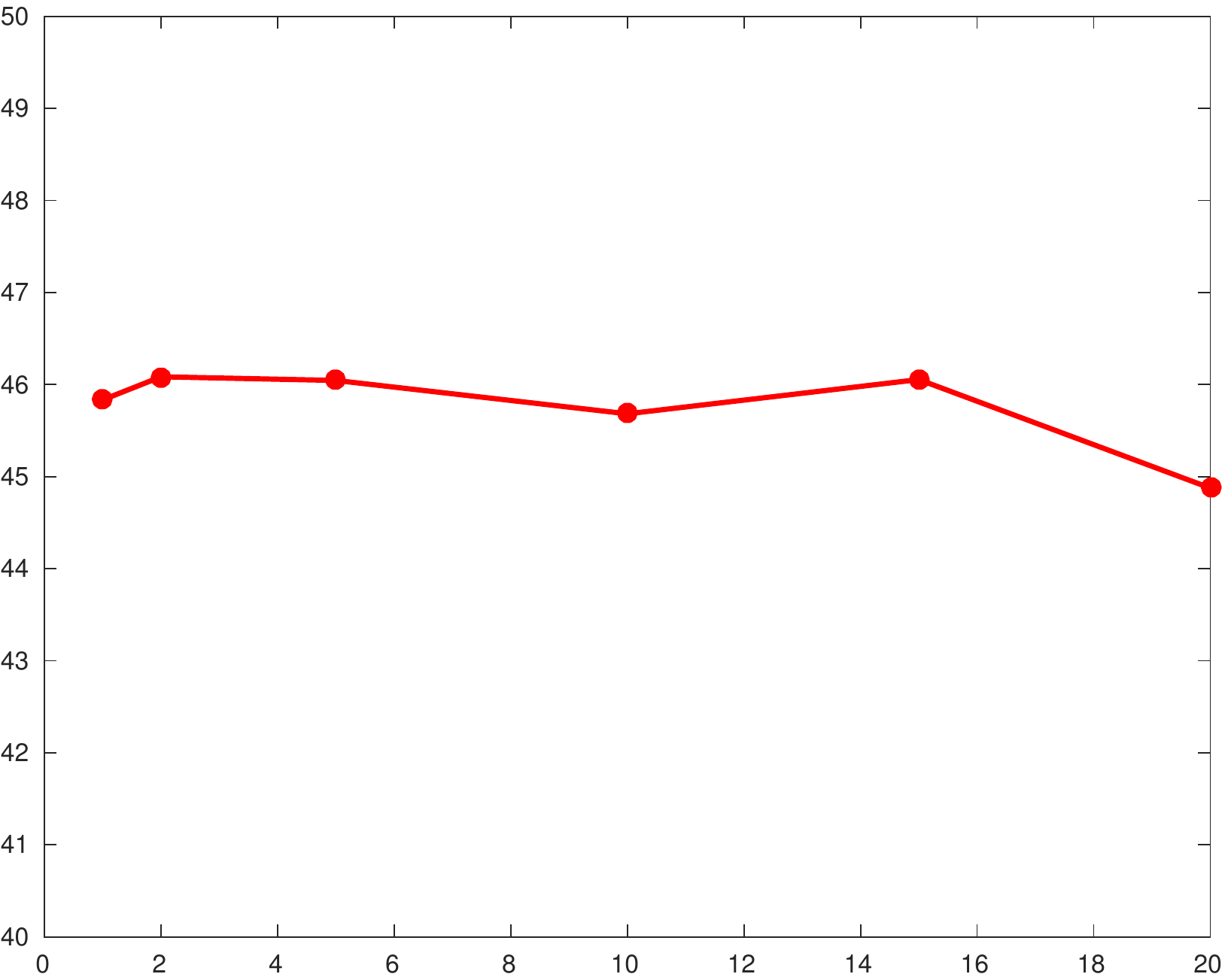}
\caption{Filter standard deviation}
\end{subfigure}
\vspace{-0.2cm}
\caption{\em Speedup of our modified filtering
algorithm over the divide-and-conquer strategy of~\cite{desmaison2016efficient}
on a Pascal image, \textbf{top:} spatial kernel (d = 2), \textbf{bottom: }
bilateral kernel (d = 5). Note that our speedup grows with the number of pixels
and is approximately constant with respect to the number of labels and
filter standard deviation.}
\label{fig:pasph}
\end{center}
\vspace{-0.5cm}
\end{figure*}

\begin{figure*}
\def\SUBWIDTH{0.3\linewidth}
\begin{center}
\begin{subfigure}{\SUBWIDTH}
\includegraphics[width=0.95\linewidth]{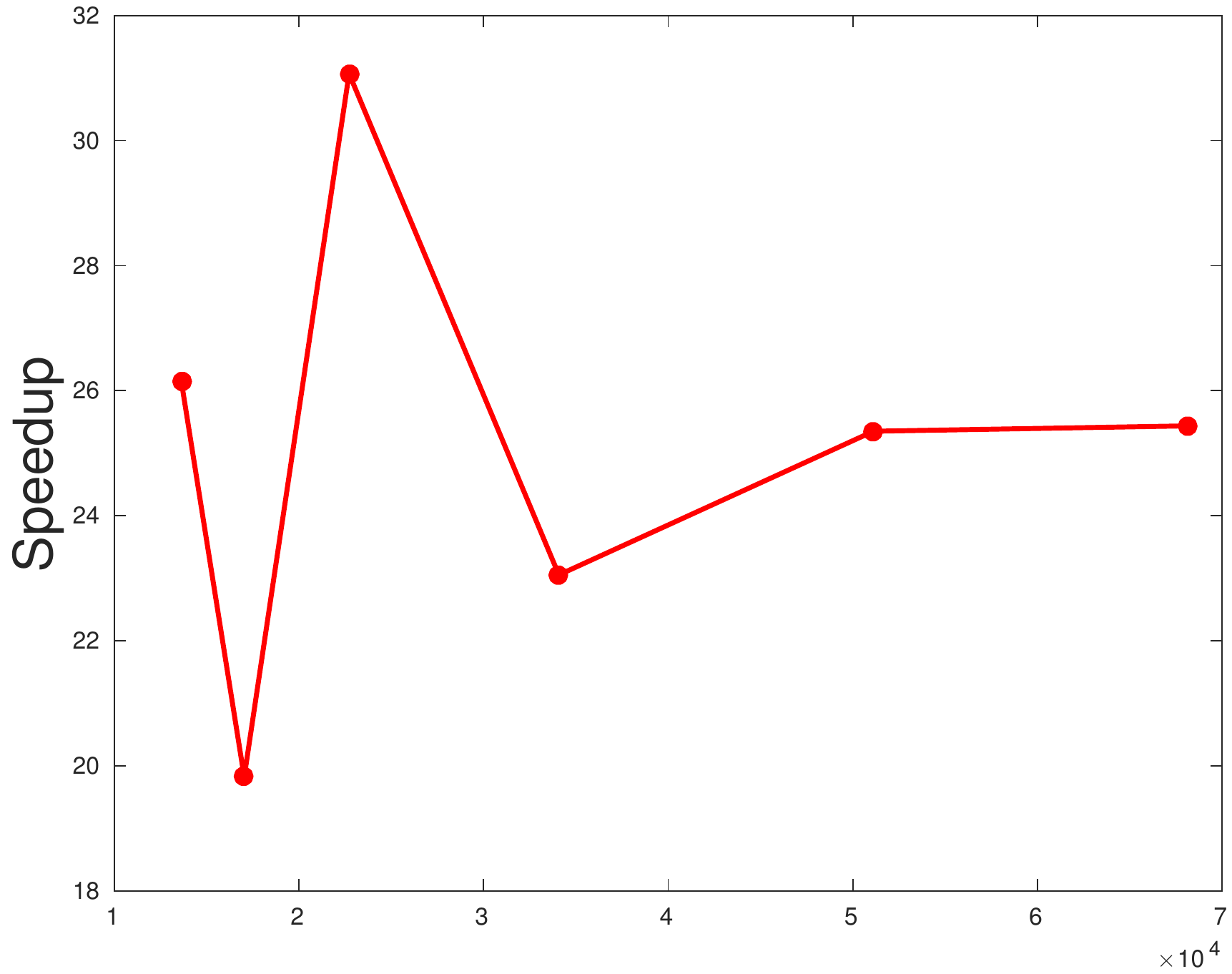}
\end{subfigure}%
\begin{subfigure}{\SUBWIDTH}
\includegraphics[width=0.95\linewidth]{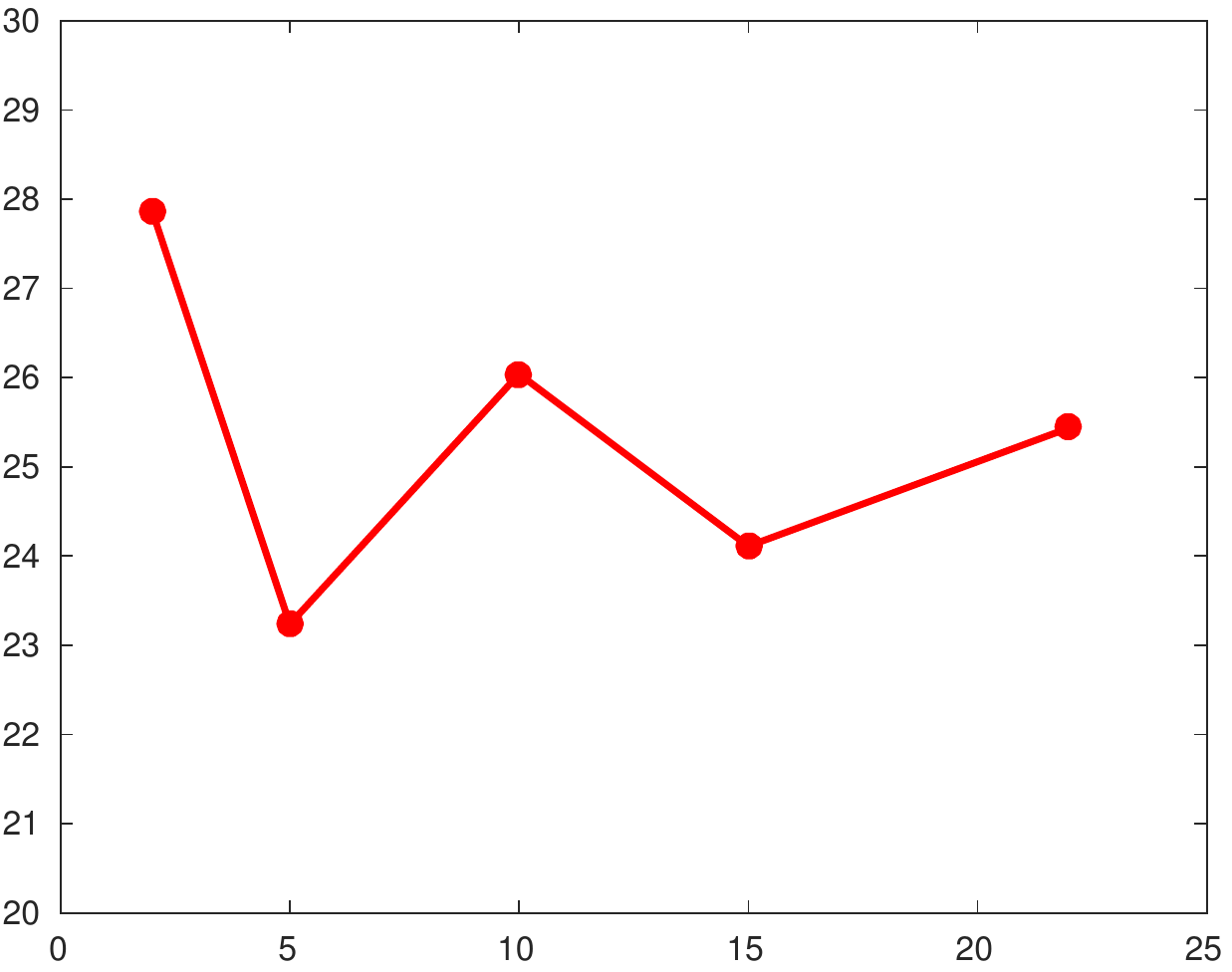}
\end{subfigure}%
\begin{subfigure}{\SUBWIDTH}
\includegraphics[width=0.95\linewidth]{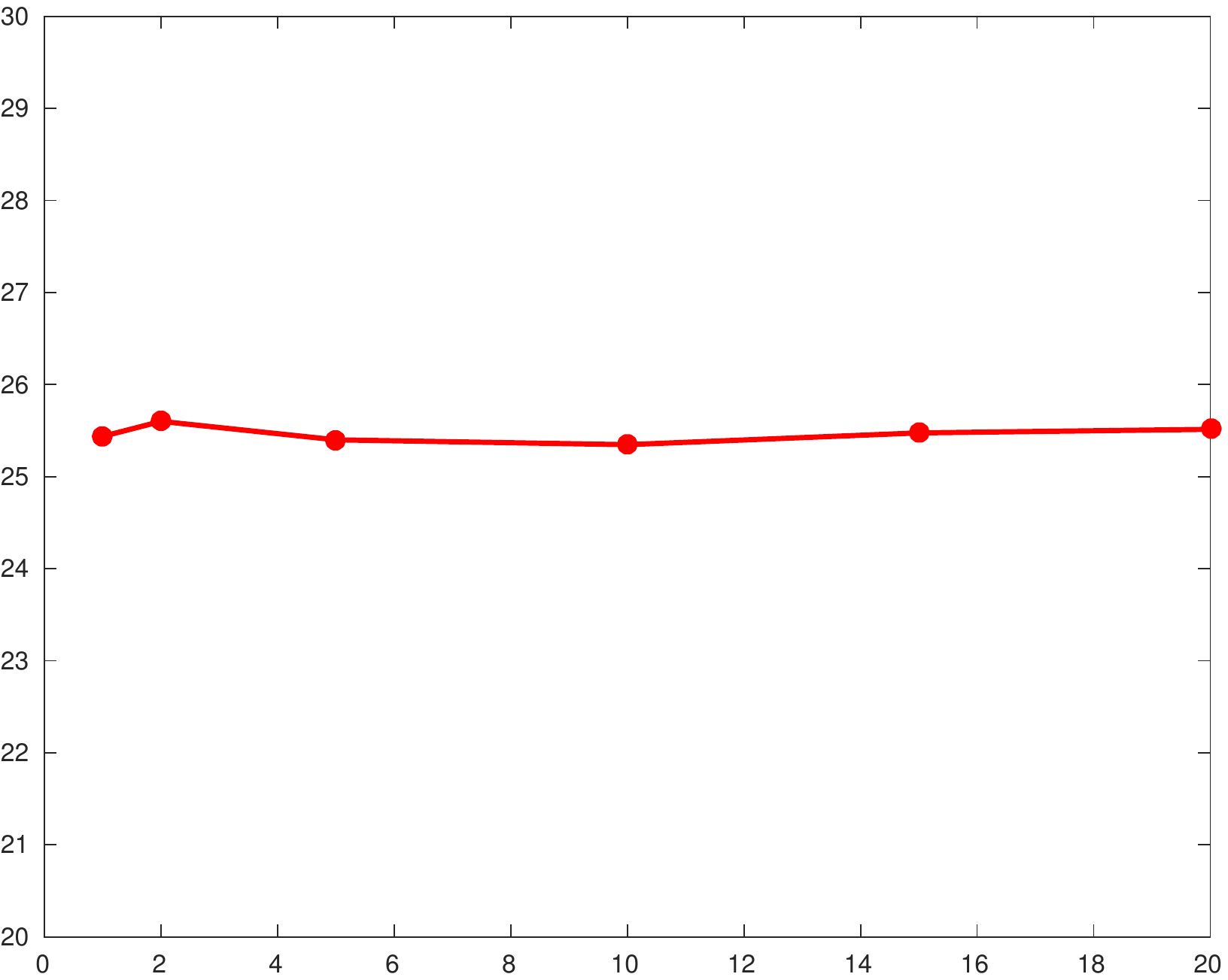}
\end{subfigure}%

\begin{subfigure}{\SUBWIDTH}
\includegraphics[width=0.95\linewidth]{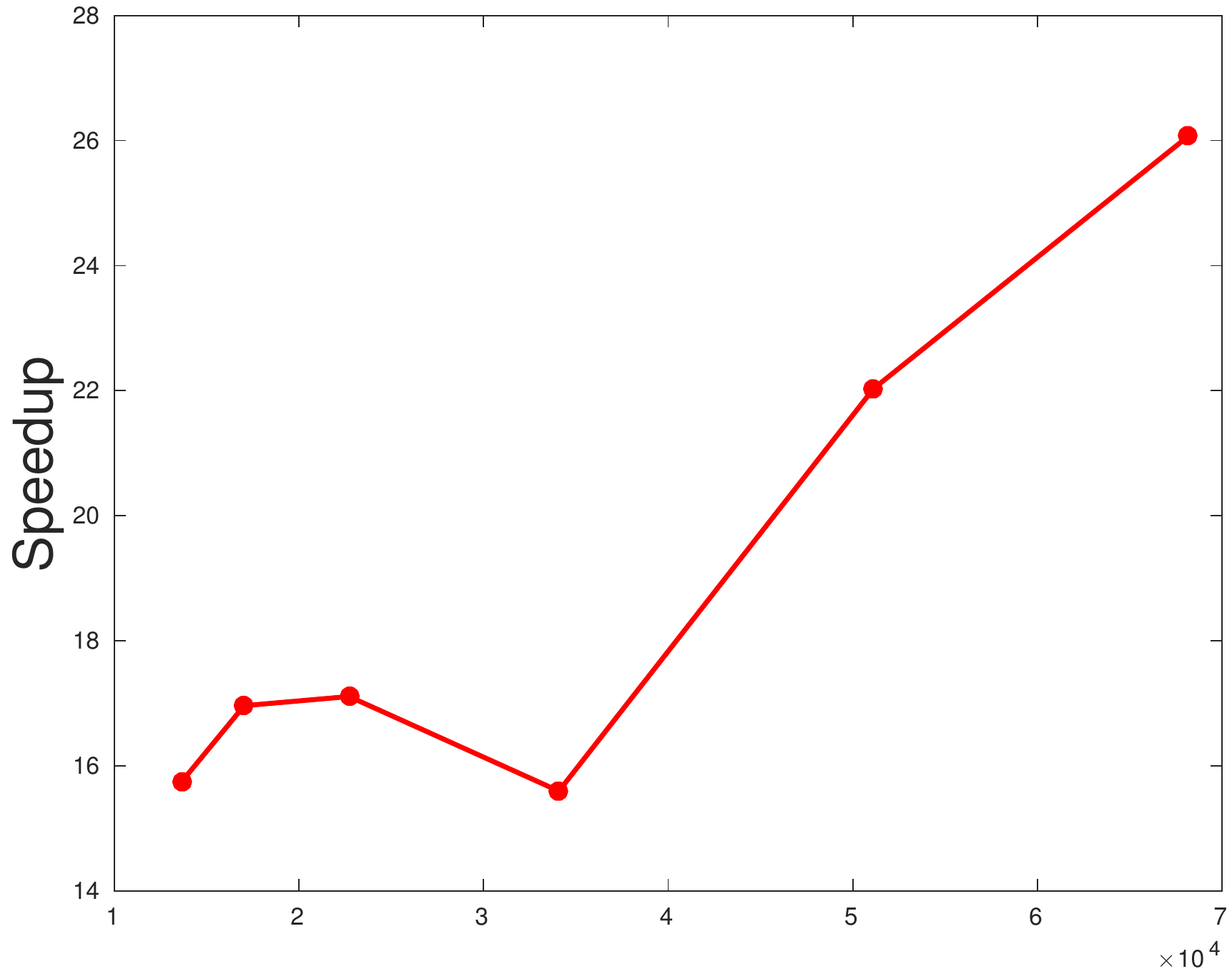}
\caption{Number of pixels}
\end{subfigure}%
\begin{subfigure}{\SUBWIDTH}
\includegraphics[width=0.95\linewidth]{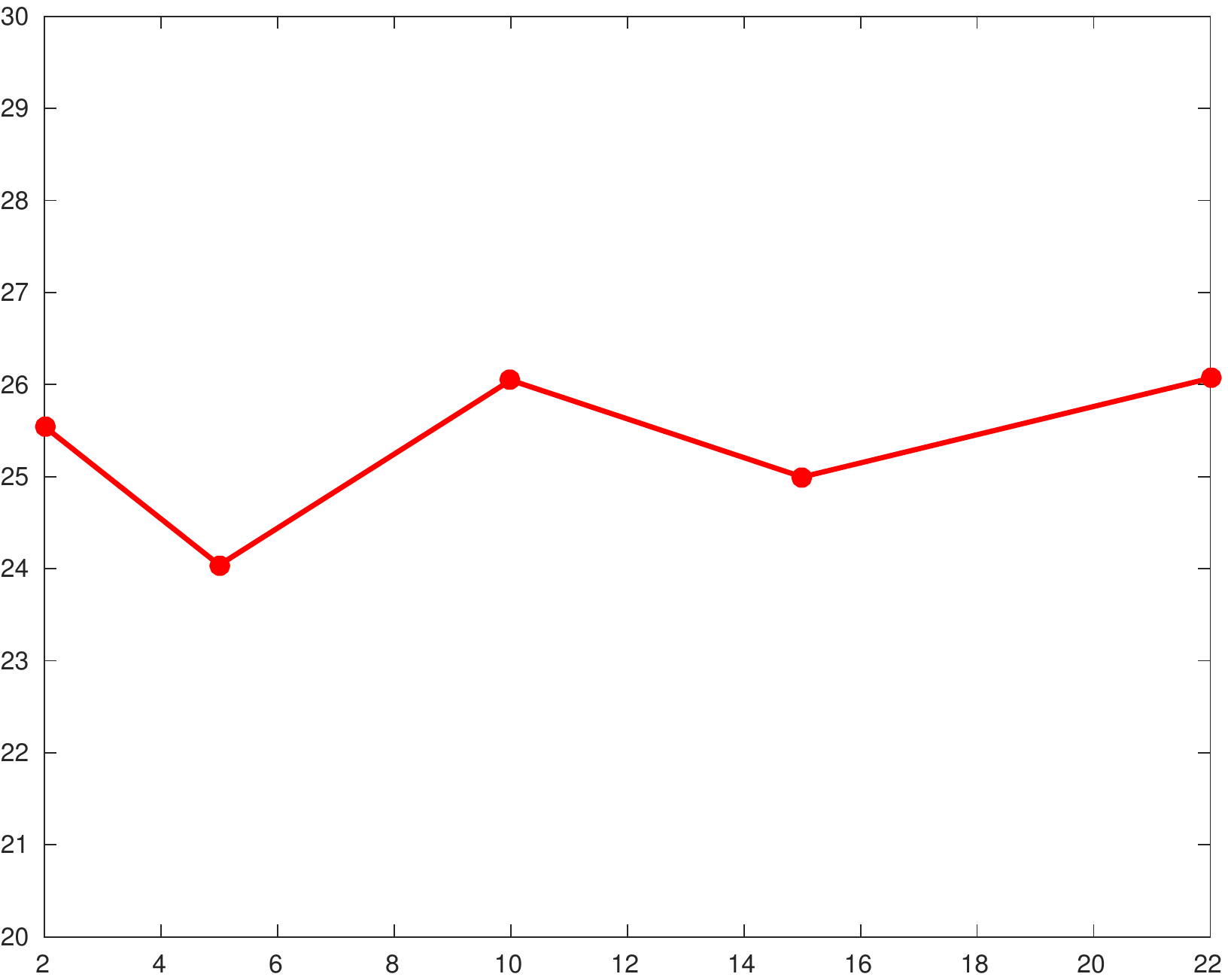}
\caption{Number of labels}
\end{subfigure}%
\begin{subfigure}{\SUBWIDTH}
\includegraphics[width=0.95\linewidth]{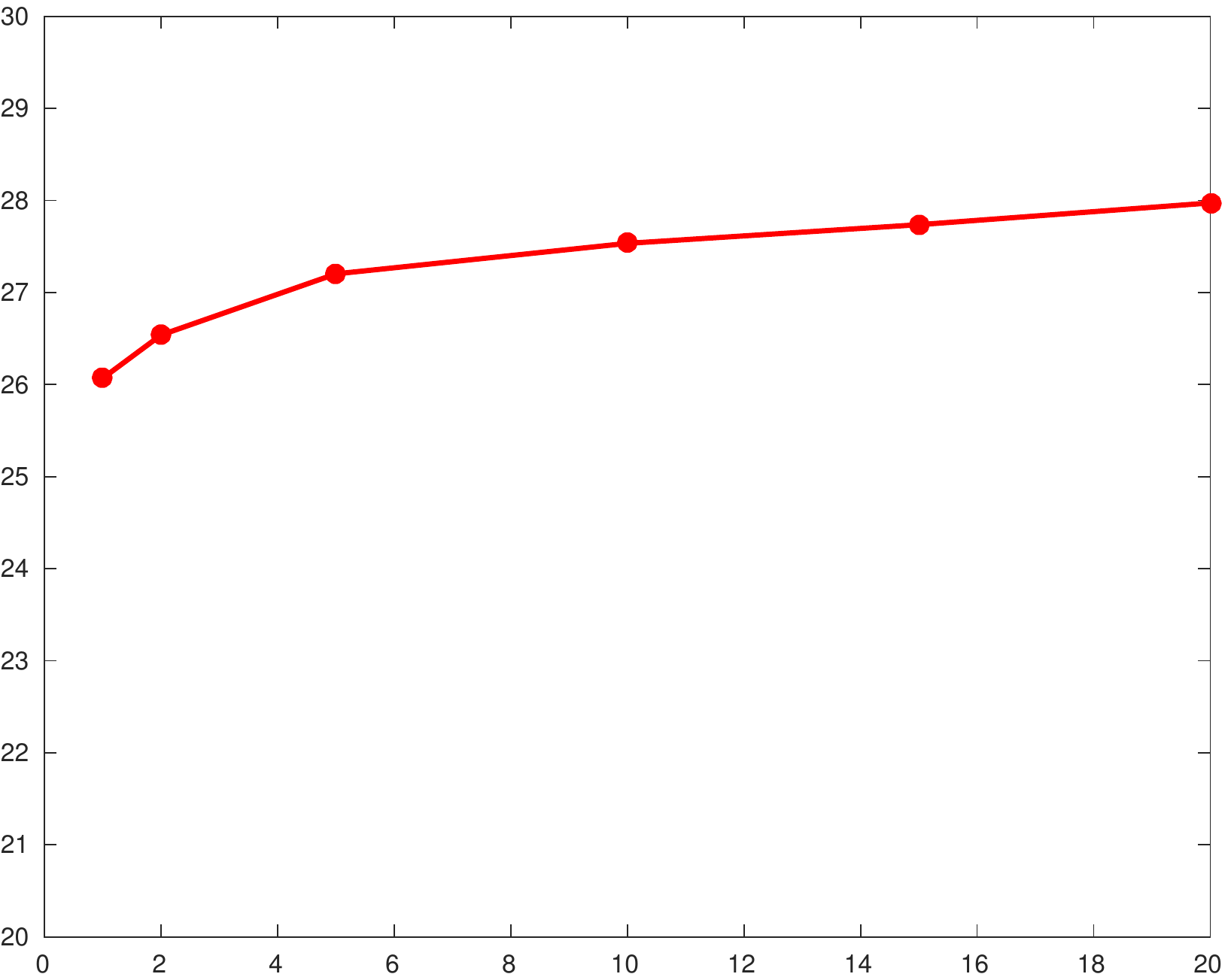}
\caption{Filter standard deviation}
\end{subfigure}
\vspace{-0.2cm}
\caption{\em Speedup of our modified filtering
algorithm over the divide-and-conquer strategy of~\cite{desmaison2016efficient}
on a MSRC image, \textbf{top:} spatial kernel (d = 2), \textbf{bottom: }
bilateral kernel (d = 5). Note that our speedup grows with the number of pixels
and is approximately constant with respect to the number of labels and
filter standard deviation.}
\label{fig:msrcph}
\end{center}
\vspace{-0.2cm}
\end{figure*}

{
\bibliographystyle{ieee}
\bibliography{lpdensecrf}

\begin{thebibliography}{10}\itemsep=-1pt

\bibitem{adams2010fast}
A.~Adams, J.~Baek, and M.~Davis.
\newblock Fast high-dimensional filtering using the permutohedral lattice.
\newblock In {\em Computer Graphics Forum}. Wiley Online Library, 2010.

\bibitem{bach2015duality}
F.~Bach.
\newblock Duality between subgradient and conditional gradient methods.
\newblock {\em SIAM Journal on Optimization}, 2015.

\bibitem{boyd2009convex}
S.~Boyd and L.~Vandenberghe.
\newblock {\em Convex optimization}.
\newblock Cambridge university press, 2009.

\bibitem{chekuri2004linear}
C.~Chekuri, S.~Khanna, J.~Naor, and L.~Zosin.
\newblock A linear programming formulation and approximation algorithms for the
  metric labeling problem.
\newblock {\em SIAM Journal on Discrete Mathematics}, 2004.

\bibitem{chen2014semantic}
L.~Chen, G.~Papandreou, I.~Kokkinos, K.~Murphy, and A.~Yuille.
\newblock Semantic image segmentation with deep convolutional nets and fully
  connected CRFs.
\newblock {\em ICLR}, 2014.

\bibitem{desmaison2016efficient}
A.~Desmaison, R.~Bunel, P.~Kohli, P.~Torr, and P.~Kumar.
\newblock Efficient continuous relaxations for dense CRF.
\newblock In {\em ECCV}. Springer, 2016.

\bibitem{everingham2010pascal}
M.~Everingham, L.~Van~Gool, C.~Williams, J.~Winn, and A.~Zisserman.
\newblock The pascal visual object classes (VOC) challenge.
\newblock {\em IJCV}, 2010.

\bibitem{frank1956algorithm}
M.~Frank and P.~Wolfe.
\newblock An algorithm for quadratic programming.
\newblock {\em Naval research logistics quarterly}, 1956.

\bibitem{kleinberg2002approximation}
J.~Kleinberg and E.~Tardos.
\newblock Approximation algorithms for classification problems with pairwise
  relationships: metric labeling and markov random fields.
\newblock {\em Journal of the ACM}, 2002.

\bibitem{kolmogorov2006convergent}
V.~Kolmogorov.
\newblock Convergent tree-reweighted message passing for energy minimization.
\newblock {\em PAMI}, 2006.

\bibitem{komodakis2011mrf}
N.~Komodakis, N.~Paragios, and G.~Tziritas.
\newblock MRF energy minimization and beyond via dual decomposition.
\newblock {\em PAMI}, 2011.

\bibitem{koltun2011efficient}
P.~Kr\"ahenb\"uhl and V.~Koltun.
\newblock Efficient inference in fully connected CRFs with gaussian edge
  potentials.
\newblock {\em NIPS}, 2011.

\bibitem{krishnan2015barrier}
R.~Krishnan, S.~Lacoste-Julien, and D.~Sontag.
\newblock Barrier Frank-Wolfe for marginal inference.
\newblock {\em NIPS}, 2015.

\bibitem{kumar2009map}
P.~Kumar and D.~Koller.
\newblock MAP estimation of semi-metric MRFs via hierarchical graph cuts.
\newblock In {\em UAI}. AUAI Press, 2009.

\bibitem{kumar2009analysis}
P.~Kumar, V.~Kolmogorov, and P.~Torr.
\newblock An analysis of convex relaxations for MAP estimation of discrete
  MRFs.
\newblock {\em JMLR}, 2009.

\bibitem{lacoste2012block}
S.~Lacoste-Julien, M.~Jaggi, M.~Schmidt, and P.~Pletscher.
\newblock Block-coordinate Frank-Wolfe optimization for structural SVMs.
\newblock {\em ICML}, 2012.

\bibitem{meshi2015smooth}
O.~Meshi, M.~Mahdavi, and A.~Schwing.
\newblock Smooth and strong: MAP inference with linear convergence.
\newblock In {\em NIPS}, 2015.

\bibitem{osokin2016minding}
A.~Osokin, J.~Alayrac, I.~Lukasewitz, P.~Dokania, and S.~Lacoste-Julien.
\newblock Minding the gaps for block Frank-Wolfe optimization of structured
  SVMs.
\newblock {\em ICML}, 2016.

\bibitem{parikh2014proximal}
N.~Parikh and S.~Boyd.
\newblock Proximal algorithms.
\newblock {\em Foundations and Trends in Optimization}, 2014.

\bibitem{ravikumar2008message}
P.~Ravikumar, A.~Agarwal, and M.~Wainwright.
\newblock Message-passing for graph-structured linear programs: proximal
  projections, convergence and rounding schemes.
\newblock In {\em ICML}. ACM, 2008.

\bibitem{schwing2015fully}
A.~Schwing and R.~Urtasun.
\newblock Fully connected deep structured networks.
\newblock {\em CoRR}, 2015.

\bibitem{shah2015multi}
N.~Shah, V.~Kolmogorov, and C.~Lampert.
\newblock A multi-plane block-coordinate Frank-Wolfe algorithm for training
  structural SVMs with a costly max-oracle.
\newblock In {\em CVPR}, 2015.

\bibitem{snoek2012practical}
J.~Snoek, H.~Larochelle, and R.~Adams.
\newblock Practical bayesian optimization of machine learning algorithms.
\newblock In {\em NIPS}, 2012.

\bibitem{wainwright2005map}
M.~Wainwright, T.~Jaakkola, and A.~Willsky.
\newblock MAP estimation via agreement on trees: message-passing and linear
  programming.
\newblock {\em Information Theory}, 2005.

\bibitem{wang2015efficient}
P.~Wang, C.~Shen, and A.~van~den Hengel.
\newblock Efficient SDP inference for fully-connected CRFs based on low-rank
  decomposition.
\newblock In {\em CVPR}, 2015.

\bibitem{xiao2014multiplicative}
X.~Xiao and D.~Chen.
\newblock Multiplicative iteration for nonnegative quadratic programming.
\newblock {\em Numerical Linear Algebra with Applications}, 2014.

\bibitem{zheng2015conditional}
S.~Zheng, S.~Jayasumana, B.~Romera-Paredes, V.~Vineet, Z.~Su, D.~Du, C.~Huang,
  and P.~Torr.
\newblock Conditional random fields as recurrent neural networks.
\newblock {\em ICCV}, 2015.

\end{thebibliography}
}

\end{document}